\date{}
\newtheorem{theorem}{Theorem}
\newcommand{\bigzero}{\mbox{\normalfont\Large\bfseries 0}}
\newcommand\x{\times}
\tikzstyle[nome]=[anchor=west]
\tikzset{
    block/.style = {draw, rectangle,
        minimum height=0.4cm,
        minimum width=0.5cm},
    input/.style = {coordinate,node distance=1cm},
    output/.style = {coordinate,node distance=4cm},
    arrow/.style={draw, -latex,node distance=2cm},
    pinstyle/.style = {pin edge={latex-, black,node distance=2cm}},
    sum/.style = {draw, circle,minimum size=1pt, node distance=1cm},}
\newcommand{\LQG}{\textsc{\small{LQG}}\xspace}
\newcommand{\Hes}{H_{es}}
\newcommand{\sigmaW}{\sigma_{w}}
\newcommand{\sigmaZ}{\sigma_{z}}
\newcommand{\sigmaWup}{\wb\sigma_{w}}
\newcommand{\sigmaZup}{\wb\sigma_{z}}
\newcommand{\sigmaWlow}{\underline\sigma_{w}}
\newcommand{\sigmaZlow}{\underline\sigma_{z}}
\newcommand{\strong}{\underline\alpha_{loss}}
\newcommand{\smooth}{\wb\alpha_{loss}}
\newcommand{\Markov}{\mathbf{G}}
\newcommand{\Mcontrol}{\mathbf{M}}
\newcommand{\Mcontrolset}{\mathcal{M}}
\newcommand{\proj}{proj}
\newcommand{\nature}{b}
\newcommand{\Tburn}{{T_{burn}}}
\newcommand{\Tbase}{{T_{base}}}
\newcommand{\Tmax}{{T_{\max}}}
\newcommand{\Sys}{\textsc{\small{SysId}}\xspace}
\newcommand{\Gol}{\mathcal{G}^{ol}}
\newcommand{\modelearn}{\mathbf{\mathcal{G}_{yu}}}
\newcommand{\modelearnf}{\mathbf{\widehat{\mathcal{G}}_{yu1}}}
\newcommand{\modelearni}{\mathbf{\widehat{\mathcal{G}}_{yui}}}
\newcommand{\RL}{\textsc{\small{RL}}\xspace}
\newcommand{\LDC}{\textsc{\small{LDC}}\xspace}
\newcommand{\DFC}{\textsc{\small{DFC}}\xspace}
\DeclareMathOperator*{\argmin}{arg\,min}
\newcommand{\sig}{\Sigma}
\newcommand{\reg}{\textsc{\small{Regret}}\xspace}
\newcommand{\OO}{\mathcal{O}}
\DeclareMathOperator{\Tr}{Tr}
\newcommand{\alg}{\textsc{\small{AdaptOn}}\xspace}
\newcommand{\T}{\mathcal T}
\newcommand{\R}{\mathbb{R}}
\newcommand{\boldR}{\mathbb R}
\newcommand{\wt}{\widetilde}
\newcommand{\wh}{\widehat}
\newcommand{\wb}{\overline}
\newtheorem{lemma}{Lemma}[section]
\newtheorem{theorem*}{Theorem}[section]
\newtheorem{corollary}{Corollary}[section]
\newtheorem*{condition}{Persistence of Excitation of $\Mcontrol \in \Mcontrolset$ on System $\Theta$}
\newtheorem{definition}{Definition}[section]
\title{Logarithmic Regret Bound in Partially Observable Linear Dynamical Systems}
\author{%
  Sahin Lale$^1$, Kamyar Azizzadenesheli$^2$, Babak Hassibi$^1$, Anima Anandkumar$^2$\\
  $^1$~Department of Electrical Engineering\\
  $^2$~Department of Computing and Mathematical Sciences\\
  California Institute of Technology, Pasadena\\
  \texttt{\{alale,kazizzad,hassibi,anima\}@caltech.edu}
}
\begin{document}

\maketitle

\begin{abstract}%
We study the problem of system identification and adaptive control in partially observable linear dynamical systems. Adaptive and closed-loop system identification is a challenging problem due to correlations introduced in data collection. In this paper, we present the first model estimation method with finite-time guarantees in both open and closed-loop system identification. Deploying this estimation method, we propose adaptive control online learning (\alg), an efficient reinforcement learning algorithm that adaptively learns the system dynamics and continuously updates its controller through online learning steps. \alg estimates the model dynamics by occasionally solving a linear regression problem through interactions with the environment. Using policy re-parameterization and the estimated model, \alg constructs counterfactual loss functions to be used for updating the controller through online gradient descent. Over time, \alg improves its model estimates and obtains more accurate gradient updates to improve the controller. We show that  \alg achieves a regret upper bound of $\text{polylog}\left(T\right)$, after $T$ time steps of agent-environment interaction. To the best of our knowledge, \alg is the first algorithm that achieves $\text{polylog}\left(T\right)$ regret in adaptive control of \emph{unknown} partially observable linear dynamical systems which includes linear quadratic Gaussian (\LQG) control. 
\end{abstract}


\section{Introduction}\label{Introduction}

Reinforcement learning (\RL) in unknown partially observable linear dynamical systems with the goal of minimizing a cumulative cost is one of the central problems in adaptive control~\citep{bertsekas1995dynamic}. In this setting, a desirable \RL agent needs to efficiently \textit{explore} the environment to learn the system dynamics, and \textit{exploit} the gathered experiences to minimize overall cost~\citep{lavalle2006planning}. However, since the underlying states of the systems are not fully observable, learning the system dynamics with finite time guarantees brings substantial challenges, making it a long-lasting problem in adaptive control. In particular, when the latent states of a system are not fully observable, future observations are correlated with the past inputs and observations through the latent states. These correlations are even magnified when closed-loop controllers, those that naturally use past experiences to come up with control inputs, are deployed. Therefore, more sophisticated estimation methods that consider these complicated and unknown correlations are required for learning the dynamics.

In recent years, a series of works have studied this learning problem and presented a range of novel methods with finite-sample learning guarantees. These studies propose to employ i.i.d. Gaussian excitation as the control input, collect system outputs and estimate the model parameters using the data collected. The use of i.i.d. Gaussian noise as the open-loop control input (not using past experiences) mitigates the correlation in the inputs and the output observations. For stable systems, these methods provide efficient ways to learn the model dynamics with confidence bounds of $\Tilde{\OO}(1/\sqrt{T})$, after $T$ times step of agent-environment interaction~\citep{oymak2018non,sarkar2019finite,tsiamis2019finite, simchowitz2020improper}. Here $\Tilde{\OO}(\cdot)$ denotes the order up to logarithmic factors. Deploying i.i.d. Gaussian noise for a long period of time to estimate the model parameters has been the common practice in adaptive control since incorporating closed-loop controller introduces significant challenges to learn the model dynamics \citep{qin2006overview}.


These estimation techniques later have been deployed to propose explore-then-commit based \RL algorithms to minimize regret, \textit{i.e.}, how much more cost an agent suffers compared to the cost of a baseline policy~\citep{lai1982least}.
These algorithms deploy i.i.d. Gaussian noise as the control input to learn the model parameters in the \textit{explore} phase and then exploit these estimates during the \textit{commit} phase to minimize regret. Among these works, \citet{lale2020regret} and \citet{simchowitz2020improper} respectively propose to use optimism~\citep{auer2002using} and online convex optimization~\citep{anava2015online}
during the commit phase. These works attain regret of $\Tilde{\mathcal{O}}(T^{2/3})$ in the case of convex cost functions. Moreover, in the case of strongly convex cost functions, \citet{mania2019certainty,simchowitz2020improper} show that exploiting the strong convexity allows to guarantee regret of $\Tilde{\mathcal{O}}(\sqrt{T})$. These methods heavily rely on the lack of correlation achieved by using i.i.d. Gaussian noise as the open-loop control input to estimate the model. Therefore, they do not generalize to the adaptive settings, where the past observations are used to continuously improve both model estimates and the controllers. These challenges pose the following two open problems:
\begin{center}
    ``\textit{Can we estimate the model parameters in closed-loop setting with finite-time guarantees?}''

``\textit{Can we utilize such an estimation method, and propose an RL algorithm to significantly improve regret in partially observable linear dynamical systems?}''

\end{center}

In this paper, we give \textbf{affirmative} answers to both of these questions:
\begin{itemize}[wide, labelindent=0pt]
    \item \textbf{Novel closed-loop estimation method:} We introduce the first system identification method that allows to estimate the model parameters with finite-time guarantees in both open and closed-loop setting. We exploit the classical predictive form representation of the system that goes back to \citet{kalman1960new} and reformulate each output as a linear function of previous control inputs and outputs with an additive i.i.d. Gaussian noise. This reformulation allows to address the limitations of the prior open-loop estimation methods in handling the correlations in inputs and outputs. We state a novel least squares problem to recover the model parameters. We show that when the controllers persistently excite the system, the parameter estimation error is $\Tilde{\OO}(1/\sqrt{T})$ after $T$ samples. Our method allows updating the model estimates while controlling the system with an adaptive controller.

    
    \item \textbf{Novel \RL algorithm for partially observable linear dynamical systems:} Leveraging our novel estimation method, we propose \textbf{adapt}ive control \textbf{on}line learning algorithm (\alg) that \emph{adaptively} learns the model dynamics and efficiently uses the model estimates to continuously optimize the controller and reduce the cumulative cost. \alg operates in growing size epochs and in the beginning of each epoch estimates the model parameters using our novel model estimation method. During each epoch, following the online learning procedure introduced by \citet{simchowitz2020improper}, \alg utilizes a convex policy reparameterization of linear controllers and the estimated model dynamics to construct counterfactual loss functions. \alg then deploys online gradient descent on these loss functions to gradually optimize the controller. We show that as the model estimates improve, the gradient updates become more accurate, resulting in improved controllers. 
    
    We show that \alg attains a regret upper bound of $\text{polylog}(T)$ after $T$ time steps of agent-environment interaction, when the cost functions are strongly convex. To the best of our knowledge, this is the first logarithmic regret bound for partially observable linear dynamical systems with unknown dynamics which includes the canonical \LQG setting. The presented regret bound improves $\Tilde{\OO}(\sqrt{T})$ regret of \citet{simchowitz2020improper,mania2019certainty} in stochastic setting with the help of novel estimation method which allows updating model estimates during control~(Table \ref{table:1}).
\end{itemize}

\begin{table}
\centering
\caption{Comparison with prior works in partially observable linear dynamical systems.}
 \begin{tabular}{l l l  l l } 
 \toprule
 \textbf{Work} & \textbf{Regret} &  \textbf{Cost} &
 \textbf{Identification} & \textbf{Noise}  \\ 
 \midrule
 \citet{lale2020regret} & $T^{2/3}$ & Convex & Open-Loop & Stochastic  \\ 
  \citet{simchowitz2020improper} & $T^{2/3}$ & Convex & Open-Loop & Adversarial \\
 \citet{mania2019certainty} & $\sqrt{T}$ & Strongly Convex & Open-Loop & Stochastic \\
  \citet{simchowitz2020improper} & $\sqrt{T}$ & Strongly Convex & Open-Loop & Semi-adversarial  \\
 \textbf{This work} & polylog$(T)$ & Strongly Convex & Closed-Loop & Stochastic   \\
 \bottomrule
\end{tabular}
\label{table:1}
\end{table}

\section{Preliminaries}\label{prelim}
We denote the Euclidean norm of a vector $x$ as $\|x\|_2$. For a given matrix $A$, $\| A \|_2$ is its spectral norm, $\| A\|_F$ is its Frobenius norm, $A^\top$ is its transpose, $A^{\dagger}$ is its Moore-Penrose inverse, and $\Tr(A)$ is the trace of $A$. $\rho(A)$ denotes the spectral radius of $A$, \textit{i.e.}, the largest absolute value of its eigenvalues. The j-th singular value of a rank-$n$ matrix $A$ is denoted by $\sigma_j(A)$, where $\sigma_{\max}(A )\!\coloneqq \!\!\sigma_1(A) \!\geq\! \sigma_2(A) \!\geq\! \ldots \!\geq\! \sigma_n(A) \!\coloneqq\! \sigma_{\min}(A) \!>\! 0$. $I$ is the identity matrix with appropriate dimensions. $\mathcal{N}(\mu, \Sigma)$ denotes a multivariate normal distribution with mean vector $\mu$ and covariance matrix $\Sigma$. 

\textbf{State space form:} Consider an unknown discrete-time linear time-invariant system $\Theta$,
\begin{align}
    x_{t+1}& = A x_t + B u_t + w_t,  \qquad 
    y_t = C x_t + z_t \label{output},
\end{align}
where $x_t \in \mathbb{R}^{n}$ is the (latent) state of the system, $u_t \in \mathbb{R}^{p}$ is the control input, and the observation $y_t \in \mathbb{R}^{m}$ is the output of the system. At each time step $t$, the system is at state $x_t$ and the agent observes $y_t$, \textit{i.e.}, an imperfect state information. Then, the agent applies a control input $u_t$, observes the loss function $\ell_t$, pays the cost of $c_t=\ell_t(y_{t},u_{t})$, and the system evolves to a new $x_{t+1}$ at time step $t+1$. Let $\left(\mathcal{F}_{t} ; t \geq 0 \right)$ be the corresponding filtration. For any $t$, conditioned on $\mathcal{F}_{t-1}$, $w_{t}$ and $z_{t}$ are $\mathcal{N}(0,\sigmaW^{2}I)$ and $\mathcal{N}(0,\sigmaZ^{2}I)$ respectively. In this paper, in contrast to the standard assumptions in \LQG literature that the algorithm is given the knowledge of both $\sigmaW^{2}$ and $\sigmaZ^{2}$, we only assume the knowledge of their upper and lower bounds, i.e., $ \sigmaWup^{2}, \sigmaWlow^{2}, \sigmaZup^{2}$, and $\sigmaZlow^{2}$, such that, $0< \sigmaWlow^{2}\leq \sigmaW^{2}\leq \sigmaWup^{2}$ and $0< \sigmaZlow^{2}\leq \sigmaZ^{2}\leq\sigmaZup^{2}$, for some finite $\sigmaWup^{2},\sigmaZup^{2}$. For the system $\Theta$, let $\Sigma$ be the unique positive semidefinite solution to the following DARE (Discrete Algebraic Riccati Equation),
\begin{equation}\label{DARE}
    \sig = A \sig   A^\top - A \sig  C^\top \left( C \sig  C^\top + \sigma_z^2 I \right)^{-1} C \sig  A^\top + \sigma_w^2 I.
\end{equation} 
$\Sigma$ can be interpreted as the steady state error covariance matrix of state estimation under $\Theta$.

\textbf{Predictor form:} An equivalent and common representation of the system $\Theta$ in (\ref{output}), is its predictor form representation introduced by \citet{kalman1960new} and characterized as,
\begin{align}
    \hat{x}_{t+1}&=\Bar{A} \hat{x}_t+B u_t+F y_t, \qquad
    y_{t}=C \hat{x}_t+e_t, \label{predictor}
\end{align}
where $F\!=\! A \Sigma  C^\top \!\! \left( C \Sigma  C^\top \!\!\!+\! \sigma_z^2 I \right)^{-1}\!$ is the Kalman filter gain in the observer form, $e_t$ is the zero mean white innovation process and $\Bar{A} = A - F C$. In this equivalent representation of system, the state $\hat{x}_t$ can be seen as the estimate of the state in (\ref{output}). 
\begin{definition}[Controllability \& Observability]\label{c_o_def}
A system is $(A,B)$ controllable if the controllability matrix $[B \enskip AB \enskip A^2B \ldots A^{n-1}B]$ has full row rank. As the dual, a system is $(A,C)$ observable if the observability matrix $[C^\top \enskip (CA)^\top \enskip (CA^2)^\top \ldots (CA^{n-1})^\top]^\top$ has full column rank.
\end{definition}
We assume that the unknown system $\Theta$ is $(A, B)$ controllable, $(A, C)$ observable and $(A, F)$ controllable. This provides exponential convergence of the Kalman filter to the steady-state. Thus, without loss of generality, for the simplicity of analysis, we assume that $x_0 \sim \mathcal{N}(0, \Sigma)$, \textit{i.e.}, the system starts at the steady-state. In the steady state, $e_t \sim \mathcal{N}\left(0,C \sig  C^\top + \sigma_z^2 I \right)$. 

We assume that the unknown system $\Theta$ is order $n$ and $\rho(A) < 1$.
Let $\Phi(A) = \sup _{\tau \geq 0} \left\|A^{\tau}\right\|/\rho(A)^{\tau}$. In the following we consider the standard setting where $\Phi(A)$ is finite. The above mentioned construction is the general setting for the majority of literature on both estimation and regret minimization~\citep{oymak2018non, sarkar2019finite,tsiamis2019finite,simchowitz2020improper,lale2020regret, mania2019certainty} for which the main challenge is the estimation and the controller design. 


%

\section{System Identification}\label{dynamics}
In this section, we first explain the estimation methods that use state-space representation of the system (\ref{output}) to recover the model parameters and discuss the reason why they cannot provide reliable estimates in closed-loop estimation problems. Then we present our novel estimation method which provides reliable estimates in both open and closed-loop estimation. 

\textbf{Challenges in using the state-space representation for system identification:} Using the state-space representation in (\ref{output}), for any positive integer $H$, one can rewrite the output at time $t$ as follows,
\begin{equation} \label{markov_rollout}
    y_t = \sum\nolimits_{i=1}^{H} \!C A^{i-1} B u_{t-i} + CA^{H}x_{t-H} + z_t + \sum\nolimits_{i=0}^{H-1} CA^{i}w_{t-i-1}.
\end{equation}
\begin{definition} [Markov Parameters] \label{def:markov}
The set of matrices that maps the inputs to the output in (\ref{markov_rollout}) is called Markov parameters of the system $\Theta$. They are the first $H$ parameters of the Markov operator, $\Markov \!=\!\lbrace G^{[i]}\rbrace_{i\geq 0}$ with $G^{[0]} \!=\! 0_{m \times p}$, and $\forall i\!>\!0$,  $G^{[i]}\!=\!CA^{i-1}B$ that uniquely describes the system behavior. Moreover, $\Markov(H) \!=\![G^{[0]} ~\! G^{[1]}\! ~\!\ldots\!~ G^{[H-1]}] \!\in\! \R^{m\! \times \! Hp}$ denotes the $H$-length Markov parameters matrix. 
\end{definition}
For $\kappa_\Markov \! \geq\! 1$, let the Markov operator of $\Theta$ be bounded, \textit{i.e.}, $\sum_{i\geq 0}\|{G}^{[i]}\|\!\leq\!\kappa_\Markov$. Due to stability of $A$, the second term in (\ref{markov_rollout}) decays exponentially and for large enough $H$ it becomes negligible. Therefore, using Definition \ref{def:markov}, we obtain the following for the output at time $t$,
\begin{equation} \label{markov_rollout_approx}
    y_t \approx \sum\nolimits_{i=0}^{H} \!G^{[i]} u_{t-i} + z_t + \sum\nolimits_{i=0}^{H-1} CA^{i}w_{t-i-1}.
\end{equation}
From this formulation, a least squares estimation problem can be formulated using outputs as the dependent variable and the concatenation of $H$ input sequences $\bar{u}_t = [u_t,\ldots,u_{t-H}]$ as the regressor to recover the Markov parameters of the system:
\begin{equation}\label{markov_least}
    \wh{\Markov}(H) = \argmin_{X} \sum\nolimits_{t=H}^T \|y_t - X \bar{u}_t \|^2_2.
\end{equation}
Prior finite-time system identification algorithms propose to use i.i.d. zero-mean Gaussian noise for the input, to make sure that the two noise terms in (\ref{markov_rollout_approx}) are not correlated with the inputs~\textit{i.e.} open-loop estimation. This lack of correlation allows them to 
solve (\ref{markov_least}), estimate the Markov parameters and develop finite-time estimation error guarantees~\citep{oymak2018non,sarkar2019finite,lale2020regret,simchowitz2019learning}. From Markov parameter estimates, they recover the system parameters $(A,B,C)$ up to similarity transformation using singular value decomposition based methods like Ho-Kalman algorithm \citep{ho1966effective}.

However, when a controller designs the inputs based on the history of inputs and observations, the inputs become highly correlated with the past process noise sequences, $\{w_i\}_{i=0}^{t-1}$. This correlation prevents consistent and reliable estimation of Markov parameters using (\ref{markov_least}). Therefore, these prior open-loop estimation methods do not generalize to the systems that adaptive controllers generates the inputs for estimation, \textit{i.e.}, closed-loop estimation. In order to overcome this issue, we exploit the predictor form of the system $\Theta$ and design a novel system identification method that provides consistent and reliable estimates both in closed and open-loop estimation problems.

\textbf{Novel estimation method for partially observable linear dynamical systems:} Using the predictor form representation (\ref{predictor}), for a positive integer $\Hes$, the output at time $t$ can be rewritten as follows,
\begin{equation} \label{predictor_rollout}
    y_t = \sum\nolimits_{k=0}^{\Hes-1} C\Bar{A}^k \left(Fy_{t-k-1} \!+\! Bu_{t-k-1} \right) + e_t + C\Bar{A}^{\Hes} x_{t-\Hes}.
\end{equation}
Using the open or closed-loop generated input-output sequences up to time $\tau$, $\{y_t,u_t\}_{t=1}^\tau$, we construct subsequences of $\Hes$ input-output pairs for $\Hes \! \leq \!t \!\leq \tau$,
\begin{equation*}
    \phi_t \!=\! \!\left[ y_{t-1}^\top \ldots y_{t-\Hes}^\top u_{t-1}^\top \ldots u_{t-\Hes}^\top \right]^\top \!\!\!\!\in\! \mathbb{R}^{(m+p)\Hes}.
\end{equation*}
The output of the system, $y_t$ can be represented using $\phi_t$ as:
\begin{equation}\label{arx_rollout}
    y_{t} \!=\! \modelearn \phi_{t} + e_{t} + C \Bar{A}^{\Hes} x_{t-\Hes} \!\enskip \text{for} \enskip \modelearn \!=\! \left[CF \enskip\! C\Bar{A}F \!\enskip\!  \ldots\! \enskip\! C \Bar{A}^{\Hes\!-\!1}\!F \enskip\! CB \enskip\! C\Bar{A}B \!\enskip\! \ldots\! \enskip\! C\Bar{A}^{\Hes\!-\!1}\! B \right].
\end{equation}
Notice that $\Bar{A}$ is stable due to $(A,F)$-controllability of $\Theta$~\citep{kailath2000linear}. Therefore, with a similar argument used in (\ref{markov_rollout}), for $\Hes = O(\log(T))$, the last term in (\ref{predictor_rollout}) is negligible. This yields into a linear model of the dependent variable $y_t$ and the regressor $\phi_t$ with additive i.i.d. Gaussian noise $e_t$:
\begin{equation}
    y_t \approx \modelearn \phi_{t} + e_{t}.
\end{equation} 
For this model, we achieve consistent and reliable estimates by solving the following regularized least squares problem,
\begin{equation}\label{new_lse}
    \wh{\mathcal{G}}_{\mathbf{y}} = \argmin_{X} \lambda \|X\|_F^2 +\! \sum\nolimits_{t=\Hes}^\tau \|y_t - X \phi_t\|^2_2.
\end{equation}
This problem does not require any specification on how the inputs are generated and therefore can be deployed in both open and closed-loop estimation problems. Exploiting the specific structure of $\modelearn$ in (\ref{arx_rollout}), we design a procedure named \Sys, which recovers model parameters from $\wh{\mathcal{G}}_{\mathbf{y}}$ (see Appendix \ref{apx:Identification} for details). To give an overview, \Sys forms two Hankel matrices from the blocks of $\wh{\mathcal{G}}_{\mathbf{y}}$ which corresponds to the product of observability and controllability matrices. \Sys applies a variant of Ho-Kalman procedure and similar to this classical algorithm, it uses singular value decomposition of these Hankel matrices to recover model parameter estimates $(\hat{A},\hat{B},\hat{C})$ and finally constructs $\wh{\Markov}(H)$. For the persistently exciting inputs, the following gives the first finite-time system identification guarantee in both open and closed-loop estimation problems (see Appendix \ref{apx:generalsysidproof} for the proof).
\begin{theorem}[System Identification]\label{theo:sysid} If the inputs are persistently exciting, then for $T$ input-output pairs, as long as $T$ is large enough, solving the least squares problem in (\ref{new_lse}) and deploying \Sys procedure provides model parameter estimates $(\hat{A},\hat{B},\hat{C},\wh{\Markov}(H))$ that with high probability,
\begin{equation}
    \|\hat{A} - A \|, \|\hat{B} - B \|, \|\hat{C} - C \|,  \|\wh{\Markov}(H) - \Markov(H) \| = \Tilde{\OO}(1/\sqrt{T})
\end{equation}
\end{theorem}


\section{Controller and Regret }\label{Policyclass}
In this section we describe the class of linear dynamic controllers (\LDC) and show how a convex policy reparameterization can provide accurate approximations of the \LDC controllers. Then we provide the regret definition, \textit{i.e.} performance metric, of the adaptive control task.

\textbf{Linear dynamic controller (\LDC):}
An \LDC policy $\pi$ is a linear controller with an internal state dynamics of 
\begin{equation}\label{eq:LDC}
    s^\pi_{t+1} = A_\pi s_t^\pi + B_\pi y_t, \qquad u^\pi_{t} = C_\pi s_t^\pi + D_\pi y_t,
\end{equation}
where $s^\pi_t\in \boldR^s$ is the state of the controller, $y_t$ is the input to the controller, \textit{i.e.} observation from the system that controller is designing a policy for, and $u^\pi_t$ is the output of the controller. \LDC controllers provide a large class of controller. For instance, when the problem is canonical \LQG setting, the optimal policy is known to be a \LDC policy~\citep{bertsekas1995dynamic}. 

\textbf{Nature's output:} Using (\ref{markov_rollout}), we can further decompose the generative components of $y_t$ to obtain,
\begin{align*}
  y_t = z_t + CA^tx_0 + \sum\nolimits_{i=0}^{t-1} CA^{t-i-1}w_{i} + \sum\nolimits_{i=0}^{t} \!G^{[i]} u_{t-i}  
\end{align*}
Notice that first three components generating $y_t$ are derived from the uncontrollable noise processes in the system, while the last one is a linear combination of control inputs. The first three components are known as Nature's $y$, \textit{i.e.}, Nature's output~\citep{simchowitz2020improper,youla1976modern}, of the system,
\begin{align}\label{eq:nature}
\nature_t(\Markov) \coloneqq y_t - \sum\nolimits_{i=0}^{t-1} G^{[i]} u_{t-i} = z_t+CA^tx_0+\sum\nolimits_{i=0}^{t-1}CA^{t-i-1}w_i.
\end{align}
The ability to define Nature's $y$ is a unique characteristics of linear dynamical systems. At any time step $t$, after following a sequence of control inputs $\{ u_{i}\}_{i=0}^t$, and observing $y_t$, we can compute $\nature_t(\Markov)$ using (\ref{eq:nature}). This quantity allows for counterfactual reasoning about the outcome of the system. Particularly, having access to $\lbrace\nature_{\tau-t}(\Markov)\rbrace_{t\geq 0}$, we can reason what the outputs $y_{\tau-t}'$ of the system would have been, if the agent, instead, had taken other sequence of control inputs $\{ u_{i}'\}_{i=0}^{\tau-t}$, \textit{i.e.},
\begin{align*}
y_{\tau-t}' = \nature_{\tau-t}(\Markov) + \sum\nolimits_{i=0}^{\tau-t-1} G^{[i]} u_{\tau-t-i}'.
\end{align*}
This property indicates that we can use $\lbrace\nature_{\tau-t}(\Markov)\rbrace_{t\geq 0}$ to evaluate the quality of any other potential input sequences, and build a desirable controller, as elaborated in the following.



\textbf{Disturbance feedback control (\DFC):}  In this work, we adopt a convex policy reparametrization called \DFC  introduced by \citet{simchowitz2020improper}. A \DFC policy of length $H'$ is defined as a set of parameters,  $\Mcontrol(H'):=\lbrace M^{[i]} \rbrace_{i=0}^{H'-1}$, prescribing the control input of 
\begin{equation}
    u^\Mcontrol_t = \sum\nolimits_{i=0}^{H'-1}M^{[i]}\nature_{t-i}(\Markov).
\end{equation}
for Nature's $y$, $\lbrace\nature_{t-i}(\Markov)\rbrace^{H'-1}_{i= 0}$. The \DFC policy construction is in parallel with the classical Youla parametrization \citep{youla1976modern} which states that any linear controller can be prescribed as acting on past noise sequences. Thus, \DFC policies can be regarded as truncated approximations to \LDC{}s. More formally, any stabilizing \LDC policy can be well-approximated as a \DFC policy (see Appendix \ref{apx:lem:deviation}).

Define the convex compact sets of \DFC{}s, $\Mcontrolset_\psi$ and $\Mcontrolset$ such that all the controllers in these sets persistently excite the system $\Theta$. The precise definition of persistence of excitation condition is given in Appendix \ref{subsec:persistence}. The controllers $\Mcontrol(H_0') \in \Mcontrolset_\psi$ are bounded \textit{i.e.}, $\sum\nolimits_{i\geq 0}^{H_0'-1}\|M^{[i]}\| \leq \kappa_\psi$ and $\Mcontrolset$ is an $r$-expansion of $\Mcontrolset_\psi$, \textit{i.e.}, $\Mcontrolset = \lbrace \Mcontrol(H')=\Mcontrol(H_0')+ \Delta : \Mcontrol(H_0') \in \Mcontrolset_\psi, \sum\nolimits_{i\geq 0}^{H'-1}\|\Delta^{[i]}\| \leq r \kappa_\psi \rbrace$ where $H'_0 = \lfloor \frac{H'}{2} \rfloor - H$. 
%
%
Thus, all controllers $\Mcontrol(H') \in \Mcontrolset$ are also bounded $\sum\nolimits_{i\geq 0}^{H'-1}\|M^{[i]}\| \leq \kappa_\Mcontrolset$ where $\kappa_\Mcontrolset = \kappa_\psi (1+r)$. Throughout the interaction with the system, the agent has access to $\Mcontrolset$. 

\textbf{Loss function:} The loss function at time $t$, $\ell_t$, is smooth and strongly convex for all $t$, i.e., $0\prec\strong I\preceq\nabla^2\ell_t(\cdot,\cdot)\preceq\smooth I$ for a finite constant $\smooth$. Note that the standard quadratic regulatory costs of $\ell_t(y_{t},u_{t})=y_t^\top Q_ty_t+u_t^\top R_tu_{t}^\top$ with bounded positive definite matrices $Q_t$ and $R_t$ are special cases of the mentioned setting. For all $t$, the unknown lost function $c_t = \ell_t(\cdot, \cdot)$ is non-negative strongly convex and associated with a parameter $L$, such that for any $R$ with $\|u\|,\|u'\|, \|y\|,\|y'\| \leq R$, we have,

\begin{align}
\label{asm:lipschitzloss}|\ell_t(y,u)-\ell_t(y',u') |\leq LR(\|y-y'\|+\|u-u'\|) \enskip \text{ and } \enskip |\ell_t(y,u)|\leq LR^2.\end{align}



\textbf{Regret definition:} We evaluate the agent's performance by its regret with respect to $\Mcontrol_\star$, the optimal, in hindsight, \DFC policy in the given set $\Mcontrolset_\psi$, \textit{i.e.}, $\Mcontrol_\star \!=\!  \argmin_{\Mcontrol \in \Mcontrolset_\psi} \sum_{t=1}^{T} \ell_t(y_{t}^\Mcontrol,u_{t}^\Mcontrol).$
After $T$ step of interaction, the agent's regret is denoted as 
\begin{equation}\label{regret_def}
    \reg(T) = \sum\nolimits_{t=1}^T c_t- \ell_t(y^{\Mcontrol_\star},u^{\Mcontrol_\star}).
\end{equation}

\section{\textsc{AdaptOn}} \label{sec:algorithm}
\begin{figure}[t]
    \centering
    \includegraphics[width=0.95\textwidth]{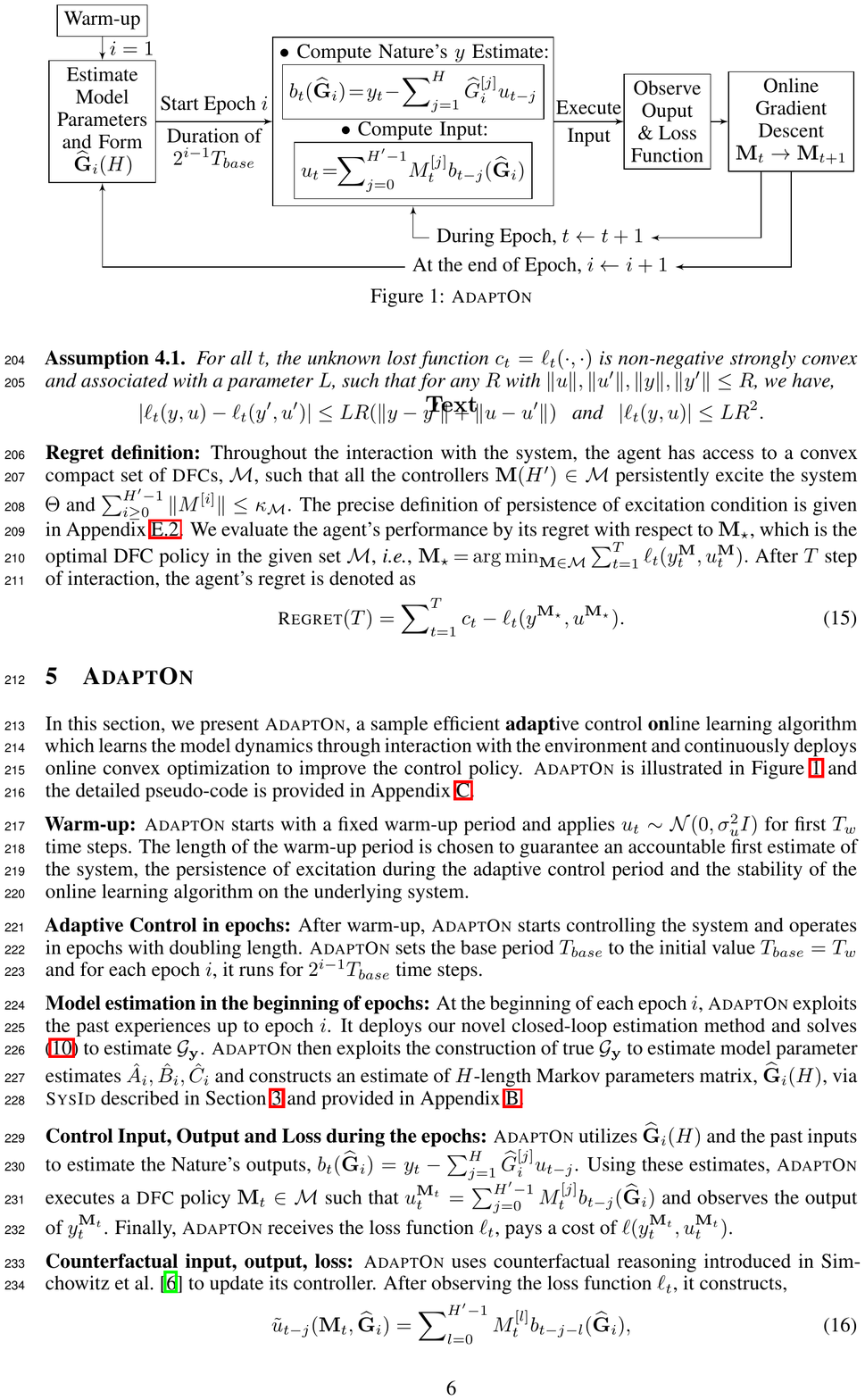}
    \caption{\alg}\label{alg:fig}
\end{figure}

In this section, we present \alg, a sample efficient \textbf{adapt}ive control \textbf{on}line learning algorithm which learns the model dynamics through interaction with the environment and continuously deploys online convex optimization to improve the control policy. \alg is illustrated in Figure~\ref{alg:fig} and the detailed pseudo-code is provided in Appendix \ref{apx:adapton}. 
  
\textbf{Warm-up:}  \alg starts with a fixed warm-up period and applies $u_t \sim \mathcal{N}(0,\sigma_u^2 I)$ for the first $\Tburn$ time steps. The length of the warm-up period is chosen to guarantee an accountable first estimate of the system, the persistence of excitation during the adaptive control period and the stability of the online learning algorithm on the underlying system.

\textbf{Adaptive control in epochs:} After warm-up, \alg starts controlling the system and operates in epochs with doubling length. \alg sets the base period $\Tbase$ to the initial value $\Tbase=\Tburn$ and for each epoch $i$, it runs for $2^{i-1}\Tbase$ time steps. 

\textbf{Model estimation in the beginning of epochs:} At the beginning of each epoch $i$, \alg exploits the past experiences up to epoch $i$. It deploys the proposed closed-loop estimation method and solves (\ref{new_lse}) to estimate $\modelearn$. \alg then exploits the construction of true $\modelearn$ to estimate model parameter estimates $\hat A_i, \hat B_i, \hat C_i$ and constructs an estimate of $H$-length Markov parameters matrix, $\wh{\Markov}_i(H)$, via \Sys described in Section \ref{dynamics} and provided in Appendix \ref{apx:Identification}. 


\textbf{Control input, output and loss during the epochs:} \alg utilizes $\wh{\Markov}_i(H)$ and the past inputs to estimate the Nature's outputs, $\nature_t(\wh \Markov_i) = y_t- \sum_{j=1}^{H} \wh G_i^{[j]} u_{t-j}$. Using these estimates, \alg executes a \DFC policy $\Mcontrol_t \in \Mcontrolset$ such that  $u_t^{\Mcontrol_t} = \sum_{j=0}^{H'-1}M_t^{[j]} \nature_{t-j}(\wh \Markov_i)$ and observes the output of $y_t^{\Mcontrol_t}$. Finally, \alg receives the loss function $\ell_t$, pays a cost of $\ell(y_t^{\Mcontrol_t}, u_t^{\Mcontrol_t})$.

\textbf{Counterfactual input, output, loss:} \alg uses counterfactual reasoning introduced in \citet{simchowitz2020improper} to update its controller. After observing the loss function $\ell_t$, it constructs,
\begin{equation}
    \tilde{u}_{t-j}(\Mcontrol_t, \wh \Markov_i) = \sum\nolimits_{l=0}^{H'-1}M_t^{[l]}\nature_{t-j-l}(\wh \Markov_i),
\end{equation}
the counterfactual inputs, which are the recomputations of past inputs as if the current DFC policy is applied using Nature's $y$ estimates. Then, \alg reasons about the counterfactual output of the system. Using the current Nature's $y$ estimate and the counterfactual inputs, the agent approximates what the output of the system could be, if counterfactual inputs had been applied,
\begin{equation}
    \tilde{y}_t(\Mcontrol_t, \wh \Markov_i) = \nature_t(\wh \Markov_i) + \sum\nolimits_{j=1}^H \hat{G}_i^{[j]} \tilde{u}_{t-j}(\Mcontrol_t, \wh \Markov_i).
\end{equation}
Using the counterfactual inputs, output and the revealed loss function $\ell_t$, \alg finally constructs,
\begin{equation}
    f_t(\Mcontrol_t, \wh \Markov_i) = \ell_t(\tilde{y}_t(\Mcontrol_t, \wh \Markov_i), \tilde{u}_{t}(\Mcontrol_t, \wh \Markov_i) ).
\end{equation}
which is termed as the counterfactual loss. It is \alg's approximation of what the cost would have been at time $t$, if the current \DFC policy was applied until time $t$. It gives a performance evaluation of the current \DFC policy to \alg for updating the policy. Note that the Markov parameter estimates are crucial in the accuracy of this performance evaluation.

\textbf{Online convex optimization:} In order to optimize the controller during the epoch, at each time step, \alg runs online gradient descent on the counterfactual loss function $f_t(\Mcontrol_t, \wh \Markov_i)$ while keeping the updates in the set $\Mcontrolset$ via projection \citep{simchowitz2020improper},
\begin{equation*}
    \Mcontrol_{t+1}=\proj_\Mcontrolset\left(\Mcontrol_t-\eta_t\nabla_\Mcontrol f_t\left(\Mcontrol,\wh \Markov_i\right)\Big|_{\Mcontrol_t}\right).
\end{equation*}
Notice that if \alg had access to the underlying Markov operator $\Markov$, the counterfactual loss would have been the true loss of applying the current \DFC policy until time $t$, up to truncation. By knowing the exact performance of the \DFC policy, online gradient descent would have obtained accurate updates. Using the counterfactual loss for optimizing the controller causes an error in the gradient updates which is a function of estimation error of $\wh \Markov_i$. Therefore, as the Markov estimates improve via our closed-loop estimation method, the gradient updates get more and more accurate.

\section{Regret Analysis}\label{analysis}
In this section, we first provide the closed-loop learning guarantee of \alg, then show that \alg maintains stable system dynamics and present the regret upper bound for \alg. 

\textbf{Closed-loop learning guarantee:} In the beginning of adaptive control epochs, \alg guarantees that Markov parameter estimates are accurate enough that deploying any controller from set $\Mcontrolset$ provides persistence of excitation in inputs (see Appendix \ref{apx:persistence}). Under this guarantee, using our novel estimation method at the beginning of any epoch $i$ ensures that during the epoch, $\|\wh{\Markov}_i(H) - \Markov(H)  \| = \Tilde{\OO}(1/\sqrt{2^{i-1}\Tbase})$, due to Theorem \ref{theo:sysid} and doubling length epochs.

\textbf{Stable system dynamics with \alg:} Since $w_t$ and $z_t$ are Gaussian disturbances, from standard concentration results we have that Nature's $y$ is bounded with high probability for all $t$ (see Appendix \ref{apx:Boundedness}). Thus, let $\|\nature_t(\Markov)\|\leq \kappa_\nature$ for some $\kappa_\nature$. The following lemma shows that during \alg, Markov parameter estimates are well-refined such that the inputs, outputs and the Nature's $y$ estimates of \alg are uniformly bounded with high probability. The proof is in Appendix \ref{apx:Boundedness}.
\begin{lemma}
For all $t$ during the adaptive control epochs, $\|u_t\| \leq \kappa_\Mcontrolset \kappa_\nature$, $\|y_t\| \leq \kappa_\nature (1+ \kappa_\Markov \kappa_\Mcontrolset) $ and $\|\nature_t(\wh{\Markov})\| \leq 2\kappa_\nature$ with high probability.
\end{lemma}

\textbf{Regret upper bound of \alg:} The regret decomposition of \alg includes 3 main pieces: $\!(R_1)\!$ Regret due to warm-up, $\!(R_2)\!$ Regret due to online learning controller, $\!(R_3)\!$ Regret due to lack of system dynamics knowledge (see Appendix \ref{apx:Regret} for exact expressions and proofs). $R_1$ gives constant regret for the short warm-up period. $R_2$ results in $\OO(\log(T))$ regret. Note that this regret decomposition and these results follow and adapt Theorem 5 of \citet{simchowitz2020improper}. The key difference is in $R_3$, which scales quadratically with the Markov parameter estimation error $\|\wh{\Markov}_i(H) \!-\! \Markov(H)\|$. \citet{simchowitz2020improper} deploys open-loop estimation and does not update the model parameter estimates during adaptive control and attains $R_3 = \Tilde{\OO}(\sqrt{T})$ which dominates the regret upper bound. However, using our novel system identification method with the closed-loop learning guarantees of Markov parameters and the doubling epoch lengths \alg gets $R_3 = \OO(\text{polylog}(T))$.



\begin{theorem}\label{thm:polylogregret}
Given $\Mcontrolset$, a closed, compact and convex set of \DFC policies with persistence of excitation, with high probability, \alg achieves logarithmic regret, \textit{i.e.}, $\reg(T)=polylog(T).$
\end{theorem}

In minimizing the regret, \alg competes against the best \DFC policy in the given set $\Mcontrolset$. Recall that any stabilizing \LDC policy can be well-approximated as a \DFC policy. Therefore, for any \LDC policy $\pi$ whose \DFC approximation lives in the given $\Mcontrolset$, Theorem \ref{thm:polylogregret} can be extended to achieve the first logarithmic regret in \LQG setting.

\begin{corollary}\label{regretLQG}
Let $\pi_\star$ be the optimal linear controller for \LQG setting. If the \DFC approximation of $\pi_\star$ is in $\Mcontrolset_\psi$, then the regret of \alg with respect to $\pi_\star$  is $\sum\nolimits_{t=1}^T c_t- \ell_t(y_t^{\pi_\star},u_t^{\pi_\star}) = polylog(T)$.
\end{corollary}

Note that without any estimation updates during the adaptive control, \alg reduces to a variant of the algorithm given in \citet{simchowitz2020improper}. While the update rule in \alg results in $\OO(\log(T))$ updates in adaptive control period, one can follow different update schemes as long as \alg obtains enough samples in the beginning of the adaptive control period to obtain persistence of excitation. The following is an immediate corollary of Theorem \ref{thm:polylogregret} which considers the case when number of epochs or estimations are limited during the adaptive control period.

\begin{corollary}\label{corr:slowupdate}
If enough samples are gathered in the adaptive control period, \alg with any update scheme less than $\log(T)$ updates has $\reg(T) \in [polylog(T),\Tilde{O}(\sqrt{T})].$
    
\end{corollary}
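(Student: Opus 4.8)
The plan is to revisit the regret decomposition of Theorem~\ref{thm:main} and track how only its third term---the one capturing the cost of imperfect model knowledge---reacts to the update schedule. Recall that this term has the form $\sum_{t=\Tburn+1}^{T}\epsilon_\Markov^2(i_t,\delta)\,C_{\mathrm{prob}}$, where $i_t$ indexes the most recent model re-estimation preceding time $t$ and $C_{\mathrm{prob}}$ collects the problem-dependent factors, whereas the first two terms ($\Tburn L\kappa_y^2$ for the warm-up and a term of order $\log(T/\delta)$ for online gradient descent on strongly convex losses) are insensitive to the schedule. The first step is to generalize the per-epoch estimation guarantees of Theorems~\ref{theo:id2norm} and~\ref{theo:id2normgeneral} from the doubling schedule to an arbitrary one: for re-estimation times $\Tburn=\tau_1<\tau_2<\dots<\tau_k$, provided ``enough samples are gathered'' so that persistence of excitation holds over $[\Tburn,T]$ (exactly as used in the proof of Theorem~\ref{theo:id2normgeneral}, via the martingale concentration of Theorem~\ref{azuma}, a cover of $\Mcontrolset$, and Theorem~\ref{theo:closedloopid}), the Markov-operator error over the block $t\in[\tau_j,\tau_{j+1})$ obeys $\sum_{l\ge 1}\|\wh G_j^{[l]}-G^{[l]}\| \le \OO(polylog(T)/\sqrt{\tau_j})$, since the $j$-th estimate in (\ref{estimation}) is built from all $\tau_j$ samples collected so far; Lemma~\ref{lem:boundednature} likewise continues to give uniform boundedness of inputs, outputs, and Nature's $y$ estimates under any such schedule.

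Second, I would substitute this decay into the third term, obtaining $\reg(T)\lesssim \Tburn L\kappa_y^2 + polylog(T) + C_{\mathrm{prob}}\,polylog(T)\sum_{j=1}^{k}(\tau_{j+1}-\tau_j)/\tau_j$ with $\tau_{k+1}:=T$. The upper endpoint follows from the crudest admissible schedule, a single estimation ($k=1$): then $\sum_j(\tau_{j+1}-\tau_j)/\tau_j=(T-\Tburn)/\Tburn$, so $\reg(T)\lesssim \Tburn + T\,polylog(T)/\Tburn$, and choosing $\Tburn$ of order $\sqrt{T}$ up to logarithmic factors balances the two contributions to give $\reg(T)=\Tilde{O}(\sqrt{T})$, recovering a variant of the explore\&commit bound of \citet{simchowitz2020improper}. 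For any schedule with more re-estimations one checks (a short AM--GM/telescoping comparison at fixed $\tau_1$) that $\sum_j(\tau_{j+1}-\tau_j)/\tau_j$ only shrinks, so with an appropriately tuned warm-up length the regret remains $\Tilde{O}(\sqrt{T})$; at the other extreme the doubling schedule ($k=\OO(\log T)$) makes every summand $\OO(1)$, reproducing (\ref{eq:logG}) and the $polylog(T)$ bound of Theorem~\ref{thm:polylogregret}. Since dropping below the doubling count can only enlarge that sum, $polylog(T)$ is the smallest guarantee available within this family and, by Theorem~\ref{thm:polylogregret}, it is attained---this is the lower endpoint of the claimed interval.

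The main obstacle I anticipate is the first step: confirming that the learning and stability machinery of Section~\ref{DynamicsLearning} survives a non-doubling schedule. Theorems~\ref{theo:id2norm}--\ref{theo:id2normgeneral} are stated for doubling epochs, and their proofs invoke persistence of excitation both in expectation (seeded by the warm-up estimate of Theorem~\ref{theo:id2norm}) and with high probability (via Theorem~\ref{azuma} and the union bound over a cover of $\Mcontrolset$). One must verify that ``enough samples'' at the start of the adaptive phase still seeds expected persistence of excitation, and that the high-probability persistence of excitation together with the bias-versus-variance split of the regularized least-squares error in (\ref{estimation}) carries over verbatim with the epoch index $i$ replaced by the generic re-estimation index $j$ and $2^{i-1}\Tbase$ replaced by $\tau_j$. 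Once that is in place, the interpolation and the tuning of $\Tburn$ are a routine optimization exercise.
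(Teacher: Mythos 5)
Your proposal is correct and follows the same route the paper takes: the paper states this as an immediate consequence of Theorem~\ref{thm:main}, observing that only the estimation-error term $\sum_t \epsilon^2_\Markov(\cdot)$ depends on the re-estimation schedule, with the doubling schedule giving $polylog(T)$ via (\ref{eq:logG}) and a single estimation recovering the explore-and-commit rate $\Tilde{O}(\sqrt{T})$ of \citet{simchowitz2020improper}. Your explicit interpolation formula $\sum_j(\tau_{j+1}-\tau_j)/\tau_j$ and the check that refining the schedule only shrinks this sum make precise what the paper leaves implicit.
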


\section{Related Works}\label{relatedworks}
\textbf{Classical results in system identification:} The classical open or closed-loop system identification methods either consider the asymptotic behavior of the estimators or demonstrate the positive and negative empirical performances without theoretical guarantees \citep{van1994n4sid,verhaegen1994identification,chen1992integrated,forssell1999closed,van1997closed,ljung1996subspace,ljung1999system,qin2003closed}. Most of prior work exploits the state-space form or the innovations form representation of the system. Among all closed-loop estimation methods only a handful consider the predictor form representation for system identification \citep{chiuso2005consistency, jansson2003subspace}. For an extensive overview of classical system identification is provided in \citet{qin2006overview}.

\textbf{Finite-time system identification for partially observable linear dynamical systems:} In partially observable linear systems, most of the prior works focus on open-loop system identification guarantees~\citep{oymak2018non,sarkar2019finite,tsiamis2019finite,simchowitz2019learning,lee2019non,tsiamis2019sample,tsiamis2020online}. Among all, only \citet{lee2019non} considers finite-time closed-loop system identification. However, they analyze the output estimation error rather than explicitly recovering the model parameters as presented in this work. Another body of work aims to extend the problem of estimation and prediction to online convex optimization where a set of strong guarantees on cumulative prediction errors are provided~\citep{abbasi2014tracking, hazan2017learning,arora2018towards,hazan2018spectral,lee2019robust,ghai2020no}

\textbf{Regret in fully observable linear dynamical systems:} Efforts in regret analysis of adaptive control in fully observable linear dynamical systems is initiated by seminal work of \citet{abbasi2011regret}. They present $\Tilde{\OO}(\sqrt{T})$ regret upper bound for linear quadratic regulators (LQR) which are fully observable counterparts of \LQG. 
This work sparked the flurry of research with different directions in the regret analysis in LQRs \citep{faradonbeh2017optimism,abeille2017thompson, abeille2018improved, dean2018regret,faradonbeh2018input,cohen2019learning,mania2019certainty,abbasi2019model}. Recently, \citet{cassel2020logarithmic} show that logarithmic regret is achievable if only $A$ or $B$ is unknown, and \citet{simchowitz2020naive} provide $\Tilde{\OO}(\sqrt{T})$ regret lower bound for LQR setting with fully unknown systems. However, due to the persistent noise in the observations of the hidden states, the mentioned lower bound does not carry to the partially observable linear dynamical systems.

\textbf{Regret in adversarial noise setting:} In adversarial noise setting, most of the works consider full information of the underlying system \citep{agarwal2019online, cohen2018online, agarwal2019logarithmic, foster2020logarithmic}. Recent efforts extend to adaptive control in the adversarial setting for the unknown system model \citep{hazan2019nonstochastic, simchowitz2020improper}.


\section{Conclusion}\label{conclusion}
In this paper, we propose the first system identification algorithm that provides consistent and reliable estimates with finite-time guarantees in both open and closed-loop estimation problems. We deploy this estimation technique in \alg, a novel adaptive control that efficiently learns the model parameters of the underlying dynamical system and deploys projected online gradient descent to design a controller.
We show that in the presence of convex set of persistently exciting linear controllers and strongly convex loss functions, \alg achieves a regret upper bound that is polylogarithmic in number of agent-environment interactions. 
The unique nature of \alg which combines occasional model estimation with continual online convex optimization allows the agent to achieve significantly improved regret in the challenging setting of adaptive control in partially observable linear dynamical systems. 
%

\newpage
\section*{Acknowledgements}
S. Lale is supported in part by DARPA PAI. K. Azizzadenesheli gratefully acknowledge the financial support of Raytheon and Amazon Web Services. B. Hassibi is supported in part by the National Science Foundation under grants CNS-0932428, CCF-1018927, CCF-1423663 and CCF-1409204, by a grant from Qualcomm Inc., by NASA’s Jet Propulsion Laboratory through the President and Director’s Fund, and by King Abdullah University of Science and Technology. A. Anandkumar is supported in part by Bren endowed chair, DARPA PAIHR00111890035 and LwLL grants, Raytheon, Microsoft, Google, and Adobe faculty fellowships.

\bibliography{main}
\bibliographystyle{unsrtnat}
\newpage
\appendix

\begin{center}
{\huge Appendix}
\end{center}
In the beginning of this Appendix, we will provide the overall organization of the Appendix and notation table for the paper. Then we will include description of lower bound on the warm-up duration and briefly comment on their goal in helping to achieve the regret result.

\textbf{Appendix organization and notation:}
In Appendix \ref{apx:Policies}, we revisit the precise definitions of \LDC and \DFC policies and introduce the technical properties that will be used in the proofs. We also provide Lemma \ref{lem:deviation} that shows that any stabilizing \LDC policy can be well-approximated by an \DFC policy. In Appendix \ref{apx:Identification}, we provide the details of \Sys procedure that recovers model parameter estimates from the estimate of $\modelearn$ obtained by (\ref{new_lse}). Appendix \ref{apx:adapton} provides the detailed pseudocode of \alg. In Appendix \ref{apx:generalsysidproof}, we provide the proof Theorem \ref{theo:sysid} step by step. In Appendix \ref{apx:persistence}, we provide the persistence of excitation guarantees for \alg which enables us to achieve consistent estimates by using the new system identification method. In particular, in Appendix \ref{subsec:warmuppersist} we show the persistence of excitation during the warm-up, in Appendix \ref{subsec:persistence} we formally define the persistence of excitation property of the controllers in $\Mcontrolset$, \textit{i.e.} (\ref{precise_persistence}), and finally in Appendix \ref{subsec:theo:persist_adaptive}, we show that the control policies of \alg achieve persistence of excitation during the adaptive control. Appendix \ref{apx:Boundedness} shows that execution of \alg results in stable system dynamics. In Appendix \ref{apx:Regret}, we state the formal regret result of the paper, Theorem \ref{thm:main} and provides its proof. Appendix \ref{apx:convex} briefly looks into the case where the loss functions are convex. Finally, in Appendix \ref{technical}, we provide the supporting technical theorems and lemmas. Table \ref{table:notation} provides the notations used throughout the paper.

\begin{table}
\centering
\caption{Useful Notations for the Analysis}
\vspace{-0.7em}
 \begin{tabular}{l | l } 
 \toprule
 \textbf{System Notations} & \textbf{Definition} \\ 
 \midrule
 $\Theta$ & Unknown discrete-time linear time invariant system with dynamics of (\ref{output})  \\ 
  $x_t$ & Unobserved (latent) state  \\
$y_t$ & Observed output \\
 $u_t$ & Input to the system  \\
 $w_t$ & Process Noise, $w_t \sim \mathcal{N}(0,\sigmaW^{2}I)$  \\
 $z_t$ & Measurement noise,  $z_t \sim \mathcal{N}(0,\sigmaZ^{2}I)$ \\
 $\ell_t(y_t,u_t)$ & Revealed loss function after taking action $u_t$  \\
 $\Sigma$ & Steady state error covariance matrix of state estimation under $\Theta$  \\
 $F$ & Kalman filter gain in the observer form, $F\!=\! A \Sigma  C^\top \!\! \left( C \Sigma  C^\top \!\!\!+\! \sigma_z^2 I \right)^{-1}\!$\\
 $e_t$ & Innovation process, $e_t \sim \mathcal{N}\left(0,C \sig  C^\top + \sigma_z^2 I \right)$ \\
 $\Phi(A)$ & Growth rate of powers of $A$,  $\Phi(A) = \sup_{\tau \geq 0} \left\|A^{\tau}\right\|/\rho(A)^{\tau}$ \\
 $\Markov$ & Markov operator, $\Markov \!=\!\lbrace G^{[i]}\rbrace_{i\geq 0}$ with $G^{[0]} \!=\! 0$, and $\forall i\!>\!0$,  $G^{[i]}\!=\!CA^{i-1}B$
 \\
 $\modelearn$ & System parameter to be estimated
 \\
 $\phi_t$ & Regressor in the estimation, concatenation of $\Hes$ input-output pairs \\
 $\nature_t(\Markov)$ & Nature's $y$, $\nature_t(\Markov) \coloneqq y_t - \sum\nolimits_{i=0}^{t-1} G^{[i]} u_{t-i}$
 \\
 $\Gol$ & Operator that maps history of noise and open-loop inputs to $\phi$ \\
 \toprule
 \textbf{Policy Notations} &  \\ 
 \midrule
 $\pi$ & \LDC policy  \\
 $\psi$ & Proper decay function, nonincreasing, $\lim_{h'\rightarrow \infty}\psi(h')=0$ \\
 $\Pi(\psi)$ & \LDC class with proper decay function $\psi$, $\forall \pi \in \Pi(\psi) $, $\sum_{i\geq h}\|{G'_\pi}^{[i]}\|\leq \psi(h)$, $\forall h$ \\
 $\Markov_{\pi}'$ & Markov operator of induced closed-loop system by $\pi$ \\
 $\Mcontrol(H')$ & \DFC policy of length $H'$, $\Mcontrol(H'):=\lbrace M^{[i]} \rbrace_{i=0}^{H'-1}$, 
    \\
 $u^\Mcontrol_t$ & Input designed by \DFC policy $ u^\Mcontrol_t = \sum\nolimits_{i=0}^{H'-1}M^{[i]}\nature_{t-i}(\Markov)$ \\
 $\Mcontrolset$ & Given set of persistently exciting \DFC controllers  \\
 $\Mcontrol_\star$ & Optimal policy in hindsight in $\Mcontrolset$ \\
 $\tilde{u}_{t-j}(\Mcontrol_t, \wh \Markov_i)$ & Counterfactual input, $ \sum\nolimits_{l=0}^{H'-1}M_t^{[l]}\nature_{t-j-l}(\wh \Markov_i)$  \\
 $\tilde{y}_{t-j}(\Mcontrol_t, \wh \Markov_i)$ & Counterfactual output, $  \nature_t(\wh \Markov_i) + \sum\nolimits_{j=1}^H \hat{G}_i^{[j]} \tilde{u}_{t-j}(\Mcontrol_t, \wh \Markov_i)$  \\
 $ f_t(\Mcontrol_t, \wh \Markov_i)$ & Counterfactual loss, $\ell_t(\tilde{y}_t(\Mcontrol_t, \wh \Markov_i), \tilde{u}_{t}(\Mcontrol_t, \wh \Markov_i) )$  \\
 \toprule
 \textbf{Quantities } &  \\ 
 \midrule
 $H$ & Length of Markov parameter matrix, $\Markov(H)$   \\
 $\Hes$ & Length of estimated operator $\modelearn$  \\
 $H'$ & Length of \DFC policy \\
 $\kappa_\Markov$ & Bound on the Markov operator, $\sum_{i\geq0} \|G^[i]\| \leq  \kappa_\Markov$  \\
 $\kappa_\nature$ & Bound on Nature's $y$, $\|\nature_t(\Markov)\| \leq \kappa_\nature $ \\
 $\kappa_u$ & Bound on input $\|u_t\|$ \\
 $\kappa_y$ & Bound on output $\|y_t\|$ \\
 $\kappa_\Mcontrolset$ & Bound on the \DFC policy, $\sum\nolimits_{i\geq 0}^{H'-1}\|M^{[i]}\| \leq \kappa_\Mcontrolset = (1+r)\kappa_\psi $ \\
 $\kappa_\psi$ & Maximum of decay function, $\psi(0)$ \\
 $\psi_\Markov(h)$ & Induced decay function on $\Markov$,  $\sum_{i\geq h}\|{G}^{[i]}\|$\\
 $\Tbase = \Tburn$ & Base length of first adaptive control epoch and warm-up duration  \\
 \toprule
 \textbf{Estimates} &  \\ 
 \midrule
 $\modelearni$ & Estimate of $\modelearn$ at epoch $i$   \\
 $\wh \Markov (H)$ & Estimate of Markov parameter matrix   \\
 $\nature_t(\wh \Markov_i)$ & Estimate of Nature's $y$,   \\
 $\epsilon_\Markov(i,\delta)$ & Estimation error for the Markov operator estimate at epoch $i$, $\Tilde{\OO}(1/\sqrt{2^{i-1}\Tbase})$ \\
 \bottomrule
\end{tabular}
\label{table:notation}
\end{table}

\textbf{Warm-up duration:}
The duration of warm-up is chosen as $\Tburn \geq \Tmax$, where,
\begin{equation}
    \Tmax \coloneqq \max \{H, H', T_o, T_A, T_B, T_c, T_{\epsilon_G}, T_{cl}, T_{cx}, T_r\}.
\end{equation}
This duration guarantees an accountable first estimate of the system ($T_o$, see Appendix \ref{subsec:warmuppersist}), the stability of the online learning algorithm on the underlying system ($T_A, T_B$, see Appendix \ref{apx:generalsysidproof}), the stability of the inputs and outputs ($T_{\epsilon_\Markov}$, see Appendix \ref{subsec:theo:persist_adaptive}), the persistence of excitation during the adaptive control period ($T_{cl}$, see Appendix \ref{subsec:theo:persist_adaptive}), an accountable estimate at the first epoch of adaptive control ($T_c$, see Appendix \ref{subsec:theo:persist_adaptive}), the conditional strong convexity of expected counterfactual losses ($T_{cx}$, see Appendix \ref{subsec:markovbound}) and the existence of a good comparator \DFC policy in $\Mcontrolset$ ($T_r$, see Appendix \ref{subsec:markovbound}) . The precise expressions are given throughout the Appendix in stated sections.

\newpage
\section{Details on \LDC Policies and \DFC Policies} \label{apx:Policies}
\textbf{Linear Dynamic Controller (\LDC):}
An \LDC policy, $\pi$, is a $s$ dimensional linear controller on a state $s^\pi_t\in \boldR^s$ of a linear dynamical system $(A_\pi,B_\pi,C_\pi,D_\pi)$, with input $y^\pi_t$ and output $u^\pi_t$, 
where the state dynamics and the controller evolves as follows,
\begin{equation}\label{eq:LDC}
    s^\pi_{t+1} = A_\pi s_t^\pi + B_\pi y_t^\pi, \qquad u^\pi_{t} = C_\pi s_t^\pi + D_\pi y_t^\pi.
\end{equation}
Deploying a \LDC policy $\pi$ on the system $\Theta =(A,B,C)$ induces the following joint dynamics of the $x_t^\pi,s_t^\pi$ and the observation-action process: 
\begin{align*}\label{eq:PolicyDynamics}
\begin{bmatrix}
x^\pi_{t+\!1}\\
s^\pi_{t+\!1}
\end{bmatrix}
&\!\!=\!\!
\underbrace{\begin{aligned}
\begin{bmatrix}
A \!+\! BD_\pi C \! \!&\! \! BC_{\pi}\\
B_{\pi}C \! \!&\!\! A_{\pi}
\end{bmatrix}\end{aligned}}_\text{$A'_{\pi}$}\!
\begin{bmatrix}
x^\pi_{t}\\
s^\pi_{t}
\end{bmatrix}
\!\!+\!\!\underbrace{\begin{aligned}\begin{bmatrix}
I_{n} \! \!\!&\!\! \! BD_{\pi}\\
0_{s\!\times\! n} \! \!\!&\!\! \! B_{\pi}
\end{bmatrix}\end{aligned}}_\text{$B'_{\pi}$} \!
\begin{bmatrix}
w_{t}\\
z_{t}
\end{bmatrix}\!, \enskip\!
\begin{bmatrix}
y^\pi_{t}\\
u^\pi_{t}
\end{bmatrix} \!\!=\!\!
\underbrace{\begin{aligned}\begin{bmatrix}
C \! \!&\! \! 0_{s\!\times\! d}\\
D_{\pi}C \! \!&\! \! C_{\pi}
\end{bmatrix}\end{aligned}}_\text{$C'_{\pi}$}\!
\begin{bmatrix}
x^\pi_{t}\\
s^\pi_{t}
\end{bmatrix}
\!\!+\!\!\underbrace{\begin{aligned}\begin{bmatrix}
0_{d\!\times\! n} \! \!\!&\!\! \! I_{d}\\
0_{m\!\times\! n} \!\!\!&\!\! \! D_{\pi}
\end{bmatrix}\end{aligned}}_\text{$D'_{\pi}$}\!
\begin{bmatrix}
w_{t}\\
z_{t}
\end{bmatrix}
\end{align*}
where $(A_\pi',B_\pi',C_\pi',D_\pi')$ are the associated parameters of induced closed-loop system. We define the Markov operator for the system $(A_\pi',B_\pi',C_\pi',D_\pi')$, as $\mathbf{G_\pi'}=\lbrace {G'_\pi}^{[i]}\rbrace_{i=0}$, where ${G'_\pi}^{[0]}=D_\pi'$, and $\forall i>0$,  ${G'_\pi}^{[i]}=C_\pi'A_\pi'^{i-1}B_\pi'$. Let 
$B'_{\pi,z}\coloneqq [D_{\pi}^\top B^\top \enskip
B_{\pi}^\top]^\top$ and $C'_{\pi,u} \coloneqq \begin{bmatrix}
D_{\pi}C & C_{\pi}
\end{bmatrix}$. 

\textbf{Proper Decay Function:} Let $\psi : \mathbb{N} \rightarrow \mathbb{R}_{\geq 0}$ be a proper decay function, such that $\psi$ is non-increasing and $\lim_{h'\rightarrow \infty}\psi(h')=0$. For a Markov operator $\Markov$, $\psi_\Markov(h)$ defines the induced decay function on $\Markov$, \textit{i.e.}, $\psi_\Markov(h) \coloneqq \sum_{i\geq h}\|{G}^{[i]}\|$. $\Pi(\psi)$ denotes the class of \LDC policies associated with a proper decay function $\psi$, such that for all $\pi\in\Pi(\psi)$, and all $h\geq 0$, $\sum_{i\geq h}\|{G'_\pi}^{[i]}\|\leq \psi(h)$. Let $\kappa_\psi \coloneqq \psi(0)$.

\textbf{\DFC Policy:} A \DFC policy of length $H'$ is defined with a set of parameters,  $\Mcontrol(H'):=\lbrace M^{[i]} \rbrace_{i=0}^{H'-1}$,
prescribing the control input of $u^\Mcontrol_t = \sum_{i=0}^{H'-1}M^{[i]}\nature_{t-i}(\Markov)$, and resulting in state $x^\Mcontrol_{t+1}$ and observation $y^\Mcontrol_{t+1}$. In the following, directly using the analysis in \citet{simchowitz2020improper}, we show that for any $\pi\in\Pi(\psi)$ and any input $u^\pi_t$ at time step $t$, there is a set of parameters $\Mcontrol(H')$, such that $u^\Mcontrol_t$ is sufficiently close to $u^\pi_t$, and the resulting $y^\pi_t$ is sufficiently close to $y^\Mcontrol_t$.

\begin{lemma} \label{lem:deviation}
Suppose $\|\nature_t(\Markov)\| \!\leq\! \kappa_\nature$  for all $t\leq T$ for some $\kappa_\nature$. For any \LDC policy $\pi\in\Pi(\psi)$, there exist a $H'$ length \DFC policy $\Mcontrol(H')$ such that $\|u^\pi_t \!-\! u^\Mcontrol_t\| \!\leq\! \psi(H') \kappa_\nature,$ and $\|y^\pi_t \!-\! y^\Mcontrol_t\| \!\leq\! \psi(H') \kappa_\Markov\kappa_\nature$.
One of the \DFC policies that satisfy this is $M^{[0]}\!=\!D_\pi$, and $M^{[i]}\!=\!C'_{\pi,u}{A'_{\pi}}^{\!i-1}B'_{\pi,z}$ for $0\!<\!i\!<\!H'$.  
\end{lemma}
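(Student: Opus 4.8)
The plan is to prove the lemma by explicitly unrolling the closed-loop dynamics induced by the \LDC policy $\pi$ and comparing its control input and output to those of the candidate \DFC policy $\Mcontrol(H')$ given in the statement, namely $M^{[0]}=D_\pi$ and $M^{[i]}=C'_{\pi,u}{A'_{\pi}}^{i-1}B'_{\pi,z}$ for $0<i<H'$. The starting observation is that, by the joint closed-loop recursion for $(x^\pi_t,s^\pi_t)$ and the output equation $[y^\pi_t;u^\pi_t]^\top = C'_\pi[x^\pi_t;s^\pi_t]^\top + D'_\pi[w_t;z_t]^\top$, the action $u^\pi_t$ can be written as a convolution of the \emph{disturbance} history $(w_{t-i},z_{t-i})_{i\geq 0}$ against the closed-loop Markov coefficients. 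Concretely, $u^\pi_t = D_\pi z_t + \sum_{i\geq 1} C'_{\pi,u}{A'_\pi}^{i-1}B'_{\pi,z}\,[\,w_{t-i};\,z_{t-i}\,]$ — i.e., $u^\pi_t = \sum_{i\geq 0} M^{[i]}$ applied to the noises, where the infinite sum is exactly what the finite \DFC truncates. The key identity to establish first is therefore that the infinite-horizon \DFC (with coefficients $M^{[i]}$ for all $i\geq 0$) reproduces $u^\pi_t$ when fed Nature's $y$ — this uses the fact that $\nature_t(\Markov)$ is precisely the noise-driven part of the output and that the two state-space realizations (system $+$ \LDC versus the disturbance-feedback form) coincide.

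Next I would bound the truncation error. Once the exact representation $u^\pi_t = \sum_{i=0}^{\infty} M^{[i]}\nature_{t-i}(\Markov)$ is in hand and $u^\Mcontrol_t = \sum_{i=0}^{H'-1} M^{[i]}\nature_{t-i}(\Markov)$, the difference is $\sum_{i\geq H'} M^{[i]}\nature_{t-i}(\Markov)$, whose norm is at most $\big(\sum_{i\geq H'}\|M^{[i]}\|\big)\kappa_\nature$ under the hypothesis $\|\nature_t(\Markov)\|\leq\kappa_\nature$. The point is that $\|M^{[i]}\| = \|C'_{\pi,u}{A'_\pi}^{i-1}B'_{\pi,z}\| = \|{G'_\pi}^{[i]}\|$ for $i\geq 1$ (reading off the closed-loop Markov operator $\mathbf{G'_\pi}$ restricted to the $(u,z)$ block, which is dominated by the full block), so $\sum_{i\geq H'}\|M^{[i]}\| \leq \sum_{i\geq H'}\|{G'_\pi}^{[i]}\| \leq \psi(H')$ by the definition of $\Pi(\psi)$. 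This gives $\|u^\pi_t - u^\Mcontrol_t\|\leq\psi(H')\kappa_\nature$.

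For the output bound, I would use that $y^\pi_t$ and $y^\Mcontrol_t$ are generated from the \emph{same} Nature's $y$ and differ only through the inputs actually applied: $y^\pi_t - y^\Mcontrol_t = \sum_{j\geq 1} G^{[j]}(u^\pi_{t-j} - u^\Mcontrol_{t-j})$, using the input-output form $y_t = \nature_t(\Markov) + \sum_{j\geq 0}G^{[j]}u_{t-j}$ (with $G^{[0]}=0$). Taking norms and applying the per-step control bound just proved together with $\sum_{j\geq 0}\|G^{[j]}\|\leq\kappa_\Markov$ (Assumption~\ref{AssumContObs}) yields $\|y^\pi_t - y^\Mcontrol_t\|\leq \kappa_\Markov\,\psi(H')\kappa_\nature$, as claimed. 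One technical subtlety is that the control-error bound must hold uniformly over all preceding time steps $t-j$ for the output-error sum to close; since the argument for $\|u^\pi_\tau - u^\Mcontrol_\tau\|\leq\psi(H')\kappa_\nature$ is the same for every $\tau\leq T$, this is fine, but it should be stated carefully (ideally via a short induction or by simply noting the bound is time-uniform).

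\textbf{The main obstacle} I anticipate is the first step: rigorously identifying $u^\pi_t$ with the disturbance-feedback convolution $\sum_{i\geq 0}M^{[i]}\nature_{t-i}(\Markov)$. This requires carefully tracking that the \LDC's internal state $s^\pi_t$, when the closed loop is run from steady state, equals a convergent linear functional of past $(w,z)$, and that substituting the definition of $\nature_t(\Markov)$ back in recovers exactly the coefficients $C'_{\pi,u}{A'_\pi}^{i-1}B'_{\pi,z}$ — essentially a change-of-variables between the ``output-feedback'' and ``disturbance-feedback'' parametrizations of the same policy. Everything after that is routine norm bookkeeping using $\psi$, $\kappa_\Markov$, and $\kappa_\nature$. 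Since the lemma explicitly says this follows from the analysis in \citet{simchowitz2020improper}, I would lean on their disturbance-response/Youla-type decomposition for this identification rather than re-deriving it from scratch.
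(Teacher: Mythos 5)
Your proposal follows essentially the same route as the paper's proof: establish the exact disturbance-feedback representation $u^\pi_t = D_\pi\nature_t(\Markov) + \sum_{i=1}^{t-1}C'_{\pi,u}{A'_\pi}^{i-1}B'_{\pi,z}\,\nature_{t-i}(\Markov)$, bound the truncated tail by $\bigl(\sum_{i\geq H'}\|M^{[i]}\|\bigr)\kappa_\nature\leq\psi(H')\kappa_\nature$, and push the input error through the open-loop Markov operator via $y^\pi_t-y^\Mcontrol_t=\sum_{j}G^{[j]}(u^\pi_{t-j}-u^\Mcontrol_{t-j})$ to obtain the factor $\kappa_\Markov$. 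The only substantive difference is that you defer the key identification to \citet{simchowitz2020improper} (and your ``starting observation'' slightly mis-states it, since the raw unrolling has $B'_{\pi,w}$ acting on $w_{t-i}$ rather than $B'_{\pi,z}$ acting on the stacked noise), whereas the paper derives it explicitly via the expansion ${A'_\pi}^{i-1}B'_{\pi,w}=[\,(A^{i-1})^\top\ \ 0\,]^\top+\sum_{j=1}^{i-1}{A'_\pi}^{j-1}B'_{\pi,z}CA^{i-1-j}$ followed by a re-indexing of the resulting double sum to reassemble $\nature_{t-i}(\Markov)=z_{t-i}+\sum_{k}CA^{k-1}w_{t-i-k}$.
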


\begin{proof}\label{apx:lem:deviation}

Let 
$B'_{\pi,w}\coloneqq [I_{n\times n}^\top \enskip
0_{s\times n}^\top]^\top$, and 
$B'_{\pi,z}\coloneqq [D_{\pi}^\top B^\top \enskip
B_{\pi}^\top]^\top$, 
the columns of $B'_{\pi}$ applied on process noise, and measurement noise respectively. Similarly $C'_{\pi,y} \coloneqq \begin{bmatrix}
C & 0_{s\times d}\\
\end{bmatrix}$ and $C'_{\pi,u} \coloneqq \begin{bmatrix}
D_{\pi}C & C_{\pi}
\end{bmatrix}$ are rows of $C'_{\pi}$ generating the observation and action. 

Rolling out the dynamical system defining a policy $\pi$ in (\ref{eq:LDC}), we can restate the action $u_t^\pi$ as follows,
\begin{align*}
u_t^\pi &= D_\pi z_t +\sum_{i=1}^{t-1}C'_{\pi,u}{A'_{\pi}}^{i-1}B'_{\pi,z}z_{t-i}+\sum_{i=1}^{t-1} C'_{\pi,u}{A'_{\pi}}^{i-1}B'_{\pi,w}w_{t-i}\\
&=D_\pi z_t +\sum_{i=1}^{t-1}C'_{\pi,u}{A'_{\pi}}^{i-1}B'_{\pi,z}z_{t-i}+C'_{\pi,u}B'_{\pi,w}w_{t-1}+\sum_{i=2}^{t-1} C'_{\pi,u}{A'_{\pi}}^{i-1}B'_{\pi,w}w_{t-i}\\
&=D_\pi z_t +\sum_{i=1}^{t-1}C'_{\pi,u}{A'_{\pi}}^{i-1}B'_{\pi,z}z_{t-i}+D_\pi C w_{t-1}+\sum_{i=2}^{t-1} C'_{\pi,u}{A'_{\pi}}^{i-1}B'_{\pi,w}w_{t-i}\\
\end{align*}
Note that ${A'_{\pi}}^{i-1}B'_{\pi,w}$ is equal to $\begin{bmatrix}
A + BD_\pi C \\
B_{\pi}C 
\end{bmatrix}$. Based on the definition of $A'_\pi$ in  (\ref{eq:LDC}), we restate $A'_\pi$ as follows,
\begin{align*}
A'_\pi=\begin{bmatrix}
A+BD_\pi C & BC_{\pi}\\
B_{\pi}C & A_{\pi}
\end{bmatrix}=\begin{bmatrix}
BD_\pi C & BC_{\pi}\\
B_{\pi}C & A_{\pi}
\end{bmatrix}+\begin{bmatrix}
A & 0_{n\times s}\\
0_{s\times n} & 0_{s\times s}
\end{bmatrix}
\end{align*}
For any given bounded matrices $A'_\pi$ and $A$, and any integer $i>0$, we have
\begin{align*}
{A'_\pi}^i\!\!=\!\!\begin{bmatrix}
A+BD_\pi C \!&\! BC_{\pi}\\
B_{\pi}C \!&\! A_{\pi}
\end{bmatrix}^i&=\begin{bmatrix}
A+BD_\pi C \!&\! BC_{\pi}\\
B_{\pi}C \!&\! A_{\pi}
\end{bmatrix}^{i-1}\begin{bmatrix}
BD_\pi C \!&\! BC_{\pi}\\
B_{\pi}C \!&\! A_{\pi}
\end{bmatrix}\!+\!\begin{bmatrix}
A+BD_\pi C \!&\! BC_{\pi}\\
B_{\pi}C \!&\! A_{\pi}
\end{bmatrix}^{i-1}\begin{bmatrix}
A \!\!&\!\! 0_{n\times s}\\
0_{s\times n} \!\!&\!\! 0_{s\times s}
\end{bmatrix}\\
&=\begin{bmatrix}
A+BD_\pi C & BC_{\pi}\\
B_{\pi}C & A_{\pi}
\end{bmatrix}^{i-1}\begin{bmatrix}
BD_\pi C & BC_{\pi}\\
B_{\pi}C & A_{\pi}
\end{bmatrix}\\
&\quad\quad\quad+\begin{bmatrix}
A+BD_\pi C & BC_{\pi}\\
B_{\pi}C & A_{\pi}
\end{bmatrix}^{i-2}\begin{bmatrix}
BD_\pi C & BC_{\pi}\\
B_{\pi}C & A_{\pi}
\end{bmatrix}\begin{bmatrix}
A & 0_{n\times s}\\
0_{s\times n} & 0_{s\times s}
\end{bmatrix}\\
&\quad\quad\quad\quad\quad\quad+\begin{bmatrix}
A+BD_\pi C \!&\! BC_{\pi}\\
B_{\pi}C \!&\! A_{\pi}
\end{bmatrix}^{i-2}\begin{bmatrix}
A^2 & 0_{n\times s}\\
0_{s\times n} & 0_{s\times s}
\end{bmatrix}\\
&\quad\vdots\\
&=\begin{bmatrix}
A^i & 0_{n\times s}\\
0_{s\times n} & 0_{s\times s}
\end{bmatrix}+ \sum_{j=1}^i{A'_\pi}^{j-1}\begin{bmatrix}
BD_\pi C & BC_{\pi}\\
B_{\pi}C & A_{\pi}
\end{bmatrix}\begin{bmatrix}
A^{i-j} & 0_{n\times s}\\
0_{s\times n} & 0_{s\times s}
\end{bmatrix}
\end{align*}
We use this decomposition to relate $u^\pi_t$ and $u^\Mcontrol_t$. 
Now considering ${A'_{\pi}}^{i-1}B'_{\pi,w}$, for $i-1>0$ we have 

\begin{align*}
{A'_\pi}^{i-1}B'_{\pi,w}&=\begin{bmatrix}
A^{i-1} \\
0_{s\times n}
\end{bmatrix}+ \sum_{j=1}^{i-1}{A'_\pi}^{j-1}\begin{bmatrix}
BD_\pi C & BC_{\pi}\\
B_{\pi}C & A_{\pi}
\end{bmatrix}\begin{bmatrix}
A^{i-1-j} \\
0_{s\times n}
\end{bmatrix}
=\begin{bmatrix}
A^{i-1} \\
0_{s\times n}
\end{bmatrix}+ \sum_{j=1}^{i-1}{A'_\pi}^{j-1}B'_{\pi,z}
CA^{i-1-j} \\
\end{align*}
Using this equality in the derivation of $u_t^\pi$ we derive,

\begin{align*}
u_t^\pi &=D_\pi z_t +\sum_{i=1}^{t-1}C'_{\pi,u}{A'_{\pi}}^{i-1}B'_{\pi,z}z_{t-i}+D_\pi C w_{t-1}\\
&\quad\quad\quad\quad\quad\quad+\sum_{i=2}^{t-1}\begin{bmatrix}
D_{\pi}C & C_{\pi}
\end{bmatrix}\begin{bmatrix}
A^{i-1} \\
0_{s\times n}
\end{bmatrix} w_{t-i}+\sum_{i=2}^{t-1} C'_{\pi,u}\sum_{j=1}^{i-1}{A'_\pi}^{j-1}B'_{\pi,z}CA^{i-1-j}w_{t-i}\\
&=D_\pi z_t +\sum_{i=1}^{t-1}C'_{\pi,u}{A'_{\pi}}^{i-1}B'_{\pi,z}z_{t-i}+\sum_{i=1}^{t-1}D_{\pi}C
A^{i-1}
 w_{t-i}+\sum_{i=2}^{t-1}\sum_{j=1}^{i-1}C'_{\pi,u}{A'_\pi}^{j-1}B'_{\pi,z}CA^{i-1-j}w_{t-i}\\
\end{align*}
Note that $\nature_t(\Markov)=z_t+\sum_{i=1}^{t-1}CA^{t-i-1}w_i=z_t+\sum_{i=1}^{t-1}CA^{i-1}w_{t-i}$. Inspired by this expression, we rearrange the previous sum as follows:

\begin{align*}
u_t^\pi &=D_\pi \left(z_t+\sum_{i=1}^{t-1}C
A^{i-1}
 w_{t-i}\right) +\sum_{i=1}^{t-1}C'_{\pi,u}{A'_{\pi}}^{i-1}B'_{\pi,z}z_{t-i}+\sum_{i=2}^{t-1}\sum_{j=1}^{i-1}C'_{\pi,u}{A'_\pi}^{j-1}B'_{\pi,z}CA^{i-1-j}w_{t-i}\\
&=D_\pi \left(z_t+\sum_{i=1}^{t-1}C
A^{i-1}
 w_{t-i}\right) +\sum_{i=1}^{t-1}C'_{\pi,u}{A'_{\pi}}^{i-1}B'_{\pi,z}z_{t-i}+\sum_{j=1}^{t-2}\sum_{i=j+1}^{t-1}C'_{\pi,u}{A'_\pi}^{j-1}B'_{\pi,z}CA^{i-1-j}w_{t-i}\\
&=D_\pi \left(z_t+\sum_{i=1}^{t-1}C
A^{i-1}
 w_{t-i}\right) +\sum_{i=1}^{t-1}C'_{\pi,u}{A'_{\pi}}^{i-1}B'_{\pi,z}z_{t-i}+\sum_{j=1}^{t-2}C'_{\pi,u}{A'_\pi}^{j-1}B'_{\pi,z}\sum_{i=1}^{t-j-1}CA^{t-j-i-1}w_{i}\\
&=D_\pi \nature_t +\sum_{i=1}^{t-1}C'_{\pi,u}{A'_{\pi}}^{i-1}B'_{\pi,z}\nature_{t-i}
\end{align*}
Now setting $M^{[0]}=D_\pi$, and $M^{[i]}=C'_{\pi,u}{A'_{\pi}}^{i-1}B'_{\pi,z}$ for all $0<i<H'$, we conclude that for any \LDC policy $\pi\in\Pi$, there exists at least one length $H'$ \DFC policy $\Mcontrol(H')$ such that 
\begin{align*}
    u^\pi_t - u^\Mcontrol_t = \sum_{i=H'}^{t}C'_{\pi,u}{A'_{\pi}}^{i-1}B'_{\pi,z}\nature_{t-i}   
\end{align*}
Using Cauchy Schwarz inequality we have
\begin{align*}
    \|u^\pi_t - u^\Mcontrol_t\| \leq \left \|\sum_{i=H'}^{t}C'_{\pi,u}{A'_{\pi}}^{i-1}B'_{\pi,z}\nature_{t-i} \right \|\leq \psi(H') \kappa_\nature 
\end{align*}
which states the first half of the Lemma. 

Using the definition of $y_t^\pi$ in (\ref{eq:LDC}), we have
\begin{align*}
    y_t^\pi = z_t + \sum_{i=1}^{t-1} CA^{t-i-1}w_{i} + \sum_i^{t-1} G^{[i]} u^\pi_{t-i}.
\end{align*}
Similarly for $y^\Mcontrol_t$ we have,
\begin{align*}
    y^\Mcontrol_t = z_t + \sum_{i=1}^{t-1} CA^{t-i-1}w_{i} + \sum_i^{t-1} G^{[i]} u^\pi_{t-i}.
\end{align*}
Subtracting these two equations, we derive,
\begin{align*}
    y^\pi_t - y^\Mcontrol_t = \sum_i^{t-1} G^{[i]} u^\pi_{t-i}-\sum_i^{t-1} G^{[i]} u^\Mcontrol_{t-i} =\sum_i^{t-1} G^{[i]} (u^\pi_{t-i}-u^\Mcontrol_{t-i})
\end{align*}
resulting in
\begin{align*}
    \|y^\pi_t - y^\Mcontrol_t\| \leq \psi(H') \kappa_\Markov\kappa_\nature 
\end{align*}
which states the second half of the Lemma. 
\end{proof}
This lemma further entails that any stabilizing \LDC can be well approximated by a \DFC that belongs to the following set of \DFC{}s,
\begin{equation*}
    \Mcontrolset \left( H', \kappa_\psi \right) = \Big \{\Mcontrol(H'):=\lbrace M^{[i]} \rbrace_{i=0}^{H'-1} : \sum\nolimits_{i\geq 0}^{H'-1}\|M^{[i]}\|\leq \kappa_\psi \Big\},
\end{equation*}
indicating that using the class of \DFC policies as an approximation to \LDC policies is justified. 

\section{Model Parameters Identification Procedure, \textsc{{SysId}}\xspace }\label{apx:Identification}
Algorithm \ref{SYSID} gives the model parameters identification algorithm, \Sys, that is executed after recovering $\modelearni$ in the beginning of each epoch $i$. \Sys is similar to Ho-Kalman method~\citep{ho1966effective} and estimates the system parameters from $\modelearni$. 

First of all, notice that $\modelearn = [\mathbf{F}, \mathbf{G}]$ where 
\begin{align*}
    \mathbf{F} &=  \left[CF, \enskip C\Bar{A}F, \enskip \ldots, \enskip C \Bar{A}^{\Hes-1} F \right] \in \mathbb{R}^{m \times m\Hes}, \\
    \mathbf{G} &= \left[ CB, \!\enskip C\Bar{A}B, \enskip \ldots, \enskip C \Bar{A}^{\Hes-1} B \right] \in \mathbb{R}^{m \times p\Hes}.
\end{align*}
Given the estimate for the truncated ARX model
\begin{align*}
   \modelearni = [\mathbf{\hat{F}_{i,1}},\ldots, \mathbf{\hat{F}_{i,\Hes}}, \mathbf{\hat{G}_{i,1}},\ldots, \mathbf{\hat{G}_{i,\Hes}}],
\end{align*}
where $\mathbf{\hat{F}_{i,j}}$ is the $j$'th $m \times m$ block of $\mathbf{\hat{F}_{i}}$, and $\mathbf{\hat{G}_{i,j}}$ is the $j$'th $m \times p$ block of $\mathbf{\hat{G}_{i}}$ for all $1 \leq j \leq \Hes$. \Sys constructs two $d_1 \times (d_2+1)$ Hankel matrices $\mathbf{\mathcal{H}_{\hat{F}_i}}$ and $\mathbf{\mathcal{H}_{\hat{G}_i}}$ such that $(j,k)$th block of Hankel matrix is $\mathbf{\hat{F}_{i,(j+k-1)}}$ and $\mathbf{\hat{G}_{i,(j+k-1)}}$ respectively. Then, it forms the following matrix $\hat{\mathcal{H}}_i$.
\begin{align*}
    \hat{\mathcal{H}}_i = \left[ \mathbf{\mathcal{H}_{\hat{F}_i}}, \enskip \mathbf{\mathcal{H}_{\hat{G}_i}} \right].
\end{align*}
Recall that from $(A,F)$-controllability of $\Theta$, we have that $\| T \Bar{A} T^{-1}\| \leq \upsilon <1$ for some similarity transformation $T$ and the dimension of latent state, $n$, is the order of the system for the observable and controllable system. For $\Hes\geq \max \left\{2n+1, \frac{\log(c_H T^2 \sqrt{m} / \sqrt{\lambda})}{\log(1/\upsilon)} \right\}$ for some problem dependent constant $c_H$, we can pick $d_1 \geq n$ and $d_2 \geq n$ such $d_1+d_2+1 = \Hes$. This guarantees that the system identification problem is well-conditioned. Using  Definition \ref{c_o_def}, if the input to the \Sys was $\modelearn = [\mathbf{F}, \mathbf{G}]$ then constructed Hankel matrix, $\mathcal{H}$ would be rank $n$, 
\begin{align*}
     \mathcal{H} &=  [ C^\top, ~\ldots, \enskip (C\Bar{A}^{d_1-1}) ^\top] ^\top [F,~\ldots,~ \Bar{A}^{d_2}F,~B,~\ldots,~\Bar{A}^{d_2}B]\\
     &=\mathbf{O}(\bar{A},C,d_1) \enskip [\mathbf{C}(\bar{A},F,d_2+1),\quad \Bar{A}^{d_2}F, \quad \mathbf{C}(\bar{A},B,d_2+1), \quad \Bar{A}^{d_2}B] \\
     &= \mathbf{O}(\bar{A},C,d_1) \enskip [F,\quad \Bar{A}\mathbf{C}(\bar{A},F,d_2+1), \quad B, \quad \Bar{A}\mathbf{C}(\bar{A},B,d_2+1)].
\end{align*}
where for all $k\geq n$, $\mathbf{C}(A,B,k)$ defines the extended $(A, B)$ controllability matrix and $\mathbf{O}(A,C,k)$ defines the extended $(A, C)$ observability matrix. Notice that $\modelearn$ and $\mathcal{H}$ are uniquely identifiable for a given system $\Theta$, whereas for any invertible $\mathbf{T}\in \R^{n \times n}$, the system resulting from
\begin{align*}
    A' = \mathbf{T}^{-1}A\mathbf{T},~ B' = \mathbf{T}^{-1}B,~ C' = C \mathbf{T},~ F' = \mathbf{T}^{-1}F 
\end{align*}
gives the same $\modelearn$ and $\mathcal{H}$. Similar to Ho-Kalman algorithm, \Sys computes the SVD of $\modelearni$ and estimates the extended observability and controllability matrices and eventually system parameters up to similarity transformation. To this end, \Sys constructs $\hat{\mathcal{H}}_i^-$ by discarding $(d_2 + 1)$th and $(2d_2 + 2)$th block columns of $\hat{\mathcal{H}}_i$, \textit{i.e.} if it was $\mathcal{H}$ then we have,
\begin{align*}
    \mathcal{H}^- = \mathbf{O}(\bar{A},C,d_1) \enskip [\mathbf{C}(\bar{A},F,d_2+1), \quad \mathbf{C}(\bar{A},B,d_2+1)].
\end{align*}
The \Sys procedure then calculates $\hat{\mathcal{N}}_i$, the best rank-$n$ approximation of $\hat{\mathcal{H}}_i^-$, obtained by setting its all but top $n$ singular values to zero. The estimates of $\mathbf{O}(\bar{A},C,d_1)$,  $\mathbf{C}(\bar{A},F,d_2+1)$ and $\mathbf{C}(\bar{A},B,d_2+1)$ are given as 
\begin{equation*}
    \hat{\mathcal{N}}_i = \mathbf{U_i} \mathbf{\Sigma_i}^{1/2}~ \mathbf{\Sigma_i}^{1/2}\mathbf{V_i}^\top =  \mathbf{\hat{O}_i}(\bar{A},C,d_1) \enskip [\mathbf{\hat{C}_i}(\bar{A},F,d_2+1), \quad \mathbf{\hat{C}_i}(\bar{A},B,d_2+1)].
\end{equation*}
From these estimates \Sys recovers $\hat{C}_i$ as the first $m \times n$ block of $\mathbf{\hat{O}_i}(\bar{A},C,d_1)$, $\hat{B}_i$ as the first $n \times p$ block of $\mathbf{\hat{C}_i}(\bar{A},B,d_2+1)$ and $\hat{F}_i$ as the first $n \times m$ block of $\mathbf{\hat{C}_i}(\bar{A},F,d_2+1)$. Let $\hat{\mathcal{H}}_i^+$ be the matrix obtained by discarding $1$st and $(d_2+2)$th block columns of $\hat{\mathcal{H}}_i$, \textit{i.e.} if it was $\mathcal{H}$ then
\begin{align*}
    \mathcal{H}^+ = \mathbf{O}(\bar{A},C,d_1) \enskip \Bar{A} \enskip [\mathbf{C}(\bar{A},F,d_2+1), \quad \mathbf{C}(\bar{A},B,d_2+1)].
\end{align*}
Therefore, \Sys recovers $\hat{\Bar{A}}_i$ as,
\begin{align*}
    \hat{\Bar{A}}_i = \mathbf{\hat{O}_i}^\dagger(\bar{A},C,d_1) \enskip \hat{\mathcal{H}}_t^+ \enskip [\mathbf{\hat{C}_t}(\bar{A},F,d_2+1), \quad \mathbf{\hat{C}_t}(\bar{A},B,d_2+1)]^\dagger.
\end{align*}
Using the definition of $\Bar{A} = A - FC$, the algorithm obtains $\hat{A}_t = \hat{\Bar{A}}_t + \hat{F}_t \hat{C}_t$. 

\begin{algorithm}[H] 
 \caption{\Sys}
  \begin{algorithmic}[1] 
  \State {\bfseries Input:} $\modelearni$, $\Hes$, system order $n$, $d_1, d_2$ such that $d_1 + d_2 + 1 = \Hes$ 
  \State Form two $d_1 \times (d_2+1)$ Hankel matrices $\mathbf{\mathcal{H}_{\hat{F}_i}}$ and $\mathbf{\mathcal{H}_{\hat{G}_i}}$  from $\modelearni = [\mathbf{\hat{F}_{i,1}},\ldots, \mathbf{\hat{F}_{i,\Hes}}, \mathbf{\hat{G}_{i,1}},\ldots, \mathbf{\hat{G}_{i,\Hes}}],$ and construct $\hat{\mathcal{H}}_i = \left[ \mathbf{\mathcal{H}_{\hat{F}_i}}, \enskip \mathbf{\mathcal{H}_{\hat{G}_i}} \right] \in \R^{md_1 \times (m+p)(d_2+1)}$
  \State Obtain $\hat{\mathcal{H}}_i^-$ by discarding $(d_2 + 1)$th and $(2d_2 + 2)$th block columns of $\hat{\mathcal{H}}_i$
  \State Using SVD obtain $\hat{\mathcal{N}}_i \in \R^{m d_1 \times (m+p) d_2}$, the best rank-$n$ approximation of $\hat{\mathcal{H}}_i^-$
  \State Obtain  $\mathbf{U_i},\mathbf{\Sigma_i},\mathbf{V_i} = \text{SVD}(\hat{\mathcal{N}}_i)$
  \State Construct $\mathbf{\hat{O}_i}(\bar{A},C,d_1) = \mathbf{U_i}\mathbf{\Sigma_t}^{1/2} \in \R^{md_1 \times n}$
  \State Construct $[\mathbf{\hat{C}_i}(\bar{A},F,d_2+1), \enskip \mathbf{\hat{C}_i}(\bar{A},B,d_2+1)] = \mathbf{\Sigma_i}^{1/2}\mathbf{V_i} \in \R^{n \times (m+p)d_2}$
  \State Obtain $\hat{C}_i\in \R^{m\times n}$, the first $m$ rows of $\mathbf{\hat{O}_i}(\bar{A},C,d_1)$
  \State Obtain $\hat{B}_i\in \R^{n\times p}$, the first $p$ columns of $\mathbf{\hat{C}_i}(\bar{A},B,d_2+1)$
  \State Obtain $\hat{F}_i\in \R^{n\times m}$, the first $m$ columns of $\mathbf{\hat{C}_i}(\bar{A},F,d_2+1)$
  \State Obtain $\hat{\mathcal{H}}_i^+$ by discarding $1$st and $(d_2+2)$th block columns of $\hat{\mathcal{H}}_i$
  \State Obtain $\hat{\Bar{A}}_i = \mathbf{\hat{O}_i}^\dagger(\bar{A},C,d_1) \enskip \hat{\mathcal{H}}_i^+ \enskip [\mathbf{\hat{C}_i}(\bar{A},F,d_2+1), \quad \mathbf{\hat{C}_i}(\bar{A},B,d_2+1)]^\dagger$
  \State Obtain $\hat{A}_i = \hat{\Bar{A}}_i + \hat{F}_i \hat{C}_i$
  \end{algorithmic}
 \label{SYSID}  
\end{algorithm}

\section{\textsc{{AdaptOn}}\xspace}\label{apx:adapton}

\begin{algorithm}[H]
\caption{\alg}
  \begin{algorithmic}[1]
 \State \textbf{Input:} $T$, $H$, $H'$, $\Tburn$, $\Mcontrolset$ \\

 ------ \textsc{\small{Warm-Up}} ------------------------------------------------------------------------------------
\For{$t = 1, \ldots, \Tburn$}
\State Deploy $u_t \!\sim\! \mathcal{N}(0,\sigma_u^2 I)$
\EndFor
\State Store $\mathcal{D}_\Tburn \!=\! \lbrace y_t,u_t \rbrace_{t=1}^{\Tburn}$, set $t_1\!=\!\Tbase\!=\!\Tburn$, $t\!=\!\Tbase\!+\!1$, and $\Mcontrol_t$ as any member of $\Mcontrolset$\\
------ \textsc{\small{Adaptive Control}} --------------------------------------------------------------------------
\For{$i = 1, 2, \ldots$}
    \State Solve (\ref{new_lse}) using $\mathcal{D}_{t}$, estimate $\hat{A}_i, \hat{B}_i, \hat{C}_i$ using \Sys(Alg. \ref{SYSID}) and construct $\wh \Markov_i(H)$  
    \State Compute $\nature_\tau(\wh \Markov_i):=y_\tau - \sum_{j=1}^{H} \wh G_i^{[j]} u_{\tau-j}$, $\forall \tau\leq t$ 
    \While{ $t\leq t_i + 2^{i-1}\Tbase~~~ \textit{and} ~~t\leq T$}
        \State Observe $y_t$, and compute $\nature_t(\wh \Markov_i):=y_t - \sum_{j=1}^{H} \wh G_i^{[j]} u_{t-j}$
        \State Commit to $u_t = \sum_{j=0}^{H'-1}M_t^{[j]} \nature_{t-j}(\wh \Markov_i)$, observe $\ell_t$, and pay a cost of $\ell_t(y_t,u_t)$
        \State Update $\Mcontrol_{t+1}=\proj_\Mcontrolset\left(\Mcontrol_t-\eta_t\nabla f_t\left(\Mcontrol_t,\wh \Markov_i\right)
        \right)$,~ $\mathcal{D}_{t+1}=\mathcal{D}_{t}\cup\lbrace y_t,u_t\rbrace$, set $t\!=\!t\!+\!1$
    \EndWhile
    \State $t_{i+1}=t_i + 2^{i-1}\Tbase$
\EndFor
  \end{algorithmic}
 \label{algo} 
\end{algorithm}

\section{Proof of Theorem \ref{theo:sysid}}
\label{apx:generalsysidproof}
In this section, we provide the proof of Theorem \ref{theo:sysid} with precise expressions. In Appendix \ref{apx:selfnorm}, we show the self-normalized error bound on the (\ref{new_lse}), Theorem \ref{theo:closedloopid}. In Appendix \ref{apx:2normbound}, assuming persistence of excitation, we convert the self-normalized bound into a Frobenius norm bound to be used for parameter estimation error bounds in Appendix \ref{apx:modelestbound}, Theorem \ref{ConfidenceSets}. Finally, we consider the Markov parameter estimates constructed via model parameter estimates in Appendix \ref{apx:markovestimate}, which concludes the proof of Theorem \ref{theo:sysid}.

\subsection{Self-Normalized Bound on Finite Sample Estimation Error of (\ref{new_lse})} \label{apx:selfnorm}
First consider the effect of truncation bias term, $C\Bar{A}^{\Hes} x_{t-\Hes}$.
Notice that $\Bar{A}$ is stable due to $(A,F)$-controllability of $\Theta$~\citep{kailath2000linear}, \textit{i.e.,} $\| T \Bar{A} T^{-1}\| \leq \upsilon <1$ for some similarity transformation $T$. Thus, $C\Bar{A}^{\Hes} x_{t-\Hes}$ is order of $\upsilon^H$. In order to get consistent estimation, for some problem dependent constant $c_H$, we set $\Hes \geq \frac{\log(c_H T^2 \sqrt{m} / \sqrt{\lambda})}{\log(1/\upsilon)}$, resulting in a negligible bias term of  order $1/T^2$. Using this we first obtain a self-normalized finite sample estimation error of (\ref{new_lse}):

\begin{theorem}[Self-normalized Estimation Error]
\label{theo:closedloopid}
Let $\wh{\mathcal{G}}_{\mathbf{y}}$ be the solution to (\ref{new_lse}) at time $\tau$. For the given choice of $\Hes$, define 
\begin{align*}
    V_\tau = \lambda I + \sum_{i=\Hes}^{\tau} \phi_i \phi_i^\top.
\end{align*}
Let $\|\mathbf{M}\|_F \leq S$. For $\delta \in (0,1)$, with probability at least $1-\delta$, for all $t\leq \tau$, $\modelearn$ lies in the set $\mathcal{C}_{\modelearn}(t)$, where 
\begin{equation*}
    \mathcal{C}_{\modelearn}(t) = \{ \modelearn': \Tr((\wh{\mathcal{G}}_{\mathbf{y}} - \modelearn')V_t(\wh{\mathcal{G}}_{\mathbf{y}}-\modelearn')^{\top}) \leq \beta_\tau \},
\end{equation*}
for $\beta_\tau$ defined as follows,
\begin{equation*}
    \beta_\tau = \left(\sqrt{m\| C \sig  C^\top + \sigma_z^2 I \| \log  \left(\frac{\operatorname{det}\left(V_\tau \right)^{1 / 2}}{\delta \operatorname{det}(\lambda I)^{1 / 2} }\right)} + S\sqrt{\lambda} +\frac{\tau \sqrt{\Hes}}{T^2} \right)^2.
\end{equation*}
\end{theorem}
\begin{proof}

For a single input-output trajectory $\{y_t, u_t \}^{\tau}_{t=1}$, where $\tau \leq T$, using the representation in (\ref{arx_rollout}), we can write the following for the given system,
\begin{equation} 
    Y_\tau = \Phi_\tau \modelearn^{\top} + \underbrace{ E_\tau + N_\tau}_{ \text{Noise}} \qquad \text{where}
\end{equation}
\vspace{-0.4cm}
\begin{align*}
    \modelearn & = \left[CF,\enskip C\Bar{A}F,\enskip \ldots, \enskip C \Bar{A}^{\Hes-1}F, \enskip CB, \enskip C\Bar{A}B, \enskip \ldots, \enskip C\Bar{A}^{\Hes-1} B \right] \in \R^{m \times (m+p)\Hes} \\
    Y_\tau &= \left[ y_{\Hes},~y_{\Hes+1},~ \ldots,~y_\tau \right]^\top \in \mathbb{R}^{(\tau-\Hes) \times m} \\
    \Phi_\tau &= \left[ \phi_{\Hes},~\phi_{\Hes+1},~\ldots,~\phi_\tau \right]^\top \in \mathbb{R}^{(\tau-\Hes) \times (m+p)\Hes} \\
    E_\tau &= \left[ e_{\Hes},~e_{\Hes+1},~\ldots,~e_\tau \right]^\top \in \mathbb{R}^{(\tau-\Hes) \times m} \\
    N_\tau\!&=\! \left[C \Bar{A}^{\Hes}\!x_{0},~C \Bar{A}^{\Hes}\!x_{1},\ldots,C \Bar{A}^{\Hes} \!x_{\tau-\Hes} \right]^\top \!\!\in\! \mathbb{R}^{(\tau-\Hes) \times m}. 
\end{align*}
$\wh{\mathcal{G}}_{\mathbf{y}}$ is the solution to (\ref{new_lse}), \textit{i.e.,} $\min_X \|Y_{\tau} - \Phi_{\tau} X^{\top}\|^2_F + \lambda \|X\|^2_F $. Hence, we get $\wh{\mathcal{G}}_{\mathbf{y}}^\top = (\Phi_\tau^\top \Phi_\tau + \lambda I)^{-1} \Phi_\tau^\top Y_\tau $.

\begin{align*}
    \wh{\mathcal{G}}_{\mathbf{y}}  & = \big[(\Phi_\tau^\top \Phi_\tau + \lambda I)^{-1} \Phi_\tau^\top(\Phi_\tau \modelearn^{\top} + E_\tau + N_\tau)\big]^{\top} \\
    &= \big[(\Phi_\tau^\top \Phi_\tau + \lambda I)^{-1} \Phi_\tau^\top \left(E_\tau + N_\tau \right) + (\Phi_\tau^\top \Phi_\tau + \lambda I)^{-1} \Phi_\tau^\top \Phi_\tau \modelearn^{\top} \\
    &\qquad + \lambda (\Phi_\tau^\top \Phi_\tau + \lambda I)^{-1}\modelearn^{\top} - \lambda (\Phi_\tau^\top \Phi_\tau + \lambda I)^{-1}\modelearn^{\top} \big]^{\top} \\
    &= \big[(\Phi_\tau^\top \Phi_\tau + \lambda I)^{-1} \Phi_\tau^\top E_\tau + (\Phi_\tau^\top \Phi_\tau + \lambda I)^{-1} \Phi_\tau^\top N_\tau + \modelearn^{\top} - \lambda (\Phi_\tau^\top \Phi_\tau + \lambda I)^{-1}\modelearn^{\top} \big]^{\top}
\end{align*}
Using $\wh{\mathcal{G}}_{\mathbf{y}}$, we get 
\begin{align}
    &|\Tr(X(\wh{\mathcal{G}}_{\mathbf{y}}-\modelearn)^{\top})| \\
    &= |\Tr(X (\Phi_\tau^\top \Phi_\tau + \lambda I)^{-1} \Phi_\tau^\top E_\tau) + \Tr(X (\Phi_\tau^\top \Phi_\tau + \lambda I)^{-1} \Phi_\tau^\top N_\tau)   - \lambda \Tr(X (\Phi_\tau^\top \Phi_\tau + \lambda I)^{-1}\modelearn^{\top})| \nonumber \\
    &\leq |\Tr(X (\Phi_\tau^\top \Phi_\tau + \lambda I)^{-1} \Phi_\tau^\top E_\tau)| + |\Tr(X (\Phi_\tau^\top \Phi_\tau + \lambda I)^{-1} \Phi_\tau^\top N_\tau)| + \lambda |\Tr(X (\Phi_\tau^\top \Phi_\tau + \lambda I)^{-1}\modelearn^{\top})| \nonumber \\
    &\leq \sqrt{\Tr(X(\Phi_\tau^\top \Phi_\tau + \lambda I)^{-1} X^{\top})\Tr(E_\tau^{\top}\Phi_\tau(\Phi_\tau^{\top}\Phi_\tau + \lambda I)^{-1}\Phi_\tau^{\top}E_\tau )}  \label{CS_trace} \\
    &\quad + \sqrt{\Tr(X(\Phi_\tau^\top \Phi_\tau + \lambda I)^{-1} X^{\top})\Tr(N_\tau^{\top}\Phi_\tau(\Phi_\tau^{\top}\Phi_\tau + \lambda I)^{-1}\Phi_\tau^{\top}N_\tau )} \nonumber \\
    &\quad + \lambda \sqrt{\Tr(X(\Phi_\tau^{\top}\Phi_\tau + \lambda I)^{-1}X^{\top})\Tr(\modelearn(\Phi_\tau^{\top}\Phi_\tau + \lambda I)^{-1}\modelearn^{\top})} \nonumber \\
    &= \sqrt{\Tr(X(\Phi_\tau^{\top}\Phi_\tau + \lambda I)^{-1}X^{\top})} \enskip \times \nonumber \\
    &\bigg[\!\sqrt{\Tr(E_\tau^{\top}\Phi_\tau(\Phi_\tau^{\top}\Phi_\tau \!+\! \lambda I)^{-1}\Phi_\tau^{\top}E_\tau )} \!+\!\! \sqrt{\Tr(N_\tau^{\top}\Phi_\tau(\Phi_\tau^{\top}\Phi_\tau \!+\! \lambda I)^{-1}\Phi_\tau^{\top}N_\tau )} \!+\! \lambda \sqrt{\Tr(\modelearn(\Phi_\tau^{\top}\Phi_\tau \!+\! \lambda I)^{-1}\modelearn^{\top})} \bigg] \nonumber
\end{align}
where (\ref{CS_trace}) follows from $|\Tr(ABC^{\top})| \leq \sqrt{\Tr(ABA^{\top})\Tr(CBC^{\top}) }$ for positive definite B due to Cauchy Schwarz (weighted inner-product).  For $X = (\wh{\mathcal{G}}_{\mathbf{y}}-\modelearn)(\Phi_\tau^{\top}\Phi_\tau + \lambda I)$, we get
\begin{align}
  \sqrt{\Tr((\wh{\mathcal{G}}_{\mathbf{y}}-\modelearn)V_\tau(\wh{\mathcal{G}}_{\mathbf{y}}-\modelearn)^{\top})} &\leq  \sqrt{\Tr(E_\tau^{\top}\Phi_\tau V_\tau^{-1}\Phi_\tau^{\top}E_\tau )} + \sqrt{\Tr(N_\tau^{\top}\Phi_\tau V_\tau^{-1}\Phi_\tau^{\top}N_\tau )} + \sqrt{\lambda} \|\modelearn\|_F \label{estimationterms}
\end{align}
where $V_\tau$ is the regularized design matrix at time $\tau$. Let $\max_{i\leq \tau }\|\phi_i \| \leq \Upsilon \sqrt{\Hes} $ and $\max_{\Hes \leq i\leq \tau} \|x_i \| \leq \mathcal{X}$, \textit{i.e.,} in data collection bounded inputs are used. The first term on the right hand side of (\ref{estimationterms}) can be bounded using Theorem \ref{selfnormalized} since $e_t$ is $\| C \sig  C^\top + \sigma_z^2 I \|$-sub-Gaussian vector. Therefore, for $\delta \in (0,1)$, with probability at least $1-\delta$, 
\begin{equation} \label{esti_first}
    \sqrt{\Tr(E_\tau^{\top}\Phi_t V_\tau^{-1}\Phi_\tau^{\top}E_\tau ) } \leq \sqrt{m\| C \sig  C^\top \!\!\!+\! \sigma_z^2 I \| \log \! \left(\!\frac{\operatorname{det}\left(V_\tau\right)^{1 / 2}}{\delta \operatorname{det}(\lambda I)^{1 / 2} }\!\!\right)}
\end{equation}
For the second term,
\begin{align*}
    \sqrt{\Tr(N_\tau^{\top}\Phi_\tau V_\tau^{-1}\Phi_\tau^{\top}N_\tau )} \leq \frac{1}{\sqrt{\lambda}} \|N_\tau^\top \Phi_\tau \|_F &\leq \sqrt{\frac{m}{\lambda}} \left \|\sum_{i=\Hes}^\tau \phi_i (C \Bar{A}^{\Hes}\!x_{i-\Hes})^\top  \right\| \\
    &\leq \tau \sqrt{\frac{m}{\lambda}}  \max_{i\leq \tau} \left\|\phi_i (C \Bar{A}^{\Hes}\!x_{i-\Hes})^\top  \right\| \\
    &\leq \tau \sqrt{\frac{m}{\lambda}} \|C\| \upsilon^{\Hes} \max_{i\leq \tau} \|\phi_i \| \|x_{i-\Hes}\| \\
    &\leq \tau \sqrt{\frac{m}{\lambda}} \|C\| \upsilon^{\Hes} \Upsilon \sqrt{\Hes} \mathcal{X}.
\end{align*}
Picking $\Hes = \frac{2\log(T) + \log(\Upsilon \mathcal{X}) + 0.5\log (m/ \lambda ) + \log(\|C\|)}{\log(1/\upsilon)}$ gives 
\begin{align} \label{esti_second}
    \sqrt{\Tr(N_\tau^{\top}\Phi_\tau V_\tau^{-1}\Phi_\tau^{\top}N_\tau )} \leq \frac{\tau}{T^2} \sqrt{\Hes}.
\end{align}
Combining (\ref{esti_first}) and (\ref{esti_second}) gives the self-normalized estimation error bound state in the theorem.
\end{proof}
\subsection{Frobenius Norm Bound on Finite Sample Estimation Error of (\ref{new_lse}) } \label{apx:2normbound}
Using this result, we have 
\begin{align*}
    \sigma_{\min}(V_\tau)\| \wh{\mathcal{G}}_{\mathbf{y}}-\modelearn \|^2_F &\leq \Tr((\wh{\mathcal{G}}_{\mathbf{y}} - \modelearn)V_t(\wh{\mathcal{G}}_{\mathbf{y}}-\modelearn)^{\top}) \\
    &\leq \left(\sqrt{m\| C \sig  C^\top + \sigma_z^2 I \| \log  \left(\frac{\operatorname{det}\left(V_\tau \right)^{1 / 2}}{\delta \operatorname{det}(\lambda I)^{1 / 2} }\right)} + S\sqrt{\lambda} +\frac{\tau \sqrt{\Hes}}{T^2} \right)^2 
\end{align*}
For persistently exciting inputs, \textit{i.e.,} $\sigma_{\min}(V_\tau) \geq \sigma_\star^2 \tau $ for $\sigma_\star > 0$, using the boundedness of $\phi_i$, \textit{i.e.,} $\max_{i\leq \tau }\|\phi_i \| \leq \Upsilon \sqrt{\Hes} $, we get, 
\begin{align*}
    \| \wh{\mathcal{G}}_{\mathbf{y}}-\modelearn \|_F \leq \frac{\sqrt{m\| C \sig  C^\top + \sigma_z^2 I \|\left( \log(1/\delta) + \frac{\Hes(m+p)}{2} \log  \left(\frac{\lambda(m+p) + \tau \Upsilon^2}{\lambda(m+p) }\right)\right)} + S\sqrt{\lambda} +\frac{\sqrt{\Hes}}{T}}{\sigma_\star \sqrt{\tau}}
\end{align*}
This result shows that under persistent of excitation, the novel least squares problem provides consistent estimates and the estimation error is $\Tilde{\OO}(1/\sqrt{T})$ after $T$ samples.
\subsection{Bound on Model Parameters Estimation Errors} \label{apx:modelestbound}
After estimating $\wh{\mathcal{G}}_{\mathbf{y}}$, we deploy \Sys (Appendix \ref{apx:Identification}) to recover the unknown system parameters. The outline of the algorithm is given in Algorithm \ref{SYSID}. Note that the system is order $n$ and
minimal in the sense that the system cannot be described by a state-space model of order less than $n$. Define $T_{\modelearn}$ such that at $T_{\modelearn}$, $\|\wh{\mathcal{G}}_{\mathbf{y}} - \modelearn\| \leq 1$. Let $T_N = T_{\modelearn} \frac{8\Hes}{\sigma_n^2(\mathcal{H})}$, $T_B = T_{\modelearn} \frac{20n\Hes}{\sigma_n(\mathcal{H})}$. We have the following result on the model parameter estimates:

\begin{theorem}[Model Parameters Estimation Error] \label{ConfidenceSets}
Let $\mathcal{H}$ be the concatenation of two Hankel matrices obtained from $\modelearn$. Let $\bar{A}, \bar{B}, \bar{C}, \bar{F}$ be the system parameters that \Sys provides for $\modelearn$. At time step $t$, let $\hat{A}_t, \hat{B}_t, \hat{C}_t, \hat{F}_t$ denote the system parameters obtained by \Sys using $\wh{\mathcal{G}}_{\mathbf{y}}$. For the described system in the main text $\mathcal{H}$ is rank-$n$ \textit{i.e.,} due to controllability-observability assumption. For $t \geq \max \{ T_{\modelearn}, T_N, T_B \}$, for the given choice of $\Hes$, there exists a unitary matrix $\mathbf{T} \in \R^{n \times n}$ such that, $\bar{\Theta}=(\bar{A}, \bar{B}, \bar{C}, \bar{F}) \in (\mathcal{C}_A \times \mathcal{C}_B \times \mathcal{C}_C \times \mathcal{C}_F) $ where
\begin{align}
    &\mathcal{C}_A(t) = \left \{A' \in \R^{n \times n} : \|\hat{A}_t - \mathbf{T}^\top A' \mathbf{T} \| \leq \beta_A(t) \right\}, \nonumber \\
    &\mathcal{C}_B(t) = \left \{B' \in \R^{n \times p} : \|\hat{B}_t - \mathbf{T}^\top B' \| \leq  \beta_B(t) \right\}, \nonumber \\
    &\mathcal{C}_C(t) = \left\{C' \in \R^{m \times n} : \|\hat{C}_t -  C'  \mathbf{T} \| \leq \beta_C(t) \right\}, \nonumber \\
    &\mathcal{C}_L(t) = \left \{L' \in \R^{p \times m} : \|\hat{F}_t - \mathbf{T}^\top F' \| \leq  \beta_F(t) \right\}, \nonumber
\end{align}
for
\begin{align}
&\beta_A(t) = c_1\left( \frac{\sqrt{n\Hes}(\|\mathcal{H}\| + \sigma_n(\mathcal{H}))}{\sigma_n^2(\mathcal{H})} \right)\|\wh{\mathcal{G}}_{\mathbf{y}} - \modelearn \|, \quad \beta_B(t) = \beta_C(t) = \beta_F(t)  =  \sqrt{\frac{20n\Hes }{\sigma_n(\mathcal{H})}}\|\wh{\mathcal{G}}_{\mathbf{y}} - \modelearn\|, \nonumber
\end{align}
for some problem dependent constant $c_1$.
\end{theorem}
\begin{proof}
The result follows similar steps with \citet{oymak2018non}. The following lemma is from \citet{oymak2018non}, it will be useful in proving error bounds on system parameters and we provide it for completeness. 
\begin{lemma}[\citep{oymak2018non}] \label{hokalmanstability lemma}
$\mathcal{H}$, $\hat{\mathcal{H}}_t$ and $\mathcal{N}, \hat{\mathcal{N}}_t$ satisfies the following perturbation bounds,

\begin{align*} 
\max \left\{\left\|\mathcal{H}^+-\hat{\mathcal{H}}_t^+\right\|,\left\|\mathcal{H}^- -\hat{\mathcal{H}}_t^-\right\|\right\} \leq \|\mathcal{H}-\hat{\mathcal{H}}_t\| &\leq \sqrt{\min \left\{d_{1}, d_{2}+1\right\}}\|\wh{\mathcal{G}}_{\mathbf{y}} - \modelearn\| \\ \|\mathcal{N}-\hat{\mathcal{N}}_t\| \leq 2\left\|\mathcal{H}^- -\hat{\mathcal{H}}_t^-\right\| &\leq 2 \sqrt{\min \left\{d_{1}, d_{2}\right\}}\|\wh{\mathcal{G}}_{\mathbf{y}} - \modelearn\|
\end{align*}
\end{lemma}

The following lemma is a slight modification of Lemma B.1 in \citep{oymak2018non}.

\begin{lemma}[\citep{oymak2018non}]\label{lem:ranknperturb} Suppose $\sigma_{\min}(\mathcal{N}) \geq 2\|\mathcal{N}-\hat{\mathcal{N}}\|$ where $\sigma_{\min }(\mathcal{N})$ is the smallest nonzero singular value (i.e. $n$th largest singular value) of $N$. Let rank n matrices $\mathcal{N}, \hat{\mathcal{N}}$ have singular value decompositions $\mathbf{U} \mathbf{\Sigma} \mathbf{V}^{\top}$ and $\mathbf{\hat{U}} \mathbf{\hat{\Sigma}} \mathbf{\hat{V}}^{\top}$
There exists an $n \times n$ unitary matrix $\mathbf{T}$ so that
\begin{equation*}
\left\|\mathbf{U} \mathbf{\Sigma}^{1/2}-\mathbf{\hat{U}} \mathbf{\hat{\Sigma}}^{1/2} \mathbf{T} \right\|_{F}^{2}+\left\|\mathbf{V} \mathbf{\Sigma}^{1/2}-\mathbf{\hat{V}} \mathbf{\hat{\Sigma}}^{1/2} \mathbf{T} \right\|_{F}^{2} \leq \frac{5n \| \mathcal{N} - \hat{\mathcal{N}}\|^2}{\sigma_n(\mathcal{N}) - \| \mathcal{N} - \hat{\mathcal{N}}\| }
\end{equation*}

\end{lemma}

For brevity, we have the following notation $\mathbf{O} = \mathbf{O}(\bar{A},C,d_1)$,  $\mathbf{C_F} = \mathbf{C}(\bar{A},F,d_2+1)$, $\mathbf{C_B} = \mathbf{C}(\bar{A},B,d_2+1)$, 
$\mathbf{\hat{O}_t} = \mathbf{\hat{O}_t}(\bar{A},C,d_1)$, $\mathbf{\hat{C}_{F_t}} = \mathbf{\hat{C}_t}(\bar{A},F,d_2+1)$,
$\mathbf{\hat{C}_{B_t}} = \mathbf{\hat{C}_t}(\bar{A},B,d_2+1)$. In the definition of $T_N$, we use $\sigma_n(H)$, due to the fact that singular values of submatrices by column partitioning are interlaced, \textit{i.e.} $\sigma_n(\mathbf{N}) = \sigma_n(\mathbf{H}^-) \geq \sigma_n(\mathbf{H})$. 
Directly applying Lemma \ref{lem:ranknperturb} with the condition that for given $t \geq T_N$, $\sigma_{\min}(\mathcal{N}) \geq 2\|\mathcal{N}-\hat{\mathcal{N}}\|$, we can guarantee that there exists a unitary transform $\mathbf{T}$ such that 
\begin{equation} \label{boundonSVD}
    \left\|\mathbf{\hat{O}_t} - \mathbf{O}\mathbf{T}  \right\|_F^2 + \left\|[\mathbf{\hat{C}_{F_t}} \enskip \mathbf{\hat{C}_{B_t}}] -  \mathbf{T}^\top  [\mathbf{C_F} \enskip \mathbf{C_B}] \right\|_F^2 \leq \frac{10n \| \mathcal{N} - \hat{\mathcal{N}}_t\|^2}{\sigma_n(\mathcal{N}) }.
\end{equation}
Since $\hat{C}_t - \bar{C}\mathbf{T}$ is a submatrix of $\mathbf{\hat{O}_t} - \mathbf{O}\mathbf{T}$, $\hat{B}_t - \mathbf{T}^\top \bar{B}$ is a submatrix of $\mathbf{\hat{C}_{B_t}} - \mathbf{T}^\top \mathbf{C_{B}}$ and $\hat{F}_t - \mathbf{T}^\top \bar{F}$ is a submatrix of $\mathbf{\hat{C}_{F_t}} - \mathbf{T}^\top \mathbf{C_{F}}$, we get the same bounds for them stated in (\ref{boundonSVD}). Using Lemma \ref{hokalmanstability lemma}, with the choice of $d_1,d_2 \geq \frac{\Hes}{2}$, we have 
\begin{equation*}
    \| \mathcal{N} - \hat{\mathcal{N}}_t\| \leq \sqrt{2\Hes}\|\wh{\mathcal{G}}_{\mathbf{y}} - \modelearn\|.
\end{equation*}
This provides the advertised bounds in the theorem:
\begin{align*}
    \|\hat{B}_t - \mathbf{T}^\top \bar{B} \|, \|\hat{C}_t - \bar{C}\mathbf{T}\|, \|\hat{F}_t - \mathbf{T}^\top \bar{F}\|  \leq \frac{\sqrt{20n\Hes} \|\wh{\mathcal{G}}_{\mathbf{y}} - \modelearn\| }{\sqrt{\sigma_n(\mathcal{N})}}
\end{align*}

Notice that for $t\geq T_B$, we have all the terms above to be bounded by 1. In order to determine the closeness of $\hat{A}_t$ and $\Bar{A}$ we first consider the closeness of $\hat{\Bar{A}}_t - \mathbf{T}^\top \Bar{\Bar{A}}\mathbf{T}$, where $\Bar{\Bar{A}}$ is the output obtained by \Sys for $\Bar{A}$ when the input is $\modelearn$. Let $X = \mathbf{O}\mathbf{T}$ and $Y = \mathbf{T}^\top  [\mathbf{C_F} \enskip \mathbf{C_B}]$. Thus, we have
\begin{align*}
    \|\hat{\Bar{A}}_t - \mathbf{T}^\top \Bar{\Bar{A}}\mathbf{T} \|_F &= \| \mathbf{\hat{O}_t}^\dagger \hat{\mathcal{H}}_t^+ [\mathbf{\hat{C}_{F_t}} \enskip \mathbf{\hat{C}_{B_t}}]^\dagger - X^\dagger \mathcal{H}^+ Y^\dagger \|_F \\
    &\leq \left \| \left( \mathbf{\hat{O}_t}^\dagger - X^\dagger \right) \hat{\mathcal{H}}_t^+ [\mathbf{\hat{C}_{F_t}} \enskip \mathbf{\hat{C}_{B_t}}]^\dagger \right \|_F + \left \|X^\dagger \left(\hat{\mathcal{H}}_t^+ - \mathcal{H}^+ \right)[\mathbf{\hat{C}_{F_t}} \enskip \mathbf{\hat{C}_{B_t}}]^\dagger \right \|_F\\
    &\quad + \left \|X^\dagger \mathcal{H}^+ \left([\mathbf{\hat{C}_{F_t}} \enskip \mathbf{\hat{C}_{B_t}}]^\dagger - Y^\dagger \right) \right \|_F
\end{align*}
For the first term we have the following perturbation bound \citep{meng2010optimal,wedin1973perturbation},
\begin{align*}
    \|\mathbf{\hat{O}_t}^\dagger - X^\dagger \|_F &\leq \| \mathbf{\hat{O}_t} - X \|_F \max \{ \|X^\dagger \|^2, \| \mathbf{\hat{O}_t}^\dagger \|^2  \} \leq \| \mathcal{N} - \hat{\mathcal{N}}_t\| \sqrt{\frac{10n}{\sigma_n(\mathcal{N}) }} \max \{ \|X^\dagger \|^2, \| \mathbf{\hat{O}_t}^\dagger \|^2  \}
\end{align*}
Since we have $\sigma_{n}(\mathcal{N}) \geq 2\|\mathcal{N}-\hat{\mathcal{N}}\|$, we have $\|\hat{\mathcal{N}}\| \leq 2 \|\mathcal{N}\| $ and $2\sigma_{n}(\hat{\mathcal{N}}) \geq \sigma_{n}(\mathcal{N})$. Thus,
\begin{equation} \label{daggersingular}
    \max \{ \|X^\dagger \|^2, \| \mathbf{\hat{O}_t}^\dagger \|^2  \} = \max \left\{ \frac{1}{\sigma_{n}(\mathcal{N})}, \enskip \frac{1}{\sigma_{n}(\hat{\mathcal{N}})}  \right\} \leq \frac{2}{\sigma_{n}(\mathcal{N})}
\end{equation}
Combining these and following the same steps for $\|[\mathbf{\hat{C}_{F_t}} ~ \mathbf{\hat{C}_{B_t}}]^\dagger \!-\! Y^\dagger \|_F$, we get 
\begin{equation} \label{perturbationbounds}
    \left \|\mathbf{\hat{O}_t}^\dagger - X^\dagger \right \|_F,\enskip \left \|[\mathbf{\hat{C}_{F_t}} ~ \mathbf{\hat{C}_{B_t}}]^\dagger \!-\! Y^\dagger \right \|_F\leq \left \| \mathcal{N} - \hat{\mathcal{N}}_t \right \| \sqrt{\frac{40n}{\sigma_n^{3}(\mathcal{N}) }} 
\end{equation}
The following individual bounds obtained by using (\ref{daggersingular}), (\ref{perturbationbounds}) and triangle inequality: 
\begin{align*}
    \left \| \left( \mathbf{\hat{O}_t}^\dagger - X^\dagger \right) \hat{\mathcal{H}}_t^+ [\mathbf{\hat{C}_{F_t}} \enskip \mathbf{\hat{C}_{B_t}}]^\dagger \right \|_F &\leq \| \mathbf{\hat{O}_t}^\dagger - X^\dagger \|_F \|\hat{\mathcal{H}}_t^+ \| \|[\mathbf{\hat{C}_{F_t}} \enskip \mathbf{\hat{C}_{B_t}}]^\dagger \| \\
    &\leq \frac{4\sqrt{5n} \left \| \mathcal{N} - \hat{\mathcal{N}}_t \right \|}{\sigma_n^{2}(\mathcal{N})} \left( \|\mathcal{H}^+ \| + \|\hat{\mathcal{H}}_t^+ - \mathcal{H}^+ \| \right)  \\
    \left \|X^\dagger \left(\hat{\mathcal{H}}_t^+ - \mathcal{H}^+ \right)[\mathbf{\hat{C}_{F_t}} \enskip \mathbf{\hat{C}_{B_t}}]^\dagger \right \|_F &\leq \frac{2\sqrt{n}\|\hat{\mathcal{H}}_t^+ - \mathcal{H}^+ \|}{\sigma_n(\mathcal{N})}
    \\
    \left \|X^\dagger \mathcal{H}^+ \left([\mathbf{\hat{C}_{F_t}} \enskip \mathbf{\hat{C}_{B_t}}]^\dagger - Y^\dagger \right) \right \|_F &\leq \|X^\dagger \| \|\mathcal{H}^+ \| \|[\mathbf{\hat{C}_{F_t}} \enskip \mathbf{\hat{C}_{B_t}}]^\dagger - Y^\dagger \| \\
    &\leq \frac{2\sqrt{10n} \left \| \mathcal{N} - \hat{\mathcal{N}}_t \right \|}{\sigma_n^{2}(\mathcal{N})}  \|\mathcal{H}^+ \|
\end{align*}
Combining these we get 
\begin{align*}
    \|\hat{\Bar{A}}_t - \mathbf{T}^\top \Bar{\Bar{A}}\mathbf{T} \|_F &\leq \frac{31\sqrt{n}  \|\mathcal{H}^+ \|  \left \| \mathcal{N} - \hat{\mathcal{N}}_t \right \|}{2\sigma_n^{2}(\mathcal{N})}  + \|\hat{\mathcal{H}}_t^+ - \mathcal{H}^+ \| \left( \frac{4\sqrt{5n} \left \| \mathcal{N} - \hat{\mathcal{N}}_t \right \|}{\sigma_n^{2}(\mathcal{N})} + \frac{2\sqrt{n}}{\sigma_n(\mathcal{N})}  \right) \\
    &\leq \frac{31\sqrt{n}  \|\mathcal{H}^+ \|  }{2\sigma_n^{2}(\mathcal{N})} \left \| \mathcal{N} - \hat{\mathcal{N}}_t \right \| +  \frac{13 \sqrt{n}}{2\sigma_n(\mathcal{N})} \|\hat{\mathcal{H}}_t^+ - \mathcal{H}^+ \|
\end{align*}
Now consider $\hat{A}_t = \hat{\Bar{A}}_t + \hat{F}_t\hat{C}_t$. Using Lemma \ref{hokalmanstability lemma},
\begin{align*}
    &\|\hat{A}_t -  \mathbf{T}^\top \Bar{A}\mathbf{T} \|_F \\
    &= \|\hat{\Bar{A}}_t + \hat{F}_t\hat{C}_t - \mathbf{T}^\top \Bar{\Bar{A}}\mathbf{T} - \mathbf{T}^\top \bar{F} \bar{C} \mathbf{T} \|_F \\
    &\leq \|\hat{\Bar{A}}_t - \mathbf{T}^\top \Bar{\Bar{A}}\mathbf{T} \|_F + \|(\hat{F}_t -\mathbf{T}^\top \bar{F}) \hat{C}_t  \|_F + \|\mathbf{T}^\top \bar{F} (\hat{C}_t - \bar{C} \mathbf{T}) \|_F \\
    &\leq \|\hat{\Bar{A}}_t - \mathbf{T}^\top \Bar{\Bar{A}}\mathbf{T} \|_F + \|(\hat{F}_t -\mathbf{T}^\top \bar{F})\|_F \| \hat{C}_t -\bar{C} \mathbf{T} \|_F + \|(\hat{F}_t -\mathbf{T}^\top \bar{F})\|_F \| \bar{C} \| + \|\bar{F}\| \|(\hat{C}_t - \bar{C} \mathbf{T}) \|_F \\
    &\leq \frac{31\sqrt{n}  \|\mathcal{H}^+ \|  }{2\sigma_n^{2}(\mathcal{N})} \! \left \| \mathcal{N} \!-\! \hat{\mathcal{N}}_t \right \| \!+\!  \frac{13 \sqrt{n}}{2\sigma_n(\mathcal{N})} \|\hat{\mathcal{H}}_t^+ \!-\! \mathcal{H}^+ \| \!+\! \frac{10n \| \mathcal{N} \!-\! \hat{\mathcal{N}}_t\|^2}{\sigma_n(\mathcal{N}) } \!+\! (\|\bar{F} \| \!+\! \|\bar{C} \|)\| \mathcal{N} \!-\! \hat{\mathcal{N}}_t\| \sqrt{\frac{10n }{\sigma_n(\mathcal{N}) }  } \\
    &\leq \frac{31\sqrt{2n\Hes}  \|\mathcal{H} \|  }{2\sigma_n^{2}(\mathcal{N})} \! \|\wh{\mathcal{G}}_{\mathbf{y}} - \modelearn\| + \frac{13 \sqrt{n\Hes}}{2\sqrt{2}\sigma_n(\mathcal{N})} \|\wh{\mathcal{G}}_{\mathbf{y}} - \modelearn\| + \frac{20n\Hes \|\wh{\mathcal{G}}_{\mathbf{y}} - \modelearn\|^2}{\sigma_n(\mathcal{N}) } \\
    &\qquad + (\|\bar{F} \| \!+\! \|\bar{C} \|)\|\wh{\mathcal{G}}_{\mathbf{y}} - \modelearn\| \sqrt{\frac{20n\Hes }{\sigma_n(\mathcal{N}) }  }
\end{align*}
\end{proof}

\subsection{Bound on the Markov Parameters Estimation Errors} \label{apx:markovestimate}
Finally, we will consider the Markov parameter estimates that is constructed by using the parameter estimates. From Theorem \ref{ConfidenceSets}, for some unitary matrix $\mathbf{T}$, we denote $\Delta A \coloneqq \| \wh A_t - \mathbf{T}^\top A \mathbf{T} \| $, $\Delta B \coloneqq \| \wh B_t - \mathbf{T}^\top B \| = \| \wh C_t -  C \mathbf{T} \| $. Let $T_A = T_{\modelearn}\frac{4c_1^2 \left( \frac{\sqrt{n\Hes}(\|\mathcal{H}\| + \sigma_n(\mathcal{H}))}{\sigma_n^2(\mathcal{H})} \right)^2 }{(1-\rho(A))^2}$. For $t>\max \{T_A,T_B\}$, $\Delta A \leq \frac{1-\rho(A)}{2}$ and $\Delta B \leq 1$. Using this fact, we have
\begin{align*}
    &\sum_{j\geq 1}^H \|\wh C_t \wh A_t^{j-1} \wh B_t - CA^{j-1}B\|\\
    &\leq \Delta B (\|B \| \!+\! \| C \| \!+\! 1) \!+\! \sum_{i=1}^{H-1} \!\!\Phi(A) \rho^i(A) \Delta B (\|B \| \!+\! \| C \| \!+\! 1) + \|\wh A_t^i \!-\! \mathbf{T}^\top\! A^i \mathbf{T} \| (\|C\| \|B\| \!+\! \|B\| \!+\! \|C\| \!+\! 1 ) \\
    &\leq \left(1+ \frac{\Phi(A)}{1-\rho(A)}\right) \Delta B (\|B \| \!+\! \| C \| \!+\! 1) + \Delta A (\|C\| \|B\| \!+\! \|B\| \!+\! \|C\| \!+\! 1 ) \sum_{i=1}^{H-1} \sum_{j=0}^{i-1} \binom{i}{j} \| A^j\| (\Delta A)^{i-1-j} \\
    &\leq \left(1+ \frac{\Phi(A)}{1-\rho(A)}\right) \Delta B (\|B \| \!+\! \| C \| \!+\! 1) \\
    &\qquad + \Delta A \Phi(A) (\|C\| \|B\| \!+\! \|B\| \!+\! \|C\| \!+\! 1 )  \sum_{i=1}^{H-1} \sum_{j=0}^{i-1} \binom{i}{j} \rho^j(A) \left(\frac{1-\rho(A)}{2}\right)^{i-1-j} \\
    &\leq \left(1+ \frac{\Phi(A)}{1-\rho(A)}\right) \Delta B (\|B \| \!+\! \| C \| \!+\! 1) + \frac{2\Delta A \Phi(A)}{1-\rho(A)} (\|C\| \|B\| \!+\! \|B\| \!+\! \|C\| \!+\! 1 )  \sum_{i=1}^{H-1} \left[ \left(\frac{1+\rho}{2}\right)^i - \rho^i \right]
    \\
    &\leq \Delta B \left(1+ \frac{\Phi(A)}{1-\rho(A)}\right) (\|B \| \!+\! \| C \| \!+\! 1) + \frac{2\Delta A \Phi(A)}{(1-\rho(A))^2} (\|C\| \|B\| \!+\! \|B\| \!+\! \|C\| \!+\! 1 )
\end{align*}

\begin{equation*}
\gamma_{\mathbf{G}}=(\|B\|+\|C\|+1)\left(1+\frac{\Phi(A)}{1-\rho(A)}+\frac{2 \Phi(A)}{(1-\rho(A))^{2}}\right)+\frac{2 \Phi(A)}{(1-\rho(A))^{2}}\|C\|\|B\|
\end{equation*}
Assuming that $\|F\| + \| C\| > 1 $ for simplicity, from the exact expressions of Theorem \ref{ConfidenceSets}, we have $\Delta A > \Delta B$. For the given $\gamma_\Markov$ and $\gamma_\mathcal{H}$, we can upper bound the last expression above as follow, 
\begin{align}
    \sum_{j\geq 1}^H \|\wh C_t \wh A_t^{j-1} \wh B_t - CA^{j-1}B\| \leq \gamma_\Markov  \Delta A \leq \frac{c_1 \gamma_\Markov \gamma_\mathcal{H} \kappa_{e}}{\sigma_\star \sqrt{t}}, \label{LemmaWarmLast}
\end{align}
for  
\begin{align}
    \gamma_{\mathbf{G}}&\coloneqq(\|B\|+\|C\|+1)\left(1+\frac{\Phi(A)}{1-\rho(A)}+\frac{2 \Phi(A)}{(1-\rho(A))^{2}}\right)+\frac{2 \Phi(A)}{(1-\rho(A))^{2}}\|C\|\|B\|, \label{gamma_G}\\
    \kappa_e &\coloneqq \sqrt{m\| C \sig  C^\top + \sigma_z^2 I \|\left( \log(1/\delta) + \frac{\Hes(m+p)}{2} \log  \left(\frac{\lambda(m+p) + T \Upsilon^2}{\lambda(m+p) }\right)\right)} + S\sqrt{\lambda} +\frac{\sqrt{\Hes}}{T}, \\
    \gamma_{\mathcal{H}} &\coloneqq  \frac{\sqrt{n\Hes}(\|\mathcal{H}\| + \sigma_n(\mathcal{H}))}{\sigma_n^2(\mathcal{H})}. \label{gamma_H}
\end{align}
The proof of Theorem \ref{theo:sysid} is completed by noticing that $\|\wh{\Markov}(H) - \Markov(H) \| = \| [\wh G^{[1]}~\wh G^{[2]}~\ldots~\wh G^{[H]}] - [ G^{[1]}~ G^{[2]}~\ldots~ G^{[H]}] \leq \sqrt{\sum_{i=1}^H \|\wh G^{[i]} - G^{[i]} \|^2 }$.

\section{Persistence of Excitation}
\label{apx:persistence}

In this section, we provide the persistence of excitation of \alg inputs that is required for consistent estimation of system parameters as pointed out in Theorem \ref{theo:sysid}. We will first consider open-loop persistence excitation and introduce truncated open-loop noise evolution parameter $\Gol$, Appendix \ref{subsec:warmuppersist}. It represents the effect of noises in the system on the outputs. We will define $\Gol$ for $2\Hes$ time steps back in time and show that last $2\Hes$ process and measurement noises provide sufficient persistent excitation for the covariates in the estimation problem. Let $\sigma_o < \sigma_{\min}(\Gol)$. We will show that there exists a positive $\sigma_o$, \textit{i.e.,} $\Gol$ is full row rank. In the following, $\bar{\phi}_t = P \phi_t$ for a permutation matrix $P$ that gives 
\begin{equation*}
    \bar{\phi}_t = \left[ y_{t-1}^\top \enskip u_{t-1}^\top \ldots y_{t-H}^\top \enskip  u_{t-H}^\top \right]^\top \in \mathbb{R}^{(m+p)H}.
\end{equation*}

We assume that, throughout the interaction with the system, the agent has access to a convex compact set of \DFC{}s, $\Mcontrolset$ which is an $r$-expansion of $\Mcontrolset_\psi$, such that $\kappa_\Mcontrolset = \kappa_\psi + r$ and all controllers $\Mcontrol \in \Mcontrolset $ are persistently exciting the system $\Theta$. In Appendix \ref{subsec:persistence}, we formally define the persistence of excitation condition for the given set $\Mcontrolset$. Finally, in Appendix \ref{subsec:theo:persist_adaptive}, we show that persistence of excitation is achieved by the policies that \alg deploys. 
\subsection{Persistence of Excitation in Warm-up} \label{subsec:warmuppersist}

Recall the state-space form of the system,
\begin{align}
    x_{t+1} &= A x_t + B u_t + w_t \nonumber \\
    y_t &= C x_t + z_t. \label{system_apx}
\end{align}
During the warm-up period, $t \leq \Tburn$, the input to the system is $u_t \sim \mathcal{N}(0,\sigma_u^2 I)$. Let $f_t = [y_t^\top u_t^\top]^\top$. From the evolution of the system with given input we have the following:
\begin{equation*}
    f_t = \mathbf{G^o} \begin{bmatrix}
    w_{t-1}^\top & z_t^\top & u_t^\top & \ldots & w_{t-H}^\top & z_{t-H+1}^\top & u_{t-H+1}^\top
\end{bmatrix}^\top + \mathbf{r_t^o}
\end{equation*}
where 
\begin{align}
\!\!\!\mathbf{G^o}\!\! := \!\!
\begin{bmatrix}
0_{m\!\times\! n}~I_{m\!\times\! m}~0_{m\!\times\! p}  & C~0_{m\!\times\! m}~CB & CA~0_{m\!\times\! m}~CAB &\ldots & \quad CA^{\Hes-2}~0_{m\!\times\! m}~CA^{\Hes-2}B \\
0_{p\!\times\! n}~0_{p\!\times\! m}~I_{p\!\times\! p}  &0_{p\!\times\! n}~0_{p\!\times\! m}~0_{p\!\times\! p}  & 0_{p\!\times\! n}~0_{p\!\times\! m}~0_{p\!\times\! p} &\ldots& 0_{p\!\times\! n}~0_{p\!\times\! m}~0_{p\!\times\! p} 
\end{bmatrix}
\end{align}
and $\mathbf{r_t^o}$ is the residual vector that represents the effect of $[w_{i-1} \enskip z_i \enskip u_i ]$ for $0 \leq i<t-\Hes$, which are independent. Notice that $\mathbf{G^o}$ is full row rank even for $\Hes=1$, due to first $(m+p)\times (m+n+p)$ block. Using this, we can represent $\bar{\phi}_t$ as follows
\begin{align}
    \bar{\phi}_t &= \underbrace{\begin{bmatrix}
    f_{t-1} \\
    \vdots \\
    f_{t-\Hes}
\end{bmatrix}}_{\mathbb{R}^{(m+p)\Hes}} + 
\begin{bmatrix}
    \mathbf{r_{t-1}^o} \\
    \vdots \\
    \mathbf{r_{t-\Hes}^o}
\end{bmatrix}
= \Gol
\underbrace{\begin{bmatrix}
    w_{t-2} \\
    z_{t-1} \\
    u_{t-1}\\
    \vdots \\
    w_{t-2\Hes-1}\\
    z_{t-2\Hes} \\
    u_{t-2\Hes}
\end{bmatrix}}_{\mathbb{R}^{2(n+m+p)\Hes}} + 
\begin{bmatrix}
    \mathbf{r_{t-1}^o} \\
    \vdots \\
    \mathbf{r_{t-\Hes}^o}
\end{bmatrix} \quad \text{ where } \nonumber \\
\Gol &\coloneqq \!\!
\begin{bmatrix}
    [\qquad \qquad \mathbf{G^o} \qquad \qquad] \quad 0_{(m+p)\times (m+n+p)} \enskip 0_{(m+p)\times (m+n+p)} \enskip 0_{(m+p)\times (m+n+p)} \enskip \ldots \\
    0_{(m+p)\times (m+n+p)} \enskip [\qquad \qquad \mathbf{G^o} \qquad \qquad] \qquad  0_{(m+p)\times (m+n+p)}  \enskip 0_{(m+p)\times (m+n+p)} \enskip \ldots \\
    \ddots \\
     0_{(m+p)\times (m+n+p)}  \enskip 0_{(m+p)\times (m+n+p)} \enskip \ldots \quad [\qquad \qquad \mathbf{G^o} \qquad \qquad] \enskip 0_{(m+p)\times (m+n+p)} \\
    0_{(m+p)\times (m+n+p)} \enskip 0_{(m+p)\times (m+n+p)} \enskip 0_{(m+p)\times (m+n+p)} \enskip \ldots \qquad [\qquad \qquad \mathbf{G^o} \qquad \qquad]
\end{bmatrix}.
\label{Gol}
\end{align}
During warm-up period, from Lemma D.1 of \citet{lale2020regret}, we have that for all $1\leq t \leq \Tburn$, with probability $1-\delta/2$,
\begin{align}
    \label{exploration norms first}
    \| x_t \| &\leq X_{w} \coloneqq \frac{(\sigma_w + \sigma_u \|B\|)  \Phi(A) \rho(A)}{\sqrt{1-\rho(A)^2}}\sqrt{2n\log(12n\Tburn/\delta)} , \\
    \| z_t \| &\leq Z \coloneqq  \sigma_z \sqrt{2m\log(12m\Tburn/\delta)} , \\ 
    \| u_t \| &\leq U_{w} \coloneqq \sigma_u \sqrt{2p\log(12p\Tburn/\delta)}, \\
    \| y_t \| &\leq \|C\| X_w + Z
    \label{exploration norms last}.
\end{align}
Thus, during the warm-up phase, we have $\max_{i\leq t\leq \Tburn} \|\phi_i\| \leq  \Upsilon_w \sqrt{\Hes} $, where $\Upsilon_w = \|C\| X_w + Z + U_w $. Define 
\begin{equation*}
    T_o = \frac{32 \Upsilon_w^4 \log^2\left(\frac{2\Hes(m+p)}{\delta}\right)}{\sigma_{\min}^4(\Gol) \min \{ \sigma_w^4, \sigma_z^4, \sigma_u^4 \}}.
\end{equation*}

We have the following lemma that provides the persistence of excitation of inputs in the warm-up period.
\begin{lemma} \label{openloop_persistence_appendix}
If the warm-up duration $\Tburn \geq T_o$, then for $T_o \leq t\leq \Tburn$, with probability at least $1-\delta$ we have
\begin{equation}
\sigma_{\min}\left(\sum_{i=1}^{t} \phi_i \phi_i^\top \right) \geq t \frac{\sigma_{o}^2 \min \{ \sigmaWlow^2, \sigmaZlow^2, \sigma_u^2 \}}{2}.  
\end{equation}
\end{lemma}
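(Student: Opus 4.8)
The statement to prove is Lemma~\ref{openloop_persistence_appendix}, the open-loop persistence of excitation bound during the warm-up period.

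\medskip

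\textbf{Proof plan.} The plan is to exploit the linear structure that relates the regressor vector $\phi_t$ to the underlying i.i.d. Gaussian disturbances $(w_1,\dots,w_t,z_1,\dots,z_t,u_1,\dots,u_t)$ via the $\Hes$-length truncated open-loop noise evolution operator $\Gol$. First I would write out $\phi_t = \Gol \xi_t + (\text{truncation tail})$, where $\xi_t$ is the vector stacking the relevant past disturbances and control inputs over the last $\sim\Hes$ steps, and the truncation tail is governed by $\|\bar A^{\Hes}\| \leq \upsilon^{\Hes}$, which is negligible since $\Hes = O(\log T/\log(1/\upsilon))$. Because the $u_t$ in the warm-up are drawn i.i.d.\ $\N(0,\sigma_u^2 I)$ and are independent of $(w,z)$, the stacked vector $\xi_t$ has a covariance that is block diagonal with the smallest block eigenvalue at least $\min\{\sigmaWlow^2, \sigmaZlow^2, \sigma_u^2\}$. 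Hence the expected Gram contribution of a single term satisfies $\E[\phi_t\phi_t^\top] \succeq \sigma_{\min}^2(\Gol)\min\{\sigmaWlow^2,\sigmaZlow^2,\sigma_u^2\} \cdot I$ (up to the vanishing truncation correction), and summing over $t$ gives a population lower bound of order $t\,\sigma_o^2\min\{\cdot\}$.

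\medskip

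The second step is the concentration argument: to pass from $\E[\sum_i \phi_i\phi_i^\top]$ to the empirical sum $\sum_i \phi_i\phi_i^\top$ with high probability. Since the $\phi_i$ overlap (consecutive regressors share coordinates) but depend only on a finite window of disturbances, I would invoke a matrix concentration inequality for dependent-but-locally-structured sums — either a block martingale/Azuma-type argument (the paper has \reg, \Azuma, Theorem~\ref{azuma} available) or a standard self-normalized / covariance concentration result for Gaussian-driven linear processes, exactly as in \citet{lale2020root}. The bound $\|\phi_t\| \leq \Upsilon_w(\delta)\sqrt{\Hes}$ from Theorem~\ref{theo:id2norm} controls the per-term operator norm, which is what feeds the deviation term; requiring $\Tburn \geq T_o = \frac{32\Upsilon_w^4 \log^2(2\Hes(m+p)/\delta)}{\sigma_{\min}^4(\Gol)\min\{\sigmaWlow^4,\sigmaZlow^4,\sigma_u^4\}}$ is precisely the threshold at which the deviation term is dominated by half the population lower bound, yielding $\sigma_{\min}(\sum_{i=1}^t \phi_i\phi_i^\top) \geq t\,\sigma_o^2\min\{\sigmaWlow^2,\sigmaZlow^2,\sigma_u^2\}/2$ for all $T_o \leq t \leq \Tburn$ simultaneously (a union bound over $t$, or a peeling/doubling argument, absorbs the ``for all $t$'' quantifier into the logarithmic factor).

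\medskip

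\textbf{Main obstacle.} The delicate part is the concentration step for the correlated regressors: the summands $\phi_i\phi_i^\top$ are not independent, so one cannot directly apply a matrix Chernoff bound. The cleanest route is to decompose the sum into $O(\Hes)$ interleaved subsequences, each consisting of regressors spaced $\Hes$ apart so that within a subsequence the terms are (nearly) independent after conditioning on the truncated tail, apply a matrix Bernstein/Azuma bound to each, and then combine. One must also carefully handle the mild dependence coming from the shared control inputs and from the $\bar A^{\Hes}$ tail; the latter is why $\Hes$ is taken logarithmic in $T$, so that the tail contributes only lower-order terms. Since this is exactly Lemma~A.1 of \citet{lale2020root}, which the paper cites, I would present the key decomposition and then defer the routine matrix-concentration bookkeeping to that reference.
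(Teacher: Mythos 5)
Your sketch is correct and follows essentially the same route as the source: the paper itself offers no proof of this lemma, importing it verbatim as Lemma~A.1 of \citet{lale2020root}, but your strategy (write $\phi_t$ as the full-row-rank operator $\Gol$ applied to the stacked i.i.d.\ disturbances and warm-up inputs, lower-bound $\sigma_{\min}(\mathbb{E}[\phi_t\phi_t^\top])$ by $\sigma_o^2\min\{\sigmaWlow^2,\sigmaZlow^2,\sigma_u^2\}$, then control the deviation via Matrix Azuma using the $\|\phi_t\|\leq\Upsilon_w(\delta)\sqrt{\Hes}$ bound, with $T_o$ chosen so the deviation is at most half the population term) is exactly the template the paper does spell out for the closed-loop analogue, Lemma~\ref{closedloop_persistence_appendix}. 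The only cosmetic difference is that the dependence across overlapping regressors is handled there directly by the adapted-sequence form of Theorem~\ref{azuma} rather than by your alternative interleaved-subsequence blocking, which you also mention; either works.
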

\begin{proof}
Let $\Bar{\mathbf{0}} = 0_{(m+p)\times (m+n+p)}$. Since each block row is full row-rank, we get the following decomposition using QR decomposition for each block row: 
\begin{align*}
    \Gol = \underbrace{\begin{bmatrix}
   Q^{o} & 0_{m+p} & 0_{m+p} & 0_{m+p} & \ldots \\
    0_{m+p} & Q^{o} &  0_{m+p} & 0_{m+p} & \ldots \\
    & & \ddots & & \\
     0_{m+p}  & 0_{m+p} & \ldots & Q^{o} & 0_{m+p} \\
    0_{m+p} & 0_{m+p} & 0_{m+p} & \ldots & Q^{o}
\end{bmatrix} }_{\mathbb{R}^{(m+p)H \times (m+p)H }}
\underbrace{\begin{bmatrix}
   R^{o} \!&\! \Bar{\mathbf{0}} \!&\! \Bar{\mathbf{0}} \!&\! \Bar{\mathbf{0}} \!&\! \ldots \\
    \Bar{\mathbf{0}} \!&\! R^{o} \!&\!  \Bar{\mathbf{0}} \!&\! \Bar{\mathbf{0}} \!&\! \ldots \\
    \!&\! \!&\! \ddots \!&\! \!&\! \\
     \Bar{\mathbf{0}}  \!&\! \Bar{\mathbf{0}} \!&\! \ldots \!&\!R^{o} \!&\! \Bar{\mathbf{0}} \\
    \Bar{\mathbf{0}} \!&\! \Bar{\mathbf{0}} \!&\! \Bar{\mathbf{0}} \!&\! \ldots \!&\! R^{o}
\end{bmatrix}}_{\mathbb{R}^{(m+p)H \times 2(m+n+p)H }}
\end{align*}
where $R^{o} = \begin{bmatrix}
       \x & \x & \x & \x & \x & \x &\ldots \\ 0 & \x & \x & \x & \x & \x & \ldots \\ & \ddots & \\ 0 & 0 & 0 & \x & \x & \x & \ldots \end{bmatrix} \in \mathbb{R}^{(m+p) \times H(m+n+p) }$ 
where the elements in the diagonal are positive numbers. Notice that the first matrix with $Q^{0}$ is full rank. Also, all the rows of second matrix are in row echelon form and second matrix is full row-rank. Thus, we can deduce that $\Gol$ is full row-rank. Since $\Gol$ is full row rank, we have that 
\begin{equation*}
    \mathbb{E}[\bar{\phi}_t \bar{\phi}_t^\top] \succeq \Gol \Sigma_{w,z,u} \mathcal{G}^{ol \top}
\end{equation*}
where $\Sigma_{w,z,u} \in \R^{2(n+m+p)H \times 2(n+m+p)H} = \text{diag}(\sigma_w^2, \sigma_z^2, \sigma_u^2, \ldots,\sigma_w^2, \sigma_z^2, \sigma_u^2)$. This gives us 
\[ 
\sigma_{\min}(\mathbb{E}[\bar{\phi}_t \bar{\phi}_t^\top]) \geq \sigma_{\min}^2(\Gol) \min \{ \sigma_w^2, \sigma_z^2, \sigma_u^2 \}
\]
for $t \leq \Tburn$. As given in (\ref{exploration norms first})-(\ref{exploration norms last}), we have that $\|\phi_t\| \leq \Upsilon_w \sqrt{\Hes}$ with probability at least $1-\delta/2$. Given this holds, one can use Theorem \ref{azuma}, to obtain the following which holds with probability $1-\delta/2$:
\begin{align*}
    \lambda_{\max}\left (\sum_{i=1}^{t} \phi_i \phi_i^\top - \mathbb{E}[\phi_i \phi_i^\top]  \right) \leq 2\sqrt{2t} \Upsilon_w^2 \Hes \sqrt{\log\left(\frac{2\Hes(m+p)}{\delta}\right)}. 
\end{align*}
Using Weyl's inequality, during the warm-up period with probability $1-\delta$, we have 
\begin{align*}
    \sigma_{\min}\left(\sum_{i=1}^{t} \phi_i \phi_i^\top \right) \geq t \sigma_{o}^2\min \{ \sigma_w^2, \sigma_z^2, \sigma_u^2 \} - 2\sqrt{2t} \Upsilon_w^2 \Hes \sqrt{\log\left(\frac{2\Hes(m+p)}{\delta}\right)}.
\end{align*}
For all $t\geq T_{o} \coloneqq \frac{32 \Upsilon_w^4 \Hes^2 \log\left(\frac{2\Hes(m+p)}{\delta}\right)}{\sigma_{o}^4 \min \{ \sigma_w^4, \sigma_z^4, \sigma_u^4 \}}$, we have the stated lower bound.
\end{proof}

Combining Lemma \ref{openloop_persistence_appendix} with Theorem \ref{theo:closedloopid} gives 
\begin{equation*} 
    \| \modelearnf  - \modelearn \| \leq  \frac{\kappa_{e}}{ \sigma_{o} \sqrt{\Tburn} \sqrt{ \frac{\min \left\{ \sigmaWlow^2, \sigmaZlow^2, \sigma_u^2 \right\}}{2} }},
\end{equation*}
at the end of warm-up, with probability at least $1-2\delta$, in parallel with Appendix \ref{apx:2normbound}.

\subsection{Persistence of Excitation Condition of $\Mcontrol \in \Mcontrolset$} \label{subsec:persistence}
In this section, we will provide the precise condition of the \DFC policies in $\Mcontrolset$, which provides the persistence of excitation if the underlying system is known while designing the control input via any \DFC policy in $\Mcontrolset$. The condition is given in (\ref{precise_persistence}). Note that in the controller design of \alg, we don't have access to the actual system. In the next section, we show that even though we have errors in the estimates, if the errors are small enough, we can still have persistence of excitation in the inputs. 

Now consider when the underlying system is known. If that's the case, the following are the inputs and outputs of the system:
\begin{align*}
     u_t &=  \sum_{j=0}^{H'-1} M_t^{[j]}  \nature_{t-j}(\Markov)\\
     y_t &= [G^{[0]} \enskip G^{[1]} \ldots   \enskip G^{[H]}]  \left[ u_{t}^\top \enskip u_{t-1}^\top \ldots u_{t-H}^\top \right]^\top + \nature_t(\Markov) + \mathbf{r_{t}}
\end{align*}
where $\mathbf{r_{t}} = \sum_{k=H+1}^{t-1} G^{[k]} u_{t-k}$. For $\Hes$ defined in Section 4.2, $\Hes \geq \max\{ 2n+1, \frac{\log(c_H T^2 \sqrt{m} / \sqrt{\lambda})}{\log(1/\upsilon)} \}$, define 
\begin{equation*}
    \phi_t = \left[ y_{t-1}^\top \ldots y_{t-\Hes}^\top \enskip u_{t-1}^\top \ldots \enskip u_{t-\Hes}^\top \right]^\top \in \mathbb{R}^{(m+p)\Hes}.
\end{equation*}
We have the following decompositions for $\phi_t$:
\begin{align*}
\phi_t
&\!\!=\!\! 
\underbrace{\begin{bmatrix}
     G^{[0]}  \! & \! G^{[1]} \! & \! \ldots \! & \! \ldots \! & \! \ldots \! & \! G^{[H]} \! & \! 0_{m\times p} \! & \! 0_{m\times p} \! & \! \ldots \! & \! 0_{m\times p} \\
    0_{m \times p} \! & \! G^{[0]} \! & \! \ldots \! & \! \ldots \! & \! \ldots \! & \! G^{[H-1]} \! & \! G^{[H]} \! & \! 0_{m \times p} \! & \! \ldots \! & \! 0_{m\times p} \\
    \! & \! \! & \! \ddots \! & \! \! & \! \! & \! \! & \! \! & \! \ddots \! & \! \! & \! \\
    0_{m \times p} \! & \! \ldots \! & \! 0_{m \times p} \! & \! G^{[0]} \! & \! G^{[1]} \! & \! \ldots \! & \! \ldots \! & \! \ldots \! & \! G^{[H-1]} \! & \! G^{[H]} \\
    I_{p\times p} \! & \! 0_{p\times p} \! & \! 0_{p\times p} \! & \! 0_{p\times p} \! & \! 0_{p\times p} \! & \! 0_{p\times p} \! & \! \ldots \! & \! \ldots \! & \! \ldots \! & \! 0_{p\times p} \\
    0_{p\times p} \! & \! I_{p\times p} \! & \!  0_{p\times p} \! & \! 0_{p\times p} \! & \! 0_{p\times p} \! & \! 0_{p\times p} \! & \! \ldots \! & \! \ldots \! & \! \ldots \! & \! 0_{p\times p} \\
    \! & \! \! & \! \ddots \! & \! \! & \! \\
    0_{p\times p} \! & \! 0_{p\times p} \! & \! \ldots \! & \! I_{p\times p} \! & \! 0_{p\times p} \! & \! \ldots \! & \! \ldots \! & \! \ldots \! & \!\ldots  \! & \! 0_{p\times p} \\ 
\end{bmatrix}}_{\T_\Markov \in \R^{\Hes(m+p) \times (\Hes+H)p}}\! 
\underbrace{\begin{bmatrix}
    u_{t-1}\\
    \vdots \\
    u_{t-H}\\
    \vdots \\
    u_{t-H-\Hes} 
\end{bmatrix}}_{\mathcal{U}_t}\!+\!\! \underbrace{\begin{bmatrix}
    \nature_{t-1}\\
    \vdots \\
    \nature_{t-\Hes} \\
    0_p \\
    \vdots \\
    0_p 
\end{bmatrix}}_{B_y(\Markov)(t)} \!\!+\!\! 
\underbrace{\begin{bmatrix}
    \mathbf{r_{t-1}}\\
    \vdots \\
    \mathbf{r_{t-\Hes}} \\
    0_p \\
    \vdots \\
    0_p 
\end{bmatrix}}_{\mathbf{R}_t}
\end{align*}
\begin{align*}
&\mathcal{U}_t \!\!=\!\! \underbrace{\begin{bmatrix}
     M_{t-1}^{[0]}  \!&\! M_{t-1}^{[1]}  \!&\! \ldots \!&\! \ldots \!&\! M_{t-1}^{[H'-1]}  \!&\! 0_{p\times m} \!&\! 0_{p\times m} \!&\! \ldots \!&\! 0_{p\times m} \\
    0_{p \times m} \!&\! M_{t-2}^{[0]}  \!&\! \ldots \!&\! \ldots \!&\! M_{t-2}^{[H'-2]} \!&\! M_{t-2}^{[H'-1]} \!&\!\! 0_{p \times m} \!\!&\! \ldots \!&\! 0_{p\times m} \\
    \!&\! \!&\! \ddots \!&\! & & & & \ddots & & \\
    0_{p \times m} \!&\! \ldots \!&\! 0_{p \times m} \!&\! M_{t-\Hes-H}^{[0]} \!&\! \ldots \!&\! \ldots \!&\! \ldots \!\!&\!\! \ldots \!&\! M_{t-\Hes-H}^{[H'-1]}\!\!\!
\end{bmatrix}}_{\T_{\Mcontrol_{t}} \in \R^{(\Hes+H)p \times m(H+H'+\Hes-1) }}
\underbrace{\begin{bmatrix}
    \nature_{t-1}(\Markov) \\
    \nature_{t-2}(\Markov) \\
    \vdots \\
    \nature_{t-H'+1}(\Markov)\\
    \vdots \\
    \nature_{t-\Hes-H-H'+1}(\Markov)
\end{bmatrix}}_{B(\Markov)(t)}
\end{align*}
\begin{align*}
B(\Markov)(t) &\!=\!  \underbrace{\begin{bmatrix}
   I_m \!&\! 0_{m} \!&\! \ldots \!&\! 0_{m} \!&\! C \!&\! CA \!&\! \ldots \!&\! \ldots \!&\! \ldots\!&\!CA^{t-3} \\
   0_m \!&\! I_m \!&\!  \!&\! 0_m \!&\! 0_{m\times n} \!&\! C \!&\!\ldots \!&\!\ldots \!&\! \ldots \!&\! CA^{t-4}\\
    \!&\! \!&\! \ddots \!&\! \!&\! \!&\! \!&\!\ddots \!&\! \!&\! \ddots\!&\! \\
    0_m \!&\! 0_m \!&\! \ldots \!&\! I_m \!&\! 0_{m\times n} \!&\! \ldots  \!&\!\ldots \!&\!\!\! C \!&\! \ldots \!&\! CA^{t-\Hes-H-H'-1}  
\end{bmatrix}}_{\mathcal{O}_t}
\underbrace{\begin{bmatrix}
    z_{t-1} \\
    z_{t-2} \\
    \vdots \\
    z_{t-\Hes-H-H'+1} \\
    w_{t-2} \\
    w_{t-3} \\
    \vdots \\
    w_{1}
\end{bmatrix}}_{\boldsymbol{\eta}_{t}}
\end{align*}
and $
B_y(\Markov)(t) \!=\!  \underbrace{\begin{bmatrix}
   \begin{matrix}
       I_m \!&\! 0_m \!&\! \ldots \!&\! \ldots \!&\! 0_m \!&\! C \!&\!\ldots \!&\!\ldots \!&\! \ldots \!&\! CA^{t-3} \\
    \!&\! \ddots \!&\!  \!&\!  \!&\! \vdots \!&\! \!&\!\ddots \!&\! \!&\! \ddots\!&\! \\
    0_m \!&\! \ldots \!&\! I_m \!&\! \ldots \!&\! 0_{m} \!&\!0_{m\times n}  \!&\!\ldots \!&\!\!\! C \!&\! \ldots \!&\! CA^{t-\Hes-2}  
   \end{matrix} \\
   \begin{matrix}
      \bigzero_{ \left(p\Hes\right) \times \left((\Hes+H+H'-1)m + (t-2)n\right)}
   \end{matrix}
\end{bmatrix}}_{\bar{\mathcal{O}}_t} \boldsymbol{\eta}_{t}$. 

Combining all gives 
\begin{equation*}
    \phi_t = \left(\T_\Markov \T_{\Mcontrol_{t}} \mathcal{O}_t + \bar{\mathcal{O}}_t \right) \boldsymbol{\eta}_{t} + \mathbf{R}_t.
\end{equation*}
\vspace{0.3cm}
\begin{mdframed}
\begin{condition}
For the given system $\Theta$, for $t \geq H + H' + \Hes$, $\T_\Markov \T_{\Mcontrol_{t}} \mathcal{O}_t + \bar{\mathcal{O}}_t$ is full row rank for all $\Mcontrol \in \Mcontrolset$, \textit{i.e.},  
\begin{equation} \label{precise_persistence}
    \sigma_{\min} (\T_\Markov \T_{\Mcontrol_{t}} \mathcal{O}_t + \bar{\mathcal{O}}_t) > \sigma_c > 0.
\end{equation}
\end{condition}
\end{mdframed}

\vspace{0.3cm}

\subsection{Persistence of Excitation in Adaptive Control Period}
\label{subsec:theo:persist_adaptive}
In this section, we show that the Markov parameter estimates of \alg are well-refined that, the controller of \alg constructed by using a \DFC policy in $\Mcontrolset$ still provides persistence of excitation. In other words, we will show that the inaccuracies in the model parameter estimates do not cause lack of persistence of excitation in adaptive control period.

First we have the following lemma, that shows inputs have persistence of excitation during the adaptive control period. Let $d = \min\{m,p\}$. Using (\ref{gamma_G}) and (\ref{gamma_H}), define 
\begin{align*}
T_{\epsilon_\Markov} &= 4 c_1^2 \kappa_\Mcontrolset^2 \kappa_\Markov^2 \gamma_\Markov^2 \gamma_\mathcal{H}^2 T_{\modelearn} \quad T_{cl} = \frac{T_{\epsilon_\Markov}}{\left(\frac{3\sigma_c^2 \min \{ \sigmaWlow^2, \sigmaZlow^2 \} }{8\kappa_u^2\kappa_y \Hes} - \frac{1}{10T}\right)^2},\\
T_c &= \frac{2048 \Upsilon_c^4 \Hes^2 \log\left(\frac{\Hes(m+p)}{\delta}\right) + H'mp\log\left(\kappa_\Mcontrolset\sqrt{d} + \frac{2}{\epsilon} \right)}{\sigma_{c}^4 \min \{ \sigma_w^4, \sigma_z^4 \}}.
\end{align*}
for $$\epsilon = \min \left\{1, \frac{\sigma_{c}^2 \min \{ \sigmaWlow^2, \sigmaZlow^2 \} \sqrt{\min\{m,p\}} }{68 \kappa_\nature^3 \kappa_\Markov \Hes \left(2\kappa_\Mcontrolset^2 + 3\kappa_\Mcontrolset + 3 \right) } \right \}$$

\begin{lemma} \label{closedloop_persistence_appendix}
After $T_{c}$ time steps in the adaptive control period, with probability $1-3\delta$, we have persistence of excitation for the remainder of adaptive control period, 
\begin{equation}
\sigma_{\min}\left(\sum_{i=1}^{t} \phi_i \phi_i^\top \right) \geq t \frac{\sigma_{c}^2 \min \{ \sigma_w^2, \sigma_z^2\}}{16}. 
\end{equation}
\end{lemma}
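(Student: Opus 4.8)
The plan is to peel the regressor $\phi_t$ into a piece that the Persistence of Excitation Condition of Appendix~\ref{subsec:persistence} certifies to be well excited by the fresh process/measurement noise, plus a small perturbation, then lower bound the conditional second moment of the former and upgrade it to a lower bound on $\sum_{i\le t}\phi_i\phi_i^\top$ via a matrix martingale inequality and a covering argument over $\Mcontrolset$. First I would record the decomposition. Since \alg feeds the \emph{estimated} Nature's $y$ into the controller, $u_t=\sum_{j=0}^{H'-1}M_t^{[j]}\nature_{t-j}(\wh\Markov_i)$, I write $\nature_{t-j}(\wh\Markov_i)=\nature_{t-j}(\Markov)+\sum_{l=1}^{H}\bigl(G^{[l]}-\wh G_i^{[l]}\bigr)u_{t-j-l}+\mathbf r_{t-j}$, so the exact identity $\phi_t=\bigl(\T_\Markov\T_{\Mcontrol_t}\mathcal O_t+\bar{\mathcal O}_t\bigr)\boldsymbol\eta_t+\mathbf R_t$ of Appendix~\ref{subsec:persistence} becomes $\phi_t=N_t\boldsymbol\eta_t+\Delta_t$, where $N_t:=\T_\Markov\T_{\Mcontrol_t}\mathcal O_t+\bar{\mathcal O}_t$ and, using $\|u_t\|\le\kappa_u$ from Lemma~\ref{lem:boundednature}, Theorem~\ref{theo:id2norm} (the warm-up estimate bounds $\sum_{j\ge1}\|\wh G_1^{[j]}-G^{[j]}\|$ and later-epoch estimates only sharpen it), and $\psi_\Markov(H+1)\le1/10T$, the perturbation obeys $\|\Delta_t\|\lesssim\sqrt{\Hes}\,\kappa_u\bigl(\kappa_\Mcontrolset\,\epsilon_\Markov(1,\delta)+\psi_\Markov(H+1)\bigr)$.

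Second, I would lower bound the per-step conditional second moment. Conditioning on a $\sigma$-field that freezes the controller window, the matrices $\mathcal O_t,\bar{\mathcal O}_t$ and the perturbation $\Delta_t$, the noise vector $\boldsymbol\eta_t$ stacks independent zero-mean Gaussians with per-coordinate variance at least $\underline\sigma^2:=\min\{\sigmaWlow^2,\sigmaZlow^2\}$; the Persistence of Excitation Condition gives $N_tN_t^\top\succeq\sigma_c^2 I$ for every $\Mcontrol_t\in\Mcontrolset(H',\kappa_\Mcontrolset)$ once $t\ge H+H'+\Hes$, hence $\mathbb E[N_t\boldsymbol\eta_t\boldsymbol\eta_t^\top N_t^\top\mid\cdot]\succeq\underline\sigma^2\sigma_c^2 I$. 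The cross term and $\Delta_t\Delta_t^\top$ add an operator-norm bias at most $2\|\Delta_t\|\,\mathbb E\|N_t\boldsymbol\eta_t\|+\|\Delta_t\|^2\lesssim\sqrt{\Hes}\,\kappa_u\kappa_y\bigl(\kappa_\Mcontrolset\,\epsilon_\Markov(1,\delta)+\psi_\Markov(H+1)\bigr)$, and choosing $\Tburn\ge T_{cl}$ forces $\epsilon_\Markov(1,\delta)$ — and therefore this bias — to be a sufficiently small fraction of $\underline\sigma^2\sigma_c^2$; this is exactly the role of the $\tfrac{3\sigma_c^2\underline\sigma^2}{8\kappa_u^2\kappa_y\Hes}-\tfrac1{10T}$ factor appearing in $T_{cl}$. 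The upshot is $\mathbb E[\phi_t\phi_t^\top\mid\cdot]\succeq\tfrac18\sigma_c^2\underline\sigma^2 I$ for $t\ge H+H'+\Hes$.

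Third, I would sum over the epoch and concentrate. Invoking the matrix martingale (Azuma/Freedman) inequality of Theorem~\ref{azuma} with the predictable quadratic variation controlled by $\|\phi_i\|^2\lesssim\Hes\kappa_y^2$ (Lemma~\ref{lem:boundednature}), I would get $\sum_{i\le t}\phi_i\phi_i^\top\succeq\tfrac12\sum_{i\le t}\mathbb E[\phi_i\phi_i^\top\mid\cdot]\succeq\tfrac t{16}\sigma_c^2\underline\sigma^2 I$ once $t$ is large. Two obstructions must be dispatched here: the noise windows $\boldsymbol\eta_i$ for nearby $i$ overlap, and the played DFCs $\Mcontrol_i$ are data-adaptive and hence correlated with the very noise that excites $\phi_i$; I would handle both by a union bound over an $\epsilon$-net $\mathcal N_\epsilon\subset\Mcontrolset$ of cardinality $\le(\kappa_\Mcontrolset\sqrt d+2/\epsilon)^{H'mp}$ (with $d=\min\{m,p\}$) together with the $\mathrm{poly}(\kappa_\Markov,\kappa_\nature,\Hes)$-Lipschitz dependence of $\phi_t$ on $\Mcontrol_t$, so that the concentration need only be run for each fixed net point. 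These two contributions are precisely the $\Upsilon_c^4\Hes^2\log(\Hes(m+p)/\delta)$ and $H'mp\log(\kappa_\Mcontrolset\sqrt d+2/\epsilon)$ terms in $T_c$; taking $t\ge T_c$ makes the deviation at most half the mean. The three failure events — the noise-magnitude bounds of Lemma~\ref{lem:boundednature}, the estimation event underlying $\epsilon_\Markov$ via Theorem~\ref{theo:id2norm}, and the matrix concentration — each carry probability $\le\delta$, which yields the claimed $1-3\delta$.

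I expect the last step to be the main obstacle: producing a clean, uniform-over-$\Mcontrolset$ matrix concentration for the heavily overlapping, data-adaptive regressors $\phi_i$, while keeping the feedback-induced perturbation $\Delta_t$ negligible. The perturbation control is what forces the bootstrap from the open-loop warm-up estimate of Theorem~\ref{theo:id2norm} (so that the argument is not circular with the closed-loop estimation of Theorem~\ref{theo:id2normgeneral}) and pins down the requirement $\Tburn\ge T_{cl}$, while the adaptivity/overlap is what necessitates the covering argument and the $H'mp$-dimensional covering-number term in $T_c$.
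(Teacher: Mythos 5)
Your proposal follows essentially the same route as the paper: the same decomposition $\phi_t=\left(\T_\Markov\T_{\Mcontrol_t}\mathcal{O}_t+\bar{\mathcal{O}}_t\right)\boldsymbol{\eta}_t+\mathbf{R}_t+\T_\Markov\mathcal{U}_{\Delta\nature}(t)$, the same lower bound on $\mathbb{E}[\phi_t\phi_t^\top]$ via the persistence-of-excitation condition with the perturbation controlled by $\epsilon_\Markov(1,\delta)$ and $\Tburn\geq T_{cl}$, and the same Matrix Azuma plus $\epsilon$-net covering of $\Mcontrolset$ (with the identical covering-number term $H'mp\log(\kappa_\Mcontrolset\sqrt{\min\{m,p\}}+2/\epsilon)$) to obtain the uniform bound for $t\geq T_c$. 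Your constant bookkeeping and the $1-3\delta$ failure-probability accounting also match the paper's, so this is the same proof.
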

\begin{proof}
During the adaptive control period, at time $t$, the input of \alg is given by
\begin{align*}
    u_t &= \sum_{j=0}^{H'-1}M_t^{[j]}  \nature_{t-j}(\Markov) + M_t^{[j]} \left( \nature_{t-j}(\wh \Markov_i) - \nature_{t-j}(\Markov)  \right) 
\end{align*}
where 
\begin{align}
    \nature_{t-j}(\Markov) &= y_{t-j} - \sum_{k=1}^{t-j-1} G^{[k]} u_{t-j-k} = z_{t-j}+\sum_{k=1}^{t-j-1}CA^{t-j-k-1}w_k \\
    \nature_{t-j}(\wh \Markov_i) &= y_{t-j} - \sum_{k=1}^{H} \wh G_i^{[k]} u_{t-j-k} 
\end{align}
Thus, we obtain the following for $u_t$ and $y_t$,
\begin{align*}
     u_t &=  \sum_{j=0}^{H'-1} M_t^{[j]}  \nature_{t-j}(\Markov) + \underbrace{\sum_{j=0}^{H'-1} M_t^{[j]} \left( \sum_{k=1}^{t-j-1} [G^{[k]} - \wh G_i^{[k]}] u_{t-j-k}  \right) }_{u_{\Delta \nature}(t)}\\
     y_t &= [G^{[0]} \enskip G^{[1]} \ldots   \enskip G^{[H]}]  \left[ u_{t}^\top \enskip u_{t-1}^\top \ldots u_{t-H}^\top \right]^\top + \nature_t(\Markov) + \mathbf{r_{t}}
\end{align*}
where $\mathbf{r_{t}} = \sum_{k=H+1}^{t-1} G^{[k]} u_{t-k}$ and $\sum_{k=H}^{t-1} \|G^{[k]}\| \leq  \psi_\Markov(H+1) \leq  1/10T$ which is bounded by the assumption. Notice that $\|u_{\Delta \nature}(t) \| \leq \kappa_\Mcontrolset \kappa_u \epsilon_\Markov(1, \delta)$ for all $t \in \Tburn$. 
Using the definitions from Appendix \ref{subsec:persistence}, $\phi_t$ can be written as,
\begin{equation}
    \phi_t = \left(\T_\Markov \T_{\Mcontrol_{t}} \mathcal{O}_t + \bar{\mathcal{O}}_t \right) \boldsymbol{\eta}_{t} + \mathbf{R}_t + \T_\Markov \mathcal{U}_{\Delta \nature}(t)
\end{equation}
where 
\begin{equation*}
    \mathcal{U}_{\Delta \nature}(t) = \begin{bmatrix}
    u_{\Delta \nature}(t-1) \\
    u_{\Delta \nature}(t-2) \\
    \vdots \\
    u_{\Delta \nature}(t-\Hes)\\
    \vdots \\
    u_{\Delta \nature}(t-\Hes-H)
\end{bmatrix}.
\end{equation*}
Consider the following,
\begin{align*}
    \mathbb{E}\left[\phi_t \phi_t^\top \right] &= \mathbb{E}\bigg[ \left(\T_\Markov \T_{\Mcontrol_{t}} \mathcal{O}_t + \bar{\mathcal{O}}_t \right) \boldsymbol{\eta}_{t} \boldsymbol{\eta}_{t}^\top  \left(\T_\Markov \T_{\Mcontrol_{t}} \mathcal{O}_t + \bar{\mathcal{O}}_t \right)^\top + \boldsymbol{\eta}_{t}^\top \left(\T_\Markov \T_{\Mcontrol_{t}} \mathcal{O}_t + \bar{\mathcal{O}}_t \right)^\top \left( \T_\Markov \mathcal{U}_{\Delta \nature}(t) + \mathbf{R}_t \right) \\
    &+ \left( \T_\Markov \mathcal{U}_{\Delta \nature}(t) + \mathbf{R}_t \right)^\top \left(\T_\Markov \T_{\Mcontrol_{t}} \mathcal{O}_t + \bar{\mathcal{O}}_t \right) \boldsymbol{\eta}_{t} + \left( \T_\Markov \mathcal{U}_{\Delta \nature}(t) + \mathbf{R}_t \right)^\top \left( \T_\Markov \mathcal{U}_{\Delta \nature}(t) + \mathbf{R}_t \right)  \bigg] 
\end{align*}
\begin{align*}
    \sigma_{\min}\left(\mathbb{E}\left[\phi_t \phi_t^\top \right]\right) &\geq \sigma_c^2 \min \{ \sigmaWlow^2, \sigmaZlow^2 \} \\
    &- 2\kappa_\nature \left(\kappa_\Mcontrolset+\kappa_\Mcontrolset\kappa_\Markov + 1 \right) \sqrt{\Hes} ((1+\kappa_\Markov) \kappa_\Mcontrolset \kappa_u \epsilon_\Markov(1, \delta) \sqrt{\Hes} + \sqrt{\Hes} \kappa_u /10T ) \\
    &\geq \sigma_c^2 \min \{ \sigmaWlow^2, \sigmaZlow^2 \} - 2\kappa_u^2\kappa_y \Hes (2\kappa_\Markov\kappa_\Mcontrolset \epsilon_\Markov(1, \delta) + 1/10T)
\end{align*}
Note that for $\Tburn \geq T_{cl}$, $\epsilon_\Markov(1, \delta) \leq \frac{1}{2\kappa_\Mcontrolset \kappa_\Markov} \left(\frac{3\sigma_c^2 \min \{ \sigmaWlow^2, \sigmaZlow^2 \} }{8\kappa_u^2\kappa_y \Hes} - \frac{1}{10T}\right)$ with probability at least $1-2\delta$. Thus, we get 
\begin{equation}
    \sigma_{\min}\left(\mathbb{E}\left[\phi_t \phi_t^\top \right]\right) \geq \frac{\sigma_{c}^2}{4} \min \{ \sigmaWlow^2, \sigmaZlow^2 \},
\end{equation}
for all $t\geq \Tburn$. Using Lemma \ref{lem:boundednature}, we have that for $\Upsilon_c \coloneqq (\kappa_y + \kappa_u)$, $\|\phi_t\| \leq \Upsilon_c \sqrt{\Hes}$ with probability at least $1-2\delta$. Therefore, for a chosen $\Mcontrol \in \Mcontrolset$, using Theorem \ref{azuma}, we have the following with probability $1-3\delta$:
\begin{align}
    \lambda_{\max}\left (\sum_{i=1}^{t} \phi_i \phi_i^\top - \mathbb{E}[\phi_i \phi_i^\top]  \right) \leq 2\sqrt{2t} \Upsilon_c^2 \Hes \sqrt{\log\left(\frac{\Hes(m+p)}{\delta}\right)}.
\end{align}

In order to show that this holds for any chosen $\Mcontrol \in \Mcontrolset$, we adopt a standard covering argument. We know that from Lemma 5.4 of \citet{simchowitz2020improper}, the Euclidean diameter of $\Mcontrolset$ is at most $2\kappa_\Mcontrolset\sqrt{\min\{m,p\}}$, \textit{i.e.} $\| \Mcontrol_t\|_F \leq \kappa_\Mcontrolset\sqrt{\min\{m,p\}}$ for all $\Mcontrol_t \in \Mcontrolset$. Thus, we can upper bound the covering number as follows, 
\begin{equation*}
    \mathcal{N}(B(\kappa_\Mcontrolset\sqrt{\min\{m,p\}}), \|\cdot \|_F, \epsilon) \leq \left(\kappa_\Mcontrolset\sqrt{\min\{m,p\}} + \frac{2}{\epsilon} \right)^{H'mp}.
\end{equation*}
The following holds for all the centers of $\epsilon$-balls in $\| \Mcontrol_t\|_F$, for all $t\geq \Tburn$, with probability $1-3\delta$:
\begin{align}
    \lambda_{\max}\left (\sum_{i=1}^{t} \phi_i \phi_i^\top - \mathbb{E}[\phi_i \phi_i^\top]  \right) \leq 2\sqrt{2t} \Upsilon_c^2 \Hes \sqrt{\log\left(\frac{\Hes(m+p)}{\delta}\right) + H'mp\log\left(\kappa_\Mcontrolset\sqrt{\min\{m,p\}} + \frac{2}{\epsilon} \right)}.
\end{align}
Consider all $\Mcontrol$ in the $\epsilon$-balls, \textit{i.e.} effect of epsilon perturbation in $\|\Mcontrol \|_F$ sets, using Weyl's inequality we have with probability at least $1-3\delta$,
\begin{align*}
    \sigma_{\min}\left(\sum_{i=1}^{t} \phi_i \phi_i^\top \right) &\geq t \left( \frac{\sigma_{c}^2}{4} \min \{ \sigmaWlow^2, \sigmaZlow^2 \} - \frac{8 \kappa_\nature^3 \kappa_\Markov \Hes \epsilon \left(2\kappa_\Mcontrolset^2 + 3\kappa_\Mcontrolset + 3 \right)}{\sqrt{\min\{m,p \}}} \left(1+\frac{1}{10T}\right) \right) \\
    &\quad\qquad- 2\sqrt{2t} \Upsilon_c^2 \Hes \sqrt{\log\left(\frac{\Hes(m+p)}{\delta}\right) + H'mp\log\left(\kappa_\Mcontrolset\sqrt{\min\{m,p\}} + \frac{2}{\epsilon} \right)}.
\end{align*}
for $\epsilon \leq 1$. Let $\epsilon = \min \left\{1, \frac{\sigma_{c}^2 \min \{ \sigmaWlow^2, \sigmaZlow^2 \} \sqrt{\min\{m,p\}} }{68 \kappa_\nature^3 \kappa_\Markov \Hes \left(2\kappa_\Mcontrolset^2 + 3\kappa_\Mcontrolset + 3 \right) } \right \}$. For this choice of $\epsilon$, we get 

\begin{align*}
    \sigma_{\min}\left(\sum_{i=1}^{t} \phi_i \phi_i^\top \right) &\geq t \left( \frac{\sigma_{c}^2}{8} \min \{ \sigmaWlow^2, \sigmaZlow^2 \} \right)\\
    &\quad\qquad- 2\sqrt{2t} \Upsilon_c^2 \Hes \sqrt{\log\left(\frac{\Hes(m+p)}{\delta}\right) + H'mp\log\left(\kappa_\Mcontrolset\sqrt{\min\{m,p\}} + \frac{2}{\epsilon} \right)}.
\end{align*}

For picking $\Tburn \geq T_c$, we can guarantee that after $T_c$ time steps in the first epoch we have the advertised lower bound. 
\end{proof}
Combining Lemma \ref{closedloop_persistence_appendix} with Theorem \ref{theo:closedloopid} gives 
\begin{equation} 
    \| \modelearni  - \modelearn \| \leq  \frac{\kappa_{e}}{ \sigma_{c} \sqrt{2^{i-1}\Tbase} \sqrt{ \frac{ \min \{ \sigmaWlow^2, \sigmaZlow^2\}}{16}}  },\label{persistent_in_effect}
\end{equation}
for all $i$, with probability at least $1-4\delta$. Setting
\[ 
\sigma_\star^2 \coloneqq \min \left\{ \frac{\sigma_o^2\sigmaWlow^2}{2},\frac{\sigma_o^2\sigmaZlow^2}{2}, \frac{\sigma_o^2\sigma_y^2}{2}, \frac{\sigma_c^2\sigmaWlow^2 }{16}, \frac{\sigma_c^2\sigmaZlow^2 }{16} \right \},
\]
provides the guarantee in Appendix \ref{apx:2normbound} for warm-up and adaptive control periods.

\section{Boundedness Lemmas } \label{apx:Boundedness}

\begin{lemma}[Bounded Nature's $y$]. For all $t \in [T]$, the following holds with probability at least $1-\delta$,
\[
\left\|\nature_{t}(\mathbf{G})\right\| \leq \kappa_{b}:=\sigmaZup \sqrt{2m \log \frac{6mT}{\delta}} + \|C\| \Phi(A) \sqrt{2n} \left(\rho(A)^t \sqrt{\|\Sigma\| \log \frac{6n}{\delta}} + \frac{\sigmaWup \sqrt{\log \frac{4nT}{\delta}}}{1-\rho(A)}\right).
\]
\end{lemma}
\begin{proof}
Using Lemma \ref{subgauss lemma}, the following hold for all $t \in [T]$, with probability at least $1-\delta$,

\begin{equation} \label{noisebounds}
    \|w_t\| \leq \sigmaWup \sqrt{2n \log \frac{6nT}{\delta}}, \qquad \| z_t \| \leq \sigmaZup \sqrt{2m \log \frac{6mT}{\delta}}, \qquad \| x_0\| \leq \sqrt{2n\|\Sigma\| \log \frac{6n}{\delta}}.
\end{equation}
Thus we have,
\begin{align} \label{naturedecomp}
    \|\nature_t(\Markov) \| \!&=\! \|z_t + CA^t x_0 + \sum_{i=0}^{t-1} CA^{t-i-1} w_i \| \!\leq\! \| z_t\| \!+\! \|C\| \left(\!\|A^t\| \|x_0\| \!+\! \left(\max_{1\leq t\leq T}\| w_t\|\right)\! \sum_{i=0}^\infty \| A^{i} \| \right) \nonumber\\
    &\leq \sigmaZup \sqrt{2m \log \frac{6mT}{\delta}} + \|C\| \Phi(A) \sqrt{2n} \left(\rho(A)^t \sqrt{\|\Sigma\| \log \frac{6n}{\delta}} + \frac{\sigmaWup \sqrt{\log \frac{4nT}{\delta}}}{1-\rho(A)}\right). 
\end{align}
\end{proof}

\begin{lemma}[Boundedness Lemma]\label{lem:boundednature} Let $\delta \in (0,1)$, $T > \Tburn \geq \Tmax$ and $\psi_\Markov(H+1)\leq 1/10T$. For \alg, we have the boundedness of the following with probability at least $1-2\delta$:\\
$\textbf{Nature's  }\mathbf{y:}$ $\|\nature_t(\Markov)\| \!\leq\! \kappa_\nature,$ $\forall t$,\\
\textbf{Inputs:} $\|u_t\| \leq \kappa_u \coloneqq 2\max \{ \kappa_{u_{b}},\kappa_\Mcontrolset \kappa_\nature \}$,$\forall t$, $\quad$ \textbf{Outputs:}
$\|y_t\|\leq \kappa_y \coloneqq \kappa_\nature + \kappa_\Markov\kappa_u$, $\forall t$, and \\ 
$\textbf{Nature's  }\mathbf{y}$ \textbf{estimates:} $\|\nature_t(\wh \Markov)\| \leq 2\kappa_\nature~~$, for all $t>\Tburn$.
\end{lemma}
Proof of this lemma follows similarly from the proof of Lemma 6.1 in \citet{simchowitz2020improper}.

\subsection{Additional Bound on the Markov Parameter Estimates} \label{subsec:markovbound}
Define $\alpha$, such that $\alpha \!\leq\! \strong \big(\sigma_z^2+\sigma_w^2\left(\frac{\sigma_{\min}\left(C\right)}{1+\|A\|^2}\right)^2\big)$, where right hand side is the effective strong convexity parameter. Define $ T_{cx} \coloneqq T_{\modelearn} \frac{16 c_1^2\kappa_\nature^2 \kappa_\Mcontrolset^2 \kappa_\Markov^2 H' \gamma_\Markov^2 \gamma_\mathcal{H}^2 \strong}{\alpha}, \quad T_{\epsilon_\Markov} \coloneqq 4 c_1^2 \kappa_\Mcontrolset^2 \kappa_\Markov^2 \gamma_\Markov^2 \gamma_\mathcal{H}^2 T_{\modelearn}$ and $T_r = c_1^2 \gamma_\Markov^2 \gamma_\mathcal{H}^2 \kappa_\psi^2 T_{\modelearn}/r^2$. 

\begin{lemma}[Additional Boundedness of Markov Parameter Estimation Error] \label{lem:concentrationgeneral} Let $\Tburn > \Tmax$, \textit{i.e.} $\Tburn > \max \{ T_{cx}, T_{\epsilon_\Markov}, T_r \}$ and $\psi_\Markov(H+1)\leq 1/10T$. Then 
\begin{equation*}
    \|\sum_{j\geq 1} \wh G_i^{[j]} - G^{[j]} \| \leq \epsilon_\Markov(i,\delta) \leq \min \left\{\frac{1}{4 \kappa_{b} \kappa_{\mathcal{M}} \kappa_{\mathbf{G}}} \sqrt{\frac{\alpha}{H^{\prime} \underline{\alpha}_{l o s s}}}, \frac{1}{2 \kappa_{\mathcal{M}} \kappa_ \mathbf{G}}, \frac{r}{\kappa_\psi}\right\}
\end{equation*}
with probability at least $1-4\delta$, where $\epsilon_\Markov(i,\delta) = \frac{2c_1 \gamma_\Markov \gamma_\mathcal{H} \kappa_{e}}{\sigma_\star \sqrt{2^{i-1}\Tbase}}$.
\end{lemma}
\begin{proof}
At the beginning of epoch $i$, using persistence of excitation with high probability in (\ref{LemmaWarmLast}), we get 
\begin{align}
    \sum_{j\geq 1}^H \|\wh C_i \wh A_i^{j-1} \wh B_i - CA^{j-1}B\| \leq \epsilon_\Markov(i,\delta)/2 =  \frac{c_1 \gamma_\Markov \gamma_\mathcal{H} \kappa_{e}}{\sigma_\star \sqrt{2^{i-1}\Tbase}}.
\end{align}

From the assumption that $\psi_\Markov(H+1)\leq 1/10T$, we have that $\sum_{j\geq H+1} \|\wh G^{[j]}_1-G^{[j]}\| \leq \epsilon_\Markov(1, \delta)/2$. The second inequality follows from the choice of  $T_{\epsilon_\Markov}$,$ T_{cx}$ and $T_r$.
\end{proof}

\section{Proofs for Regret Bound}\label{apx:Regret}

In order to prove Theorem \ref{thm:polylogregret}, we follow the proof steps of Theorem 5 of \citet{simchowitz2020improper}. The main difference is that, \alg updates the Markov parameter estimates in epochs throughout the adaptive control period which provides decrease in the gradient error in each epoch. These updates allow \alg to remove $\OO(\sqrt{T})$ term in the regret expression of Theorem 5. In the following, we state how the proof of Theorem 5 of \citet{simchowitz2020improper} is adapted to the setting of \alg. 

\begin{theorem}\label{thm:main}
Let $H'$ satisfy $H'\geq 3H\geq 1$, $\psi(\lfloor H'/2\rfloor-H)\leq \kappa_\Mcontrolset/T$ and $\psi(H+1)\leq 1/10T$. If (\ref{asm:lipschitzloss}) holds for the given setting, after a warm-up period time $\Tburn \geq \Tmax$, if \alg runs with step size $\eta_t = \frac{12}{\alpha t}$, then with probability at least $1-5\delta$, the regret of \alg is bounded as 
%
\begin{align*}
    &\reg(T) \lesssim ~ \Tburn L \kappa_y^2 ~+~ \frac{L^2{H'}^3\min\lbrace{m,p\rbrace}\kappa_\nature^4\kappa_{\Markov}^4\kappa_{\Mcontrolset}^2}{\min\lbrace \alpha, L\kappa_\nature^2\kappa_{\Markov}^2\rbrace}\bigg(\!1\!+\!\frac{\smooth}{\min\lbrace{m,p\rbrace} L\kappa_\Mcontrolset}\!\bigg)\log\bigg(\frac{T}{\delta}\bigg) \\
    &+ \sum\nolimits_{t=\Tburn+1}^{T}  \epsilon^2_\Markov \!\!\left( \Big \lceil \log_2 \big( \frac{t}{\Tburn}\big) \Big \rceil, \delta \right)  H' \kappa_\nature^2 \kappa_{\Mcontrolset}^2 \bigg( \frac{ \kappa_\Markov^2  \kappa_\nature^2 \left(\smooth + L\right)^2}{\alpha} + \kappa_y^2  \max \Big\{L, \frac{L^2}{\alpha} \Big\} \bigg).
\end{align*}
\end{theorem}

\begin{proof}
Consider the hypothetical ``true prediction'' y's, $y_t^{pred}$ and losses, $f_t^{pred}(M)$ defined in Definition 8.1 of \citet{simchowitz2020improper}. Up to truncation by $H$, they describe the true counterfactual output of the system for \alg inputs during the adaptive control period and the corresponding counterfactual loss functions. Lemma \ref{Lemma8.1}, shows that at all epoch $i$, at any time step $t\in[t_i,~ \ldots,~t_{i+1}-1]$, the gradient $f_t^{pred}(M)$ is close to the gradient of the loss function of \alg:

\begin{equation} \label{gradientdif}
\left\|\nabla f_t\left(\Mcontrol,\wh \Markov_i,\nature_1(\wh \Markov_i),\ldots,\nature_t(\wh \Markov_i)\right)-\nabla f_{t}^{\text {pred }}(\Mcontrol)\right\|_{\mathrm{F}} \leq C_{\text {approx}} \epsilon_\Markov(i, \delta),
\end{equation}
where $C_{\text {approx }} \coloneqq \sqrt{H'} \kappa_\Markov \kappa_{\Mcontrolset} \kappa_\nature^2 \left(16\smooth +24 L\right)$. For a comparing controller $\Mcontrol_{comp} \in \Mcontrolset(H', \kappa_\Mcontrolset )$ and the competing set $\Mcontrolset_\psi(H_0', \kappa_\psi)$, where $\kappa_\Mcontrolset = (1+r)\kappa_\psi$ and $H'_0 = \lfloor \frac{H'}{2} \rfloor - H$, we have the following regret decomposition: 
\begin{align} 
    \reg(T)& \leq \underbrace{ \left(\sum_{t=1}^{\Tburn} \ell_{t}\left(y_{t}, u_{t}\right)\right) }_{\text{warm-up regret}} + \underbrace{\left(\sum_{t=\Tburn+1}^{T} \ell_{t}\left(y_{t}, u_{t}\right)-\sum_{t=\Tburn+1}^{T} F_{t}^{\text{pred}}\left[\Mcontrol_{t: t-H}\right]\right)}_{\text{algorithm truncation error}} \nonumber \\
    &+ \underbrace{\left(\sum_{t=\Tburn+1}^{T} F_{t}^{\mathrm{pred}}\left[\mathbf{M}_{t: t-H}\right]-\sum_{t=\Tburn+1}^{T} f_{t}^{\mathrm{pred}}\left(\Mcontrol_{comp}\right)\right)}_{f^{\mathrm{pred}} \text{ policy regret}} \nonumber \\
    &+ \underbrace{\left( \sum_{t=\Tburn+1}^{T} f_{t}^{\mathrm{pred}}\left(\Mcontrol_{comp}\right)-\inf _{\Mcontrol \in \mathcal{M}_{\psi}} \sum_{t=\Tburn+1}^{T} f_{t}\left(\Mcontrol, \Markov, \nature_1( \Markov),\ldots,\nature_t(\Markov) \right) \right)}_{\text{comparator approximation error}} \nonumber \\
    &+ \underbrace{\left( \inf_{\Mcontrol \in \mathcal{M}_{\psi}} \sum_{t=\Tburn+1}^{T} f_{t}\left(\Mcontrol, \Markov, \nature_1( \Markov),\ldots,\nature_t(\Markov) \right) - \inf_{\Mcontrol \in \mathcal{M}_{\psi}} \sum_{t=\Tburn+1}^{T} \ell_{t}\left(y^\Mcontrol_t, u^\Mcontrol_t\right) \right) }_{\text{comparator truncation error}} \nonumber \\
    &+ \underbrace{\left(\inf_{\Mcontrol \in \mathcal{M}_{\psi}} \sum_{t=1}^{T} \ell_{t}\left(y^\Mcontrol_t, u^\Mcontrol_t\right) - \sum_{t=0}^T \ell(y^{\pi_\star},u^{\pi_\star}) \right)}_{\text{policy approximation error}}
\end{align}
Notice that the last term is only required to extend the Theorem \ref{thm:polylogregret} to Corollary \ref{regretLQG}. The result of Theorem \ref{thm:polylogregret} does not require the last term. We will consider each term separately. \\

\noindent \textbf{Warm-up Regret:} From (\ref{asm:lipschitzloss}) and Lemma \ref{lem:boundednature}, we get $\sum_{t=1}^{\Tburn} \ell_{t}\left(y_{t}, u_{t}\right) \leq \Tburn L \kappa_y^2 $. \\

\noindent \textbf{Algorithm Truncation Error:} From (\ref{asm:lipschitzloss}), we get 
\begin{align*}
    \sum_{t=\Tburn+1}^{T} \!\!\!\!\!\!\!\! \ell_{t}\left(y_{t}, u_{t}\right)-\!\!\!\!\!\sum_{t=\Tburn+1}^{T}\!\!\!\!\!\!\!\! F_{t}^{\text{pred}}\left[\Mcontrol_{t: t-H}\right] &\leq \sum_{t=\Tburn+1}^{T}\left|\ell_{t}\left(y_{t}, u_{t}\right)-\ell_{t}\left(\nature_t(\Markov) + \sum_{i=1}^H G^{[i]} u_{t-i}, u_{t}\right)\right| \\
    &\leq\sum_{t=\Tburn+1}^{T} L\kappa_y \left\|y_t - \nature_t(\Markov) + \sum_{i=1}^H G^{[i]} u_{t-i}  \right \| \\
    &\leq \sum_{t=\Tburn+1}^{T} L\kappa_y \left\| \sum_{i=H+1} G^{[i]} u_{t-i}  \right \| \\
    &\leq T L \kappa_y \kappa_u \psi_\Markov(H+1)
\end{align*}
Since $\psi_\Markov(H+1)\leq 1/10T$, we get $\sum_{t=\Tburn+1}^{T} \ell_{t}\left(y_{t}, u_{t}\right)-\sum_{t=\Tburn+1}^{T} F_{t}^{\text{pred}}\left[\Mcontrol_{t: t-H}\right] \leq L \kappa_y \kappa_u/10$.\\

\noindent \textbf{Comparator Truncation Error:} Similar to algorithm truncation error above,
\begin{align*}
    \inf_{\Mcontrol \in \mathcal{M}_{\psi}} \sum_{t=\Tburn+1}^{T} \!\!\!\!\!\!\!f_{t}\left(\Mcontrol, \Markov, \nature_1( \Markov),\ldots,\nature_t(\Markov) \right) - \inf_{\Mcontrol \in \mathcal{M}_{\psi}} \sum_{t=\Tburn+1}^{T} \!\!\!\!\!\!\!\ell_{t}\left(y^\Mcontrol_t, u^\Mcontrol_t\right) 
    &\leq T L\kappa_\Markov \kappa_\Mcontrolset^2 \kappa_\nature^2 \psi_\Markov(H+1) \\
    &\leq L\kappa_\Markov \kappa_\Mcontrolset^2 \kappa_\nature^2 / 10
\end{align*}

\noindent \textbf{Policy Approximation Error:} By the assumption that $M_\star$ lives in the given convex set $\Mcontrolset_\psi$ and (\ref{asm:lipschitzloss}), using Lemma \ref{lem:deviation}, we get 
\begin{align*}
    \inf_{\Mcontrol \in \mathcal{M}_{\psi}} \sum_{t=1}^{T} \ell_{t}\left(y^\Mcontrol_t, u^\Mcontrol_t\right) - \sum_{t=1}^T \ell_t(y_t^{\pi_\star},u_t^{\pi_\star})
    &\leq \sum_{t=1}^T \ell_{t}\left(y^{\Mcontrol_\star}_t, u^{\Mcontrol_\star}_t\right) - \ell_t(y_t^{\pi_\star},u_t^{\pi_\star})  \\
    &\leq T L \kappa_y \left( \psi(H_0') \kappa_\nature + \psi(H_0') \kappa_\Markov \kappa_\nature \right) \\
    &\leq 2 T L \kappa_y \kappa_\Markov \kappa_\nature \psi(H_0')
\end{align*}
Since $\psi(H_0') \leq \kappa_\Mcontrolset/T$, we get $\inf_{\Mcontrol \in \mathcal{M}_{0}} \sum_{t=1}^{T} \ell_{t}\left(y^\Mcontrol_t, u^\Mcontrol_t\right) - \sum_{t=1}^T \ell_t(y_t^{\pi_\star},u_t^{\pi_\star}) \leq  2 L \kappa_\Mcontrolset \kappa_y \kappa_\Markov \kappa_\nature$. \\

\noindent $\mathbf{f^{\mathrm{pred}} \textbf{ Policy Regret}:}$ In order to utilize Theorem \ref{theo:fpred}, we need the strong convexity, Lipschitzness and smoothness properties stated in the theorem. Due to Lemma \ref{lem:concentrationgeneral}, Lemmas \ref{lem:smooth}-\ref{lem:lipschitz} provide those conditions. Combining these with (\ref{gradientdif}), we obtain the following adaptation of Theorem \ref{theo:fpred}:
\begin{lemma}
For step size $\eta = \frac{12}{\alpha t}$, the following bound holds with probability $1-\delta$:
\begin{align*}
   &\mathbf{f^{\mathrm{pred}} \textbf{ policy regret } } + \frac{\alpha}{48} \sum_{t = \Tburn + 1}^T \!\!\!\!\!\!\!\|\Mcontrol_t - \Mcontrol_{comp}\|_F^2 \\
   &\lesssim \frac{L^2{H'}^3\min\lbrace{m,p\rbrace}\kappa_\nature^4\kappa_{\Markov}^4\kappa_{\Mcontrolset}^2}{\min\lbrace \alpha, L\kappa_\nature^2\kappa_{\Markov}^2\rbrace}\left(\!1\!+\!\frac{\smooth}{\min\lbrace{m,p\rbrace} L\kappa_\Mcontrolset}\!\right)\log\left(\frac{T}{\delta}\right) + \frac{1}{\alpha} \! \sum_{t=\Tburn+1}^T \!\!\!\!\!\!\! C_{\text{approx}}^2 \epsilon^2_\Markov \!\!\left( \left \lceil \log_2 \left( \frac{t}{\Tburn}\right) \right \rceil, \delta \right)
\end{align*}
\end{lemma}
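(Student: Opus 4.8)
\emph{Approach.} The plan is to obtain this bound as the instantiation of Theorem~\ref{theo:fpred} — the generic regret guarantee for projected online gradient descent with memory, \emph{approximate} gradients, and conditionally strongly convex losses, itself an adaptation of Theorem~5 of \citet{simchowitz2020improper} — applied to the loss sequence $f_t^{\mathrm{pred}}$ and the iterates $\Mcontrol_t$ produced by \alg. The only problem-specific inputs are the gradient-error control~(\ref{gradientdif}) with $C_{\mathrm{approx}} = \sqrt{H'}\kappa_\Markov\kappa_\Mcontrolset\kappa_\nature^2(16\smooth+24L)$, and the epoch-wise decay $\epsilon_\Markov(i,\delta)=\OO(\mathrm{polylog}(T)/\sqrt{t_i})$ from Theorem~\ref{theo:id2normgeneral}.

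\emph{Step 1: check the structural hypotheses of Theorem~\ref{theo:fpred}.} I would first verify the three ingredients it requires on $\Mcontrolset$: (i) \textbf{conditional strong convexity} — the $\mathcal{F}_{t-H}$-conditional expectation of $f_t^{\mathrm{pred}}$ is $\alpha$-strongly convex in $\Mcontrol$, which is the strong-convexity lemma among Lemmas~\ref{lem:smooth}--\ref{lem:lipschitz} and requires $\Tburn\ge T_{cx}$ so the empirical disturbance Gram matrix concentrates around its mean and inherits the effective parameter $\strong\big(\sigma_z^2+\sigma_w^2(\sigma_{\min}(C)/(1+\|A\|^2))^2\big)\ge\alpha$; (ii) \textbf{Lipschitzness / smoothness} — each $f_t^{\mathrm{pred}}$ is $\OO(\sqrt{H'}\,L\,\kappa_\nature^2\kappa_\Markov^2\kappa_\Mcontrolset)$-Lipschitz and $\OO(H'\smooth\,\kappa_\nature^2\kappa_\Markov^2)$-smooth on $\Mcontrolset$, again from Lemmas~\ref{lem:smooth}--\ref{lem:lipschitz}; (iii) \textbf{bounded domain} — the Euclidean diameter of $\Mcontrolset$ is $\OO(\kappa_\Mcontrolset\sqrt{\min\{m,p\}})$ by Lemma~5.4 of \citet{simchowitz2020improper}.

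\emph{Step 2: the OGD potential argument with inexact gradients.} Using nonexpansiveness of $\proj_\Mcontrolset$ toward the comparator $\Mcontrol_{comp}$ and expanding $\|\Mcontrol_{t+1}-\Mcontrol_{comp}\|_F^2$, I would write $\nabla f_t(\Mcontrol_t,\wh\Markov_i,\dots)=\nabla f_t^{\mathrm{pred}}(\Mcontrol_t)+\mathrm{err}_t$ with $\|\mathrm{err}_t\|_F\le C_{\mathrm{approx}}\epsilon_\Markov(i,\delta)$ by~(\ref{gradientdif}), split the inner product, convert the $\nabla f_t^{\mathrm{pred}}$ part into a loss difference via conditional strong convexity (absorbing the martingale-difference sum over $\mathcal{F}_{t-H}$-conditional expectations into a Freedman-type bound, i.e.\ Theorem~\ref{azuma} — this is exactly what turns $\log T$ into $\log(T/\delta)$), and control the cross term $\langle\mathrm{err}_t,\Mcontrol_t-\Mcontrol_{comp}\rangle$ by Young's inequality as $\tfrac{\alpha}{48}\|\Mcontrol_t-\Mcontrol_{comp}\|_F^2+\tfrac{12}{\alpha}\|\mathrm{err}_t\|_F^2$. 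With $\eta_t=12/(\alpha t)$ the curvature terms telescope against the weighted potential differences $(\tfrac{1}{2\eta_t}-\tfrac{1}{2\eta_{t-1}})\|\Mcontrol_t-\Mcontrol_{comp}\|_F^2$, leaving a positive multiple of $\sum_t\|\Mcontrol_t-\Mcontrol_{comp}\|_F^2$ on the left (kept at $\tfrac{\alpha}{48}$, the rest discarded as nonnegative); the residual $\sum_t\eta_t\|\nabla f_t^{\mathrm{pred}}\|_F^2$ together with the memory-of-length-$H$ overhead $\sum_t|F_t^{\mathrm{pred}}[\Mcontrol_{t:t-H}]-f_t^{\mathrm{pred}}(\Mcontrol_t)|$ gives, after substituting the constants of Step~1, the first right-hand term with its $\log(T/\delta)$; and $\tfrac{12}{\alpha}\sum_t\|\mathrm{err}_t\|_F^2\le\tfrac{1}{\alpha}\sum_t C_{\mathrm{approx}}^2\epsilon_\Markov^2(\lceil\log_2(t/\Tburn)\rceil,\delta)$ gives the last term, using that $t$ in epoch $i$ satisfies $i=\lceil\log_2(t/\Tburn)\rceil$.

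\emph{Main obstacle.} The delicate point — and the reason I would invoke Theorem~\ref{theo:fpred} wholesale rather than redo it — is that the $f_t^{\mathrm{pred}}$ are themselves random and not individually strongly convex: only their $\mathcal{F}_{t-H}$-conditional means are, so the telescoping must be run against conditional expectations and the discrepancy $\sum_t(\mathbb{E}[\,\cdot\mid\mathcal{F}_{t-H}]-(\,\cdot\,))$ controlled by a martingale inequality, which is what forces both the high-probability statement and the extra $\mathrm{polylog}$ factors; combined with the bookkeeping of the with-memory-to-unary reduction between $F_t^{\mathrm{pred}}[\Mcontrol_{t:t-H}]$ and $f_t^{\mathrm{pred}}(\Mcontrol_t)$, this is the hard part. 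The one genuinely new element over \citet{simchowitz2020improper} is cosmetic but essential: because \alg re-estimates the Markov parameters each epoch, $\epsilon_\Markov$ depends on $t$ through $\lceil\log_2(t/\Tburn)\rceil$, so the error term must be kept in the summable form $\sum_t C_{\mathrm{approx}}^2\epsilon_\Markov^2(\lceil\log_2(t/\Tburn)\rceil,\delta)$ instead of the $T\,C_{\mathrm{approx}}^2\epsilon_\Markov^2(1,\delta)$ one gets from a fixed estimate — precisely the form that later collapses to $\mathrm{polylog}(T)$ via~(\ref{eq:logG}).
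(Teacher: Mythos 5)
Your proposal is correct and takes essentially the same route as the paper: the paper's proof likewise invokes Theorem~\ref{theo:fpred} wholesale (with the strong convexity, Lipschitzness, smoothness and diameter constants supplied by Lemmas~\ref{lem:smooth}--\ref{lem:lipschitz} and the covering bound on $\Mcontrolset$), substitutes the problem-specific constants following the proof steps of Theorem~4 of \citet{simchowitz2020improper} to obtain the first right-hand term, and bounds the gradient-error sum $\frac{6}{\alpha}\sum_t\|\boldsymbol{\epsilon}_t\|_2^2$ via~(\ref{gradientdif}), keeping the epoch-indexed form $\frac{1}{\alpha}\sum_t C_{\text{approx}}^2\epsilon_\Markov^2(\lceil\log_2(t/\Tburn)\rceil,\delta)$. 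You correctly identify the only genuinely new ingredient relative to \citet{simchowitz2020improper} — the per-epoch decay of $\epsilon_\Markov$ — which is exactly what the paper emphasizes.
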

\begin{proof}
Let $d = \min \{m, p\}$. We can upper bound the right hand side of Theorem \ref{theo:fpred} via following proof steps of Theorem 4 of \citet{simchowitz2020improper}: 
\begin{align}
    &\mathbf{f^{\mathrm{pred}} \textbf{p.r.} } \!\!-\!\! \left(\frac{6}{\alpha}\!\! \sum_{t=k+1}^{T}\!\!\left\|\boldsymbol{\epsilon}_{t}\right\|_{2}^{2}-\frac{\alpha}{48} \sum_{t=1}^{T}\left\|\Mcontrol_t - \Mcontrol_{comp} \right\|_{F}^{2}\right) \lesssim \frac{L^2{H'}^3 d \kappa_\nature^4\kappa_{\Markov}^4\kappa_{\Mcontrolset}^2}{\min\lbrace \alpha, L\kappa_\nature^2\kappa_{\Markov}^2\rbrace}\left(\!1\!+\!\frac{\smooth}{d L\kappa_\Mcontrolset}\!\right)\log\left(\frac{T}{\delta}\right) \nonumber  \\
    &\mathbf{f^{\mathrm{pred}} \textbf{p.r.} }  + \frac{\alpha}{48} \sum_{t=1}^{T}\left\|\Mcontrol_t - \Mcontrol_{comp} \right\|_{F}^{2} \lesssim \frac{L^2{H'}^3 d \kappa_\nature^4\kappa_{\Markov}^4\kappa_{\Mcontrolset}^2}{\min\lbrace \alpha, L\kappa_\nature^2\kappa_{\Markov}^2\rbrace}\left(\!1\!+\!\frac{\smooth}{d L\kappa_\Mcontrolset}\!\right)\log\left(\frac{T}{\delta}\right) \nonumber \\
    &\qquad\qquad\qquad\qquad\qquad\qquad\qquad\qquad\qquad + \frac{1}{\alpha} \sum_{t=\Tburn+1}^T  C_{\text{approx}}^2 \epsilon^2_\Markov\left( \left \lceil \log_2 \left( \frac{t}{\Tburn}\right) \right \rceil, \delta \right), \label{usegradientdiff} 
\end{align}
where (\ref{usegradientdiff}) follows from (\ref{gradientdif}). 
\end{proof}
\newpage
\noindent \textbf{Comparator Approximation Error:} 
\begin{lemma}
Suppose that $H' \geq 2H_{0}'-1+H,~\psi_\Markov(H\!+\!1) \leq 1 / 10T$. Then for all $\tau>0$,
\begin{align*}
&\textbf{Comp. \!app. \!err.} \leq 4L \kappa_y \kappa_u \kappa_\Mcontrolset \\
&\qquad +\!\!\!\!\! \sum_{t=\Tburn+1}^{T} \left[ \tau \left\|\Mcontrol_{t} \!-\! \Mcontrol_{\mathrm{comp}}\right\|_{F}^{2} + 8\kappa_y^2 \kappa_\nature^2 \kappa_{\Mcontrolset}^2 (H+H') \max \left\{L, \frac{L^2}{\tau} \right\}\epsilon^2_\Markov\left( \left \lceil \log_2 \left( \frac{t}{\Tburn}\right) \right \rceil, \delta \right) \right]
\end{align*}
\end{lemma}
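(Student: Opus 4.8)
The goal is to bound the comparator approximation error
\[
E_{\mathrm{comp}}\;=\;\sum_{t=\Tburn+1}^{T} f_{t}^{\mathrm{pred}}(\Mcontrol_{\mathrm{comp}})\;-\;\inf_{\Mcontrol \in \Mcontrolset_0} \sum_{t=\Tburn+1}^{T} f_{t}\big(\Mcontrol, \Markov, \nature_1(\Markov),\ldots,\nature_t(\Markov)\big).
\]
The plan is to take the (so far free) comparator $\Mcontrol_{\mathrm{comp}}$ equal to the minimizer over $\Mcontrolset_0$ of the idealized cumulative loss $\sum_{t}f_{t}(\,\cdot\,,\Markov,\nature_1(\Markov),\ldots,\nature_t(\Markov))$. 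A length-$H_0'$ \DFC embeds into a length-$H'$ one by zero padding, and the hypothesis $H'\ge 2H_0'-1+H$ keeps all the truncation windows appearing below consistent, so this $\Mcontrol_{\mathrm{comp}}$ lies in $\Mcontrolset_0\subset\Mcontrolset\left(H',\kappa_\Mcontrolset\right)$ and is simultaneously an admissible comparator for the online-gradient-descent analysis of the $f^{\mathrm{pred}}$ policy-regret term. With this choice the infimum is attained at $\Mcontrol_{\mathrm{comp}}$, so $E_{\mathrm{comp}}=\sum_{t}\big(f_{t}^{\mathrm{pred}}(\Mcontrol_{\mathrm{comp}})-f_{t}(\Mcontrol_{\mathrm{comp}},\Markov,\nature_1(\Markov),\ldots,\nature_t(\Markov))\big)$, and it suffices to control each summand.

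First I would unfold the two loss values. By Definition~8.1 of \citet{simchowitz2020improper}, $f_{t}^{\mathrm{pred}}(\Mcontrol_{\mathrm{comp}})$ is the $H$-truncated counterfactual loss of the \DFC $\Mcontrol_{\mathrm{comp}}$ computed \emph{along \alg's realized run} --- it drives the controller with the Nature's-$y$ estimates $\nature_\tau(\wh\Markov_{i(\tau)})$ that \alg actually used --- whereas $f_{t}(\Mcontrol_{\mathrm{comp}},\Markov,\nature_1(\Markov),\ldots)$ is the clean object of (\ref{counterfactual}) built from the true Markov operator $\Markov$ and the true $\nature_\tau(\Markov)$. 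Hence the two differ only through the mismatch $\nature_\tau(\wh\Markov_{i(\tau)})-\nature_\tau(\Markov)$ for $\tau$ ranging over the window $\{t-H-H'+1,\ldots,t\}$, and Assumption~\ref{asm:lipschitzloss} together with the uniform bounds $\|y_t\|\le\kappa_y$, $\|u_t\|\le\kappa_u$, $\|\nature_\tau(\wh\Markov)\|\le 2\kappa_\nature$ of Lemma~\ref{lem:boundednature} bounds each summand by $L\kappa_y$ times those mismatches propagated through at most $H$ coefficients of $\Markov$ and $H'$ coefficients of $\Mcontrol_{\mathrm{comp}}$; this is where the $(H+H')$ factor in the statement comes from.

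The core estimate is the decomposition, valid in epoch $i=\lceil\log_2(t/\Tburn)\rceil$,
\[
\nature_\tau(\wh\Markov_{i})-\nature_\tau(\Markov)\;=\;\underbrace{\textstyle\sum_{j=1}^{H}\big(G^{[j]}-\wh G_i^{[j]}\big)u_{\tau-j}}_{\text{estimation part}}\;+\;\underbrace{\textstyle\sum_{j>H}G^{[j]}u_{\tau-j}}_{\text{truncation part}},
\]
where the truncation part has norm $\le\psi_\Markov(H+1)\kappa_u\le\kappa_u/10T$, and $\sum_{j\le H}\|G^{[j]}-\wh G_i^{[j]}\|\le\epsilon_\Markov(i,\delta)$ by Theorems~\ref{theo:id2norm} and~\ref{theo:id2normgeneral}. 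To surface a $\|\Mcontrol_t-\Mcontrol_{\mathrm{comp}}\|_F$ factor I would rewrite each realized input $u_{\tau-j}=\sum_l\Mcontrol_{\tau-j}^{[l]}\nature_{\tau-j-l}(\wh\Markov)$ as the comparator's counterfactual input $\sum_l\Mcontrol_{\mathrm{comp}}^{[l]}\nature_{\tau-j-l}(\wh\Markov)$ plus a correction of norm $\le 2\kappa_\nature\|\Mcontrol_{\tau-j}-\Mcontrol_{\mathrm{comp}}\|_F$; the estimation part then splits into a piece $\lesssim\kappa_\Mcontrolset\kappa_\nature\,\epsilon_\Markov(i,\delta)$ and a cross piece $\lesssim\kappa_\nature\,\epsilon_\Markov(i,\delta)\,\|\Mcontrol_{\tau-j}-\Mcontrol_{\mathrm{comp}}\|_F$, and Young's inequality $ab\le\tau a^2+\tfrac{1}{4\tau}b^2$ converts the cross piece into $\tau\|\Mcontrol_{\tau-j}-\Mcontrol_{\mathrm{comp}}\|_F^2$ plus a $\tfrac{L^2}{\tau}$-type multiple of $\epsilon_\Markov^2(i,\delta)$, accounting for the $\max\{L,L^2/\tau\}$ shape. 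Summing over $t$, re-indexing the $\tau$-sums into $t$-sums (each iterate $\Mcontrol_s$ enters $\OO(H+H')$ windows, absorbed into the stated constants), and collapsing the truncation contribution --- each step $\le\psi_\Markov(H+1)=1/10T$, hence over $T$ steps at most the constant $4L\kappa_y\kappa_u\kappa_\Mcontrolset$ --- then yields the claimed inequality.

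The main obstacle is the last clause: one must ensure that \emph{no un-paired first-power} $\epsilon_\Markov$ term survives, because $\sum_{t}\epsilon_\Markov(\lceil\log_2(t/\Tburn)\rceil,\delta)=\OO(\sqrt T)$, whereas only the \emph{squared} quantity $\sum_{t}\epsilon^2_\Markov(\lceil\log_2(t/\Tburn)\rceil,\delta)$ is polylogarithmic by (\ref{eq:logG}). Achieving this requires keeping every estimation-error contribution attached either to $\epsilon_\Markov^2$ (via the Young step above) or to a $\|\Mcontrol_t-\Mcontrol_{\mathrm{comp}}\|_F^2$ term that is subsequently cancelled by the $+\tfrac{\alpha}{48}\sum_t\|\Mcontrol_t-\Mcontrol_{\mathrm{comp}}\|_F^2$ budget that strong convexity supplies in the $f^{\mathrm{pred}}$ policy-regret lemma; this bookkeeping, adapted to \alg's time-varying estimates, is the analogue of the corresponding step in the proof of Theorem~5 of \citet{simchowitz2020improper}.
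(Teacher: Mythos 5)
Your overall framing is right --- the error reduces to controlling $\sum_t\bigl(f_t^{\mathrm{pred}}(\Mcontrol_{\mathrm{comp}})-f_t(\Mcontrol_{\mathrm{comp}},\Markov,\ldots)\bigr)$ plus truncation contributions, and the paper itself proves this lemma only by invoking Lemmas~E.3 and~E.4 and Proposition~8.2 of \citet{simchowitz2020improper} with the epoch-dependent $\epsilon_\Markov(i,\delta)$ --- but there is a genuine gap at the one point that matters. You take $\Mcontrol_{\mathrm{comp}}$ to be the zero-padded minimizer $\Mcontrol_0^\star$ of the idealized loss over $\Mcontrolset_0$. With that choice the input mismatch at time $t$ is $\sum_l M_{\mathrm{comp}}^{[l]}\bigl(\nature_{t-l}(\wh\Markov_i)-\nature_{t-l}(\Markov)\bigr)$, and after your own decomposition its dominant piece is $\sum_l M_{\mathrm{comp}}^{[l]}\sum_j(G^{[j]}-\wh G_i^{[j]})\,u^{\mathrm{comp}}_{t-l-j}$, of size $\Theta\bigl(\kappa_\Mcontrolset^2\kappa_\nature\,\epsilon_\Markov(i,\delta)\bigr)$. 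This term is \emph{first order} in $\epsilon_\Markov$, carries no $\|\Mcontrol_s-\Mcontrol_{\mathrm{comp}}\|_F$ factor, and has no sign structure to exploit, so it can be fed neither into Young's inequality nor into the $\tfrac{\alpha}{48}\sum_t\|\Mcontrol_t-\Mcontrol_{\mathrm{comp}}\|_F^2$ budget; summed over $t$ it contributes $\sum_t\epsilon_\Markov\bigl(\lceil\log_2(t/\Tburn)\rceil,\delta\bigr)=\Tilde{\OO}(\sqrt T)$, which is exactly the failure mode you flag in your last paragraph but do not actually avert.

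The missing idea is that $\Mcontrol_{\mathrm{comp}}$ must \emph{not} be $\Mcontrol_0^\star$ itself: the cited Lemma~E.3 constructs $\Mcontrol_{\mathrm{comp}}$ by composing $\Mcontrol_0^\star$ with the error operator $(\Markov-\wh\Markov)$ and $\Mcontrol_0^\star$ again, precisely so that the first-order mismatch above cancels and only (i) terms quadratic in $\epsilon_\Markov$ and (ii) cross terms proportional to $\epsilon_\Markov\cdot\|\Mcontrol_s-\Mcontrol_{\mathrm{comp}}\|_F$ survive; Young's inequality then yields the stated $\tau\|\cdot\|_F^2+\max\{L,L^2/\tau\}\,\epsilon_\Markov^2$ shape. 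This composed comparator has memory length $2H_0'-1+H$, which is the real reason for the hypothesis $H'\ge 2H_0'-1+H$ --- you read that hypothesis as mere window bookkeeping. A secondary point specific to this paper is that $\wh\Markov_i$ changes across epochs, so the corrected comparator is itself epoch-dependent; one must either fix it using a single estimate and absorb the drift into the $\epsilon_\Markov^2$ terms or argue epoch by epoch, a subtlety the paper's citation-style proof also leaves implicit but which your argument would need to address once the corrected construction is in place.
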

\begin{proof}
The lemma can be proven using the proof of Proposition 8.2 of \citet{simchowitz2020improper}. Using Lemma E.3 and adapting Lemma E.4 in \citet{simchowitz2020improper} such that $\Mcontrol_{comp}^{[i]}= M_{*}^{[i]} I_{i \leq H_0'-1}+\sum_{a=0}^{H_0'-1} \sum_{b=0}^{H} \sum_{c=0}^{H_0'-1} M_{*}^{[a]} (\wh G^{[b]}_1-G^{[b]}) M_{*}^{[c]} \mathbb{I}_{a+b+c=i}$ for $\Mcontrol_{*} =  \argmin_{\Mcontrol \in \Mcontrolset_\psi} \sum_{t=\Tburn+1}^{T} \ell_t(y_{t}^\Mcontrol,u_{t}^\Mcontrol)$ and due to Lemma \ref{lem:concentrationgeneral} we have $\Mcontrol_{comp} \in \Mcontrolset$:
\begin{align*}
    &\sum_{t=\Tburn+1}^{T} f_{t}^{\mathrm{pred}}\left(\Mcontrol_{comp}\right)-\inf_{\Mcontrol \in \mathcal{M}_{0}} \sum_{t=\Tburn+1}^{T} f_{t}\left(\Mcontrol, \Markov, \nature_1( \Markov),\ldots,\nature_t(\Markov) \right) \\
    &\leq 4L\kappa_y \!\!\!\!\!\!\!\sum_{t=\Tburn+1}^{T}\!\!\!\!\! \epsilon^2_\Markov\left( \left \lceil \log_2 \left( \frac{t}{\Tburn}\right) \right \rceil, \delta \right) \kappa_{\Mcontrolset}^{2} \kappa_\nature \left(\kappa_{\Mcontrolset}\!+\!\frac{\kappa_\nature}{4 \tau}\right) \!+\! \kappa_u \kappa_\Mcontrolset \psi_\Markov(H\!+\!1) \!+\! (H\!+\!H')\tau \left\|\Mcontrol_{t} \!-\! \Mcontrol_{\mathrm{comp}}\right\|_{F}^{2}  \\
    &\leq \sum_{t=\Tburn+1}^{T} \left[ \tau \left\|\Mcontrol_{t} \!-\! \Mcontrol_{\mathrm{comp}}\right\|_{F}^{2} + 8\kappa_y^2 \kappa_\nature^2 \kappa_{\Mcontrolset}^2 (H+H') \max \left\{L, \frac{L^2}{\tau} \right\}\epsilon^2_\Markov\left( \left \lceil \log_2 \left( \frac{t}{\Tburn}\right) \right \rceil, \delta \right) \right] \\
    &+4T L \kappa_y \kappa_u \kappa_\Mcontrolset \psi_\Markov(H\!+\!1) \\
    &\leq \! 4L \kappa_y \kappa_u \kappa_\Mcontrolset \!+\!\!\!\!\!\!\!\! \sum_{t=\Tburn+1}^{T} \left[ \tau \left\|\Mcontrol_{t} \!-\! \Mcontrol_{\mathrm{comp}}\right\|_{F}^{2} + 8\kappa_y^2 \kappa_\nature^2 \kappa_{\Mcontrolset}^2 (H+H') \max \left\{L, \frac{L^2}{\tau} \right\}\epsilon^2_\Markov\left( \left \lceil \log_2 \left( \frac{t}{\Tburn}\right) \right \rceil, \delta \right) \right]
\end{align*}
\end{proof}

Combining all the terms bounded above, with $\tau = \frac{\alpha}{48}$ gives 
\begin{align*}
    &\reg(T) \\
    &\lesssim \Tburn L \kappa_y^2 + L \kappa_y \kappa_u/10 + L\kappa_\Markov \kappa_\Mcontrolset^2 \kappa_\nature^2 / 10 + 2 L \kappa_\Mcontrolset \kappa_y \kappa_\Markov \kappa_\nature + 4L \kappa_y \kappa_u \kappa_\Mcontrolset \\
    & + \frac{L^2{H'}^3\min\lbrace{m,p\rbrace}\kappa_\nature^4\kappa_{\Markov}^4\kappa_{\Mcontrolset}^2}{\min\lbrace \alpha, L\kappa_\nature^2\kappa_{\Markov}^2\rbrace}\left(\!1\!+\!\frac{\smooth}{\min\lbrace{m,p\rbrace} L\kappa_\Mcontrolset}\!\right)\log\left(\frac{T}{\delta}\right) + \frac{1}{\alpha} \! \sum_{t=\Tburn+1}^T \!\!\!\!\!\!\! C_{\text{approx}}^2 \epsilon^2_\Markov \!\!\left( \left \lceil \log_2 \left( \frac{t}{\Tburn}\right) \right \rceil, \delta \right) \\
    &+ \sum_{t=\Tburn+1}^{T} 8\kappa_y^2 \kappa_\nature^2 \kappa_{\Mcontrolset}^2 (H+H') \max \left\{L, \frac{48L^2}{\alpha} \right\}\epsilon^2_\Markov\left( \left \lceil \log_2 \left( \frac{t}{\Tburn}\right) \right \rceil, \delta \right)  \\
    &\lesssim \Tburn L \kappa_y^2 \\
    &+ \frac{L^2{H'}^3\min\lbrace{m,p\rbrace}\kappa_\nature^4\kappa_{\Markov}^4\kappa_{\Mcontrolset}^2}{\min\lbrace \alpha, L\kappa_\nature^2\kappa_{\Markov}^2\rbrace}\left(\!1\!+\!\frac{\smooth}{\min\lbrace{m,p\rbrace} L\kappa_\Mcontrolset}\!\right)\log\left(\frac{T}{\delta}\right) \\
    &+ \sum_{t=\Tburn+1}^{T} \epsilon^2_\Markov \!\!\left( \left \lceil \log_2 \left( \frac{t}{\Tburn}\right) \right \rceil, \delta \right) \left\{ \frac{H' \kappa_\Markov^2 \kappa_{\Mcontrolset}^2 \kappa_\nature^4 \left(\smooth + L\right)^2}{\alpha} + \kappa_y^2 \kappa_\nature^2 \kappa_{\Mcontrolset}^2 (H+H') \max \left\{L, \frac{48L^2}{\alpha} \right\} \right\}
\end{align*}
\end{proof}

Following the doubling update rule of \alg for the epoch lengths, after $T$ time steps of agent-environment interaction, the number of epochs is $\mathcal{O}\left(\log T\right)$. From Lemma~\ref{lem:concentrationgeneral}, at any time step $t$ during the $i$'th epoch, \textit{i.e.}, $t\in[ t_i,\ldots, t_{i}-1]$, $\epsilon^2_\Markov(i, \delta) = \OO(polylog(T)/2^{i-1} \Tbase)$. Therefore, update rule of \alg yields,
\begin{align}\label{eq:logG}
    \sum\nolimits_{t=\Tbase+1}^{T} \epsilon^2_\Markov \bigg( \big \lceil \log_2 \big( \frac{t}{\Tburn} \big) \big \rceil, \delta \bigg) =\sum\nolimits_{i=1}^{\OO \left(\log T\right)} 2^{i-1} \Tbase \epsilon^2_\Markov \left( i, \delta \right) \leq \OO \left(polylog(T)\right)
\end{align}

Using the result of (\ref{eq:logG}), we can bound the third term of the regret upper bound in Theorem~\ref{thm:main} with a $polylog(T)$ bound which gives the advertised result in Theorem \ref{thm:polylogregret} and using the policy approximation error term we obtain Corollary \ref{regretLQG}.

\null\hfill$\square$

\section{Additional Results} \label{apx:convex}

Consider the case where the condition on persistence of excitation of $\Mcontrolset$ does not hold. In order to efficiently learn the model parameters and minimize the regret, one can add an additional independent Gaussian excitation to the control input $u_t$ for each time step $t$. This guarantees the concentration of Markov parameter estimates, but it also results in an extra regret term in the bound of Theorem~\ref{thm:main}. If the variance of the added Gaussian vector is set to be $\wt\sigma^2$, exploiting the Lipschitzness of the loss functions, the additive regret of the random excitation is $\Tilde{\OO}(T\wt\sigma)$. Following the results in Lemma~\ref{lem:concentrationgeneral}, the additional random excitation helps in parameter estimation and concentration of Markov parameters up to the error of $\OO(polylog(T)/\sqrt{\wt\sigma^2t)}$. Since the contribution of the error in the Markov parameter estimates in the Theorem~\ref{thm:main} is quadratic, the contribution of this error in the regret through $R_3$ will be $\OO(polylog(T)/\wt\sigma^2)$. 

\begin{corollary}\label{corr:T2/3}
When the condition on persistent excitation of all $\Mcontrol $ is not fulfilled, adding independent Gaussian vectors with variance of $\OO(1/T^{1/3})$ to the inputs in adaptive control period results in the regret upper bound of $\Tilde{\OO}(T^{2/3})$. 
\end{corollary}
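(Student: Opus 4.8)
The plan is to restore persistence of excitation by hand with a small independent perturbation, run the regret bound of Theorem~\ref{thm:main} with the now-available estimation guarantees, and pay an extra $\widetilde{\OO}(T\wt\sigma)$ for the perturbation itself. Concretely, during the adaptive control period \alg commits at time $t$ to $u_t = \sum_{j=0}^{H'-1} M_t^{[j]}\nature_{t-j}(\wh \Markov_i) + \xi_t$ with $\xi_t \sim \mathcal{N}(0,\wt\sigma^2 I)$ drawn i.i.d.\ and independent of the history. Since $\wt\sigma^2 = \OO(1/T^{1/3})$ is bounded by a constant, a union bound gives $\|\xi_t\| \le \OO(\wt\sigma\sqrt{p\log(T/\delta)})$ for all $t$ with high probability, and the boundedness statements of Lemma~\ref{lem:boundednature} continue to hold with the constants $\kappa_u,\kappa_y$ enlarged only by an additive $\OO(\wt\sigma\sqrt{p\log(T/\delta)})$ term; in particular inputs, outputs and Nature's-$y$ estimates remain $\OO(\kappa_u)$ and $\OO(\kappa_y)$.

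First I would re-derive the estimation error. The i.i.d.\ component $\xi_t$ enters the regressor $\phi_t$ through the Markov operator exactly as in Appendix~\ref{subsec:persistence}; following the argument of Lemma~\ref{closedloop_persistence_appendix} and Lemma~\ref{lem:concentrationgeneral}, the smallest singular value of the map from disturbances-and-excitation to $\phi_t$ is now bounded below by a quantity that is $\Theta(\wt\sigma)$ up to problem constants, \emph{regardless} of the quality of $\wh\Markov_i$ and regardless of whether the original persistence-of-excitation condition on $\Mcontrolset(H',\kappa_\Mcontrolset)$ holds. Hence the conclusion of Theorem~\ref{theo:id2normgeneral} carries over with $\sigma_\star$ replaced by $c\,\wt\sigma$: in epoch $i$, $\sum_{j\ge 1}\|\wh G_i^{[j]} - G^{[j]}\| \le \wt\epsilon_\Markov(i,\delta)$ where $\wt\epsilon_\Markov(i,\delta) = \OO(\mathrm{polylog}(T)/(\wt\sigma\sqrt{2^{i-1}\Tbase}))$. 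The warm-up phase is unchanged (it still excites with the fixed variance $\sigma_u^2$), and because persistence of excitation during adaptive control is now automatic rather than inherited from the model estimate, the warm-up length $\Tburn = \Tmax$ stays $\mathrm{polylog}(T)$; in particular the stability and conditional-strong-convexity thresholds are governed by $\sigma_w,\sigma_z,\sigma_u$ and do not pick up a bad power of $1/\wt\sigma$.

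Next I would invoke Theorem~\ref{thm:main}, with two modifications. (i) The cost actually paid at step $t$ is $\ell_t(y_t,u_t)$ with $u_t$ containing the extra $\xi_t$; by Assumption~\ref{asm:lipschitzloss} the effect of $\xi_t$ on $u_t$ and, through $\Markov$, on the future outputs changes the incurred cost by at most $\OO(L\kappa_y\kappa_\Markov\wt\sigma\sqrt{p\log(T/\delta)})$ per step, hence by $\widetilde{\OO}(T\wt\sigma)$ over the horizon. (ii) The Markov-error contribution in Theorem~\ref{thm:main} is quadratic in $\epsilon_\Markov$; substituting $\wt\epsilon_\Markov$ and using the doubling-epoch telescoping identity~(\ref{eq:logG}) gives $\sum_{i=1}^{\OO(\log T)} 2^{i-1}\Tbase\,\wt\epsilon_\Markov^2(i,\delta) \le \OO(\mathrm{polylog}(T)/\wt\sigma^2)$ times the problem-dependent prefactor. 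All remaining terms of Theorem~\ref{thm:main} are $\mathrm{polylog}(T)$. Collecting, $\reg(T) \lesssim \mathrm{polylog}(T) + \widetilde{\OO}(T\wt\sigma) + \OO(\mathrm{polylog}(T)/\wt\sigma^2)$, and balancing the last two terms (i.e.\ $T\wt\sigma = \wt\sigma^{-2}$) forces $\wt\sigma = T^{-1/3}$ and yields $\reg(T) = \widetilde{\OO}(T^{2/3})$.

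The main obstacle is the estimation step: one has to verify carefully that the argument behind Lemma~\ref{closedloop_persistence_appendix} and Lemma~\ref{lem:concentrationgeneral} really does go through with the independent excitation substituting for the (now absent) persistence-of-excitation condition on $\Mcontrolset$, and that the resulting warm-up length, stability thresholds, and the strong convexity of the counterfactual losses used by the online gradient descent do not degrade as $\wt\sigma\to 0$ --- they should not, since $\xi_t$ enters only through the already-bounded Nature's-$y$ approximations and all the relevant conditioning is on $\sigma_w,\sigma_z,\sigma_u$. Everything else is a routine re-run of the proof of Theorem~\ref{thm:main}.
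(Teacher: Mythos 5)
Your proposal matches the paper's own (informal) justification essentially step for step: add i.i.d.\ Gaussian excitation to the adaptive-control inputs, charge $\widetilde{\OO}(T\wt\sigma)$ for it via the Lipschitzness of the losses, observe that the excitation restores persistence of excitation so the Markov-parameter error becomes $\OO(\mathrm{polylog}(T)/(\wt\sigma\sqrt{t}))$ and enters the regret of Theorem~\ref{thm:main} quadratically as $\OO(\mathrm{polylog}(T)/\wt\sigma^{2})$, then balance the two terms. The only quibble is a notational slip you share with the corollary statement itself: your balancing correctly yields $\wt\sigma=T^{-1/3}$, i.e.\ \emph{variance} $\wt\sigma^{2}=T^{-2/3}$, which contradicts your opening assumption $\wt\sigma^{2}=\OO(T^{-1/3})$ --- the quantity of order $T^{-1/3}$ should be the standard deviation, not the variance.
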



\section{Technical Lemmas and Theorems} \label{technical}
\begin{theorem}[Matrix Azuma \citep{tropp2012user}]\label{azuma} Consider a finite adapted sequence $\left\{\boldsymbol{X}_{k}\right\}$ of self-adjoint matrices
in dimension $d,$ and a fixed sequence $\left\{\boldsymbol{A}_{k}\right\}$ of self-adjoint matrices that satisfy
\begin{equation*}
\mathbb{E}_{k-1} \boldsymbol{X}_{k}=\mathbf{0} \text { and } \boldsymbol{A}_{k}^{2} \succeq \boldsymbol{X}_{k}^{2}  \text { almost surely. }
\end{equation*}
Compute the variance parameter
\begin{equation*}
\sigma^{2}\coloneqq \left\|\sum_{k} \boldsymbol{A}_{k}^{2} \right\|
\end{equation*}
Then, for all $t \geq 0$
\begin{equation*}
\mathbb{P}\left\{\lambda_{\max }\left(\sum_{k} \boldsymbol{X}_{k} \right) \geq t\right\} \leq d \cdot \mathrm{e}^{-t^{2} / 8 \sigma^{2}}
\end{equation*}

\end{theorem}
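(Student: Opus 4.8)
The plan is to prove the bound by the matrix Laplace-transform (Chernoff) method for self-adjoint random matrices, following \citet{tropp2012user}. First I would convert the tail event on the largest eigenvalue into a statement about a trace of a matrix exponential: for any fixed $\theta > 0$,
\[
\Prob\Big\{\lambda_{\max}\Big(\sum_k X_k\Big) \geq t\Big\} \leq e^{-\theta t}\, \E\, \Tr \exp\Big(\theta \sum_k X_k\Big),
\]
which follows from Markov's inequality applied to the monotone map $s \mapsto e^{\theta s}$, the spectral identity $e^{\theta \lambda_{\max}(M)} = \lambda_{\max}(e^{\theta M})$, and $\lambda_{\max}(e^{\theta M}) \leq \Tr e^{\theta M}$ since $e^{\theta M} \succeq 0$. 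This reduces the whole problem to controlling the trace moment generating function $\E\, \Tr \exp(\theta \sum_k X_k)$.

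The central difficulty, and the main obstacle of the proof, is non-commutativity: unlike the scalar case, $\exp(\theta\sum_k X_k)$ does not factor as a product of per-term exponentials, so one cannot take conditional expectations termwise. The key device is the subadditivity of the matrix cumulant generating function, which rests on Lieb's concavity theorem (the map $H \mapsto \Tr\exp(L + \log H)$ is concave on positive-definite $H$). Peeling the increments off from the last index and using the tower property of conditional expectation, Lieb's theorem together with Jensen's inequality yields the telescoping bound
\[
\E\, \Tr\exp\Big(\theta\sum_{k} X_k\Big) \leq \Tr\exp\Big(\sum_k \log \E_{k-1}\, e^{\theta X_k}\Big),
\]
so it suffices to control each conditional log-MGF $\log \E_{k-1} e^{\theta X_k}$ separately and then re-sum.

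Next I would supply the per-increment estimate via the matrix Hoeffding lemma. Since $X_k$ is conditionally centered, $\E_{k-1} X_k = 0$, and satisfies $X_k^2 \preceq A_k^2$ almost surely with $A_k$ \emph{fixed}, one obtains a semidefinite bound $\log \E_{k-1} e^{\theta X_k} \preceq c\,\theta^2 A_k^2$ for an absolute constant $c$; the deterministic nature of $A_k$ is exactly what makes the right-hand side a fixed matrix that can be summed outside the expectation. Substituting this into the previous display and using monotonicity of the trace exponential with respect to the Loewner order gives
\[
\E\, \Tr\exp\Big(\theta\sum_k X_k\Big) \leq \Tr\exp\Big(c\,\theta^2 \sum_k A_k^2\Big) \leq d\,\exp\Big(c\,\theta^2 \big\|\textstyle\sum_k A_k^2\big\|\Big) = d\,\exp\big(c\,\theta^2 \sigma^2\big),
\]
where I used $\Tr \exp(M) \leq d\, e^{\lambda_{\max}(M)}$ and that $\sum_k A_k^2 \succeq 0$, so its largest eigenvalue is exactly $\sigma^2$.

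Finally, combining the first and last displays gives $\Prob\{\cdots\} \leq d\,\exp(-\theta t + c\,\theta^2\sigma^2)$ for every $\theta > 0$. Minimizing the exponent over $\theta$ (at $\theta^\star = t/(2c\sigma^2)$) produces the Gaussian tail $d\,\exp\big(-t^2/(4c\sigma^2)\big)$, and the constant $c$ furnished by the matrix Hoeffding estimate is precisely what makes this the advertised $d\,e^{-t^2/8\sigma^2}$ (concretely $c=2$, reflecting that the $X_k$ need not be conditionally symmetric; a symmetry assumption would sharpen $c$ and hence the exponent). The technical heart remains the second step: securing subadditivity of the matrix CGF, where Lieb's concavity theorem substitutes for the scalar factorization $\E \prod_k e^{\theta X_k} = \prod_k \E\, e^{\theta X_k}$ that is unavailable for non-commuting matrices.
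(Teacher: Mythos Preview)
The paper does not prove this theorem; it is quoted verbatim from \citet{tropp2012user} as a technical tool and given no argument in the paper's body or appendix. Your proposal is a faithful reconstruction of Tropp's original proof: the matrix Laplace transform reduction, Lieb's concavity to obtain subadditivity of the matrix CGF, the per-increment Hoeffding-type CGF bound under $X_k^2 \preceq A_k^2$ with deterministic $A_k$, and the final optimization in $\theta$ are exactly the steps in \citet{tropp2012user}, and the constant you arrive at matches the stated $e^{-t^2/8\sigma^2}$. There is nothing to compare against in the paper itself, so your sketch stands on its own as correct.
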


\begin{theorem}[Self-normalized bound for vector-valued martingales~\citep{abbasi2011improved}]
\label{selfnormalized}
Let $\left(\mathcal{F}_{t} ; k \geq\right.$
$0)$ be a filtration, $\left(m_{k} ; k \geq 0\right)$ be an $\mathbb{R}^{d}$-valued stochastic process adapted to $\left(\mathcal{F}_{k}\right),\left(\eta_{k} ; k \geq 1\right)$
be a real-valued martingale difference process adapted to $\left(\mathcal{F}_{k}\right) .$ Assume that $\eta_{k}$ is conditionally sub-Gaussian with constant $R$. Consider the martingale
\begin{equation*}
S_{t}=\sum_{k=1}^{t} \eta_{k} m_{k-1}
\end{equation*}
and the matrix-valued processes
\begin{equation*}
V_{t}=\sum_{k=1}^{t} m_{k-1} m_{k-1}^{\top}, \quad \overline{V}_{t}=V+V_{t}, \quad t \geq 0
\end{equation*}
Then for any $0<\delta<1$, with probability $1-\delta$
\begin{equation*}
\forall t \geq 0, \quad\left\|S_{t}\right\|^2_{V_{t}^{-1}} \leq 2 R^{2} \log \left(\frac{\operatorname{det}\left(\overline{V}_{t}\right)^{1 / 2} \operatorname{det}(V)^{-1 / 2}}{\delta}\right)
\end{equation*}

\end{theorem}

\begin{theorem}[Theorem 8 of \citet{simchowitz2020improper}] \label{theo:fpred}
Suppose that $\mathcal{K} \subset \mathbb{R}^{d}$ and $h \geq 1$. Let $F_{t}:=\mathcal{K}^{h+1} \rightarrow \mathbb{R}$ be a sequence of $L_{c}$ coordinatewise-Lipschitz functions with the induced unary functions
$f_{t}(x):=F_{t}(x, \ldots, x)$ which are $L_{\mathrm{f}}$-Lipschitz and $\beta$-smooth. Let $f_{t ; k}(x):=\mathbb{E}\left[f_{t}(x) | \mathcal{F}_{t-k}\right]$ be $\alpha$-strongly convex on $\mathcal{K}$ for a filtration $\left(\mathcal{F}_{t}\right)_{t \geq 1}$. Suppose that $z_{t+1}=\Pi_{\mathcal{K}}\left(z_{t}-\eta \boldsymbol{g}_{t} \right)$, where $\boldsymbol{g}_{t}=\nabla f_{t}\left(z_{t}\right)+\epsilon_{t}$ for  $\left\|\boldsymbol{g}_{t}\right\|_{2} \leq L_{\mathbf{g}},$ and $\operatorname{Diam}(\mathcal{K}) \leq D$. Let the gradient descent iterates be applied for $t \geq t_{0}$ for some $t_{0} \leq k,$ with $z_{0}=z_{1}=\cdots=z_{t_{0}} \in \mathcal{K}$ for $k\geq 1$. Then with step size $\eta_{t}=\frac{3}{\alpha t},$ the
following bound holds with probability $1 - \delta$ for all comparators $z_{\star} \in \mathcal{K}$ simultaneously:
\begin{align*}
\sum_{t=k+1}^{T} &f_{t}\left(z_{t}\right)-f_{t}\left(z_{\star}\right)-\left(\frac{6}{\alpha} \sum_{t=k+1}^{T}\left\|\boldsymbol{\epsilon}_{t}\right\|_{2}^{2}-\frac{\alpha}{12} \sum_{t=1}^{T}\left\|z_{t}-z_{\star}\right\|_{2}^{2}\right) \\ & \lesssim \alpha k D^{2}+\frac{\left(k L_{\mathrm{f}}+h^{2} L_{\mathrm{c}}\right) L_{\mathrm{g}}+k d L_{\mathrm{f}}^{2}+k \beta L_{\mathrm{g}}}{\alpha} \log (T)+\frac{k L_{\mathrm{f}}^{2}}{\alpha} \log \left(\frac{1+\log \left(e+\alpha D^{2}\right)}{\delta}\right)
\end{align*}
\end{theorem}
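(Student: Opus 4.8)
The plan is to prove this via the classical online gradient descent (OGD) regret analysis for strongly convex losses, modified to handle three features absent from the textbook setting: the loss is strongly convex only in \emph{conditional} expectation (the $f_{t;k}$, not $f_t$ itself), the supplied gradients $\boldsymbol g_t = \nabla f_t(z_t) + \boldsymbol\epsilon_t$ carry an additive error, and the stated iterate regret must survive the memory-$h$ structure encoded by the coordinatewise-Lipschitz constant $L_{\mathrm c}$. First I would record the one-step inequality from non-expansiveness of $\Pi_{\mathcal K}$: since $z_{t+1}=\Pi_{\mathcal K}(z_t-\eta_t\boldsymbol g_t)$,
\[
\langle \boldsymbol g_t, z_t-z_\star\rangle \le \tfrac{1}{2\eta_t}\big(\|z_t-z_\star\|_2^2-\|z_{t+1}-z_\star\|_2^2\big)+\tfrac{\eta_t}{2}\|\boldsymbol g_t\|_2^2,
\]
and then split $\boldsymbol g_t=\nabla f_t(z_t)+\boldsymbol\epsilon_t$ to isolate the exact-gradient term $\langle\nabla f_t(z_t),z_t-z_\star\rangle$ and the noise term $\langle\boldsymbol\epsilon_t,z_t-z_\star\rangle$.

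The core step converts $\langle\nabla f_t(z_t),z_t-z_\star\rangle$ into the instantaneous regret $f_t(z_t)-f_t(z_\star)$ plus a \emph{negative} curvature term of order $-\tfrac{\alpha}{2}\|z_t-z_\star\|_2^2$. Because $f_t$ is only convex, I would decompose $f_t-f_t(z_\star)=\big(f_{t;k}-f_{t;k}(z_\star)\big)+\big((f_t-f_{t;k})-(f_t-f_{t;k})(z_\star)\big)$ and apply $\alpha$-strong convexity of $f_{t;k}$ to the first bracket. The subtlety is measurability: strong convexity delivers the $-\tfrac{\alpha}{2}\|\cdot\|^2$ curvature only against the $\mathcal F_{t-k}$-conditional gradient, whereas $z_t$ is merely $\mathcal F_{t-1}$-measurable. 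I would bridge the gap with the $\mathcal F_{t-k}$-measurable reference iterate $z_{t-k}$ (it depends on gradients only through time $t-k-1$), paying a movement cost $\|z_t-z_{t-k}\|_2\le\sum_{s=t-k}^{t-1}\eta_s L_{\mathbf g}=\OO(k\eta_t L_{\mathbf g})$ that is summable thanks to the decaying step size; this generates the $kL_{\mathbf g}$-type contributions. The residual terms $(f_t-f_{t;k})(z_{t-k})-(f_t-f_{t;k})(z_\star)$ form a martingale-difference sequence adapted to $(\mathcal F_t)$, whose partial sums I would control in high probability by Freedman's inequality: the $L_{\mathrm f}$-Lipschitz bound gives the per-term range and $\beta$-smoothness bounds the predictable quadratic variation, yielding the $\tfrac{kdL_{\mathrm f}^2+k\beta L_{\mathbf g}}{\alpha}\log T$ and $\tfrac{kL_{\mathrm f}^2}{\alpha}\log(1/\delta)$ terms.

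Next I would dispatch the noise term by Young's inequality, $-\langle\boldsymbol\epsilon_t,z_t-z_\star\rangle\le\tfrac{6}{\alpha}\|\boldsymbol\epsilon_t\|_2^2+\tfrac{\alpha}{24}\|z_t-z_\star\|_2^2$, so the $\|\boldsymbol\epsilon_t\|^2$ mass moves to the left-hand side as the stated $\tfrac{6}{\alpha}\sum\|\boldsymbol\epsilon_t\|_2^2$ bonus and the quadratic slack is reabsorbed by the negative curvature. Summing over $t$ and telescoping is where the $\eta_t=3/(\alpha t)$ schedule pays off: there $\tfrac{1}{2\eta_t}-\tfrac{1}{2\eta_{t-1}}=\tfrac{\alpha}{6}$, so after telescoping the coefficient multiplying $\|z_t-z_\star\|_2^2$ is at most $\tfrac{\alpha}{6}-\tfrac{\alpha}{2}+\tfrac{\alpha}{24}<-\tfrac{\alpha}{12}$, which is exactly why the curvature cancels the telescope and upgrades the naive $\OO(\log T)$ rate, leaving the $+\tfrac{\alpha}{12}\sum\|z_t-z_\star\|_2^2$ term the statement keeps explicit. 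The gradient-size terms $\tfrac{\eta_t}{2}\|\boldsymbol g_t\|_2^2\le\tfrac{3L_{\mathbf g}^2}{2\alpha t}$ sum to $\OO(\tfrac{L_{\mathbf g}^2}{\alpha}\log T)$, the frozen warm-up over $t\le t_0\le k$ produces the $\alpha kD^2$ term, and reducing the memory-$h$ loss $F_t$ to its unary diagonal $f_t$ across the $h+1$ slots (each slot displaced by $\OO(h\eta_t L_{\mathbf g})$ and charged at rate $L_{\mathrm c}$) yields the $h^2 L_{\mathrm c}L_{\mathbf g}/\alpha$ contribution.

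The hard part will be the coupling between martingale concentration and the self-bounding recursion: the predictable variance of the martingale differences is itself $\OO(\beta)$ times $\|z_t-z_\star\|_2^2$, so a crude Freedman estimate would reintroduce precisely the quantity we are trying to cancel. The resolution is to \emph{not} bound $\sum\|z_t-z_\star\|_2^2$ by $D^2$ but to keep it on the left-hand side (as the statement does), and to choose the Young and telescoping constants so that the net coefficient on $\sum\|z_t-z_\star\|_2^2$ remains at most $-\tfrac{\alpha}{12}$ even after absorbing the variance proxy. Uniformity over all comparators $z_\star\in\mathcal K$ then comes for free, since every telescoping and martingale step is independent of $z_\star$ except through the fixed diameter $D$ entering the $\alpha kD^2$ initialization term.
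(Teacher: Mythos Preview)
The paper does not prove this theorem; it is quoted verbatim as Theorem~8 of \citet{simchowitz2020improper} and used as a black box in the regret analysis. So there is no ``paper's own proof'' to compare against beyond the citation.

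Your sketch captures the right ingredients of the Simchowitz--Foster argument (projection one-step inequality, strong convexity of the conditional loss $f_{t;k}$, bridging $z_t$ to the $\mathcal F_{t-k}$-measurable $z_{t-k}$, Young on the $\boldsymbol\epsilon_t$ term, telescoping with $\eta_t=3/(\alpha t)$, Freedman on the centered fluctuations, and the memory-$h$ reduction producing $h^2 L_{\mathrm c}L_{\mathbf g}$). One genuine gap: uniformity over $z_\star\in\mathcal K$ does \emph{not} come for free. The martingale increments you isolate, $(f_t-f_{t;k})(z_{t-k})-(f_t-f_{t;k})(z_\star)$, depend explicitly on $z_\star$, so a Freedman bound for one comparator says nothing about another. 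The actual proof requires a covering of $\mathcal K\subset\mathbb R^d$ at an appropriate scale, a union bound over the net, and $L_{\mathrm f}$-Lipschitz extension to the rest of $\mathcal K$; this is precisely what generates the $kdL_{\mathrm f}^2/\alpha\cdot\log T$ term (the factor $d$ is the log-covering number) and the $\log(e+\alpha D^2)$ inside the confidence term. Without this step, the high-probability statement ``for all comparators simultaneously'' is unjustified.

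A secondary point: your explanation of where the $k\beta L_{\mathbf g}/\alpha$ term comes from is too vague. It does not arise from bounding the predictable quadratic variation of the martingale (that is controlled by $L_{\mathrm f}$ and $\|z_{t-k}-z_\star\|$), but from the second-order cost of replacing $\nabla f_t(z_t)$ by $\nabla f_t(z_{t-k})$ and $f_t(z_t)$ by $f_t(z_{t-k})$ when you pass to the $\mathcal F_{t-k}$-measurable reference iterate: $\beta$-smoothness controls $\|\nabla f_t(z_t)-\nabla f_t(z_{t-k})\|\le\beta\|z_t-z_{t-k}\|\lesssim\beta k\eta_t L_{\mathbf g}$, and summing over $t$ with $\eta_t=3/(\alpha t)$ gives the stated contribution.
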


\begin{lemma}[Regularized Design Matrix Lemma \citep{abbasi2011improved}]
When the covariates satisfy $\left\|z_{t}\right\| \leq c_{m},$ with some $c_{m}>0$ w.p.1 then
\[
\log \frac{\operatorname{det}\left(V_{t}\right)}{\operatorname{det}(\lambda I)} \leq d \log \left(\frac{\lambda d +t c_{m}^{2}}{\lambda d}\right)
\]
where $V_t = \lambda I + \sum_{i=1}^t z_i z_i^\top$ for $z_i \in \mathbb{R}^d$.
\end{lemma}

\begin{lemma}[Norm of a subgaussian vector \citep{abbasi2011regret}]\label{subgauss lemma}
Let $v\in \R^d$ be a entry-wise $R$-subgaussian random variable. Then with probability $1-\delta$, $\|v\| \leq R\sqrt{2d\log(2d/\delta)}$.
\end{lemma}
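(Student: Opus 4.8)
The plan is to reduce the vector-norm statement to a scalar subgaussian tail bound applied coordinatewise, followed by a union bound and a trivial $\ell_\infty$-to-$\ell_2$ conversion. By definition, an entry-wise $R$-subgaussian vector $v=(v_1,\dots,v_d)^\top\in\R^d$ is one whose every coordinate $v_i$ is a scalar $R$-subgaussian random variable, i.e.\ $\E[e^{\lambda v_i}]\le e^{\lambda^2 R^2/2}$ for all $\lambda\in\R$. Crucially, this is a marginal hypothesis on each coordinate and no joint independence is assumed, so the union bound is the right tool to combine the coordinates.

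First I would record the standard one-dimensional tail bound, which follows from the defining property by a Chernoff argument: for $R$-subgaussian $X$, $\Prob(X\ge t)\le \inf_{\lambda>0} e^{-\lambda t}\E[e^{\lambda X}]\le \inf_{\lambda>0} e^{\lambda^2 R^2/2-\lambda t}=e^{-t^2/(2R^2)}$, taking $\lambda=t/R^2$; applying the same to $-X$ and summing yields the two-sided bound $\Prob(|X|\ge t)\le 2e^{-t^2/(2R^2)}$. Next I would calibrate the threshold so that each coordinate fails with probability at most $\delta/d$: setting $2e^{-t^2/(2R^2)}=\delta/d$ and solving gives $t=R\sqrt{2\log(2d/\delta)}$. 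By a union bound over the $d$ coordinates, $\Prob\big(\exists\, i:\ |v_i|\ge t\big)\le d\cdot(\delta/d)=\delta$, so with probability at least $1-\delta$ every coordinate simultaneously satisfies $|v_i|\le R\sqrt{2\log(2d/\delta)}$.

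Finally, on this high-probability event I would pass from the coordinatewise control to the Euclidean norm by the crude but sufficient bound $\|v\|^2=\sum_{i=1}^d v_i^2\le d\,t^2=2dR^2\log(2d/\delta)$, so that $\|v\|\le R\sqrt{2d\log(2d/\delta)}$, which is exactly the stated claim. The argument is entirely routine, and the only point requiring care is the probability bookkeeping: allocating a $\delta/d$ budget per coordinate so the union bound closes at $\delta$, and absorbing the factor $2$ from the two-sided tail into the $\log(2d/\delta)$ term rather than letting it leak into the leading constant. Beyond invoking the scalar subgaussian tail — which is simply the defining moment generating function property — there is no substantive obstacle.
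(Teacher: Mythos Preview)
Your proof is correct and is the standard argument. The paper does not actually prove this lemma; it merely states it as a technical result cited from \citet{abbasi2011regret}, so there is no in-paper proof to compare against, but your Chernoff-plus-union-bound derivation is exactly the canonical one.
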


\begin{lemma}[Lemma 8.1 of \citet{simchowitz2020improper}] \label{Lemma8.1}
For any $\Mcontrol \in \mathcal{M}$, let $f_{t}^{\text {pred }}(\Mcontrol)$ denote the unary counterfactual loss function induced by true truncated counterfactuals (Definition 8.1 of \citet{simchowitz2020improper}). During the $i$'th epoch of adaptive control period, at any time step $t\in[t_i,~ \ldots,~t_{i+1}-1]$, for all $i$, we have that
\begin{equation*}
\left\|\nabla f_t\left(\Mcontrol,\wh \Markov_i,\nature_1(\wh \Markov_i),\ldots,\nature_t(\wh \Markov_i)\right)-\nabla f_{t}^{\text {pred }}(\Mcontrol)\right\|_{\mathrm{F}} \leq C_{\text {approx }} \epsilon_\Markov(i, \delta),
\end{equation*}
where $C_{\text {approx }} \coloneqq \sqrt{H'} \kappa_\Markov \kappa_{\Mcontrolset} \kappa_\nature^2 \left(16\smooth +24 L\right)$.
\end{lemma}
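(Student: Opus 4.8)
The plan is to follow the proof of Lemma 8.1 of \citet{simchowitz2020improper}, adapting only the bookkeeping so that the single fixed estimation error used there is replaced, on epoch $i$, by the per-epoch error $\epsilon_\Markov(i,\delta)$ guaranteed uniformly over $t\in[t_i,\ldots,t_{i+1}-1]$ by Theorem \ref{theo:id2normgeneral}. The starting point is to write both gradients explicitly via the chain rule. Since the two arguments of $\ell_t$ in (\ref{counterfactual}) --- the counterfactual output $\wh y_t(\Mcontrol)=\nature_t(\wh\Markov_i)+\sum_{j=1}^H\wh G_i^{[j]}\sum_{l=0}^{H'-1}M^{[l]}\nature_{t-j-l}(\wh\Markov_i)$ and the counterfactual input $\wh u_t(\Mcontrol)=\sum_{l=0}^{H'-1}M^{[l]}\nature_{t-l}(\wh\Markov_i)$ --- are affine in each block $M^{[l]}$, the block derivative has the bilinear form
\begin{equation*}
\nabla_{M^{[l]}} f_t = g_u\,\nature_{t-l}(\wh\Markov_i)^\top + \sum_{j=1}^H \big(\wh G_i^{[j]\top} g_y\big)\,\nature_{t-j-l}(\wh\Markov_i)^\top,
\end{equation*}
with $g_y=\nabla_y\ell_t(\wh y_t,\wh u_t)$ and $g_u=\nabla_u\ell_t(\wh y_t,\wh u_t)$, and identically for $\nabla_{M^{[l]}}f_t^{\mathrm{pred}}$ with $G^{[j]}$, the true $\nature_{\cdot}(\Markov)$, and gradients $g_y^{\mathrm{pred}},g_u^{\mathrm{pred}}$ evaluated at the true truncated counterfactuals.

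Next I would subtract the two expressions and split the difference along its three physical error sources: (i) the Markov-parameter error $\wh G_i^{[j]}-G^{[j]}$, controlled by $\sum_{j\geq1}\|\wh G_i^{[j]}-G^{[j]}\|\leq\epsilon_\Markov(i,\delta)$ from Theorem \ref{theo:id2normgeneral}; (ii) the Nature's-$y$ estimation error $\nature_s(\wh\Markov_i)-\nature_s(\Markov)=\sum_{j=1}^H(G^{[j]}-\wh G_i^{[j]})u_{s-j}+\sum_{j>H}G^{[j]}u_{s-j}$, which by Lemma \ref{lem:boundednature} together with the truncation bound $\psi_\Markov(H+1)\leq1/10T$ is itself $O(\epsilon_\Markov(i,\delta))$; and (iii) the gradient-evaluation mismatch $g_y-g_y^{\mathrm{pred}}$, $g_u-g_u^{\mathrm{pred}}$. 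For source (iii) I would invoke the $\smooth$-smoothness of $\ell_t$ to write $\|g_y-g_y^{\mathrm{pred}}\|+\|g_u-g_u^{\mathrm{pred}}\|\leq\smooth\big(\|\wh y_t-y_t^{\mathrm{pred}}\|+\|\wh u_t-u_t^{\mathrm{pred}}\|\big)$ and then bound the counterfactual deviations by propagating (i)--(ii) through the affine maps. For the terms in which a gradient appears undifferentiated, I would use the $L$-Lipschitz gradient bound $\|g_u\|,\|g_y\|,\|g_u^{\mathrm{pred}}\|,\|g_y^{\mathrm{pred}}\|\lesssim L\kappa_y$ from Assumption \ref{asm:lipschitzloss}, together with the uniform bounds $\|\nature_s(\wh\Markov_i)\|\leq2\kappa_\nature$, $\sum_j\|\wh G_i^{[j]}\|\leq\kappa_\Markov$ and $\sum_l\|M^{[l]}\|\leq\kappa_\Mcontrolset$ from Lemma \ref{lem:boundednature} and Assumption \ref{AssumContObs}.

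Finally I would aggregate the per-block bounds. Each error group contributes one factor of $\epsilon_\Markov(i,\delta)$, one factor of $\kappa_\nature$ from the undifferentiated $\nature_{t-j-l}$, one further factor of $\kappa_\nature$ implicit in the magnitude bound on the gradient/counterfactual, and at most one factor each of $\kappa_\Markov$ and $\kappa_\Mcontrolset$; summing the $H'$ blocks in Frobenius norm via Cauchy--Schwarz produces the $\sqrt{H'}$, and collecting the smoothness and Lipschitz pieces yields the coefficient $16\smooth+24L$. The main obstacle is precisely this constant accounting: the bilinear structure generates several cross terms with superficially different $\kappa$-powers, and the delicate point is to verify that source (ii), despite passing through the input bound $\kappa_u\approx\kappa_\Mcontrolset\kappa_\nature$, does not inflate the final exponents beyond $\kappa_\Markov\kappa_\Mcontrolset\kappa_\nature^2$, so that the estimate remains linear in $\epsilon_\Markov(i,\delta)$ with exactly the advertised $C_{\mathrm{approx}}$. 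This is where I would rely most directly on the matching computation in the proof of Lemma 8.1 of \citet{simchowitz2020improper}, substituting the per-epoch $\epsilon_\Markov(i,\delta)$ throughout rather than a fixed estimation error.
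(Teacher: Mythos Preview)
Your proposal is correct and aligns with the paper's treatment: the paper does not give an independent proof of this lemma but simply states it as Lemma 8.1 of \citet{simchowitz2020improper}, relying on that reference's argument with the sole modification that the fixed Markov-operator error there is replaced by the per-epoch error $\epsilon_\Markov(i,\delta)$ from Theorem~\ref{theo:id2normgeneral}. Your outline---chain rule, three-way error split into Markov-parameter error, Nature's-$y$ error, and gradient-evaluation mismatch, followed by aggregation via the $\kappa$-bounds---is exactly the structure of the cited proof, and the epochwise substitution you describe is precisely the adaptation the paper intends.
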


\begin{lemma}[Lemma 8.2 of \citet{simchowitz2020improper}] \label{lem:smooth}
For any $\Mcontrol \in \mathcal{M}$, $f_{t}^{\text {pred }}(\Mcontrol)$ is $\beta$-smooth, where $\beta = 16H'\kappa_\nature^2 \kappa_\Markov^2\smooth$.
\end{lemma}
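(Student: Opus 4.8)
The plan is to view $f_t^{\mathrm{pred}}$ as the composition of the smooth per-step loss $\ell_t$ with a map that is \textbf{affine} in the controller $\Mcontrol=\{M^{[l]}\}_{l=0}^{H'-1}$, and then use that composing a smooth function with an affine map inflates the Hessian only by the squared operator norm of the (constant) Jacobian. Concretely, recalling the counterfactual construction in (\ref{counterfactual}) with the true operator $\Markov$ and the true Nature's $y$'s substituted for their estimates, we have $f_t^{\mathrm{pred}}(\Mcontrol)=\ell_t\big(\Phi_t(\Mcontrol)\big)$, where $\Phi_t(\Mcontrol)=(\Phi_t^{\mathrm{out}}(\Mcontrol),\Phi_t^{\mathrm{in}}(\Mcontrol))$ has output block $\Phi_t^{\mathrm{out}}(\Mcontrol)=\nature_t(\Markov)+\sum_{j=1}^H G^{[j]}\sum_{l=0}^{H'-1}M^{[l]}\nature_{t-j-l}(\Markov)$ and input block $\Phi_t^{\mathrm{in}}(\Mcontrol)=\sum_{l=0}^{H'-1}M^{[l]}\nature_{t-l}(\Markov)$. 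Both blocks are linear in the entries of $\Mcontrol$ plus a $\Mcontrol$-independent constant, so $\Phi_t$ is affine with a constant Jacobian $J_t$.

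Next I would compute the Hessian. Since $\Phi_t$ is affine its second derivative vanishes, so by the chain rule $\nabla^2 f_t^{\mathrm{pred}}(\Mcontrol)=J_t^\top\,\nabla^2\ell_t\,J_t$, the usual first-order term $\sum_k(\partial_k\ell_t)\,\nabla^2(\Phi_t)_k$ dropping out. Using the standing smoothness bound $\nabla^2\ell_t\preceq\smooth I$, this immediately gives $\big\|\nabla^2 f_t^{\mathrm{pred}}(\Mcontrol)\big\|\le\smooth\,\|J_t\|^2$, where $\|J_t\|$ is the operator norm of $J_t$ from the Frobenius geometry on $\Mcontrol$ to the Euclidean geometry on $(\Phi_t^{\mathrm{out}},\Phi_t^{\mathrm{in}})$. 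The task thus reduces to bounding $\|J_t\|$, and crucially the result depends only on $\smooth$, not on the Lipschitz constant $L$, consistent with the advertised $\beta$.

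To bound $\|J_t\|$ I would take an arbitrary direction $\Delta=\{\Delta^{[l]}\}_{l=0}^{H'-1}$ with $\|\Delta\|_F=1$ and evaluate the induced perturbations of the two blocks, $\Delta^{\mathrm{in}}=\sum_{l}\Delta^{[l]}\nature_{t-l}(\Markov)$ and $\Delta^{\mathrm{out}}=\sum_{j=1}^H G^{[j]}\sum_{l}\Delta^{[l]}\nature_{t-j-l}(\Markov)$. Applying the triangle inequality, the uniform bound $\|\nature_\tau(\Markov)\|\le\kappa_\nature$ from Lemma~\ref{lem:boundednature}, the operator bound $\sum_{j\ge 1}\|G^{[j]}\|\le\kappa_\Markov$, and Cauchy--Schwarz across the $H'$ controller blocks $\sum_{l}\|\Delta^{[l]}\|\le\sqrt{H'}\,\|\Delta\|_F$, I obtain $\|\Delta^{\mathrm{in}}\|\le\kappa_\nature\sqrt{H'}$ and $\|\Delta^{\mathrm{out}}\|\le\kappa_\Markov\kappa_\nature\sqrt{H'}$. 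Hence $\|\Delta^{\mathrm{out}}\|^2+\|\Delta^{\mathrm{in}}\|^2\le(\kappa_\Markov^2+1)\kappa_\nature^2 H'$ for every unit direction, so $\|J_t\|^2\le 2\kappa_\Markov^2\kappa_\nature^2 H'$ using $\kappa_\Markov\ge 1$, and therefore $\|\nabla^2 f_t^{\mathrm{pred}}\|\le 2H'\kappa_\nature^2\kappa_\Markov^2\smooth$, comfortably within the claimed $\beta=16H'\kappa_\nature^2\kappa_\Markov^2\smooth$.

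The conceptual content — that smoothness passes through an affine reparametrization with the squared Jacobian norm — is routine; the only genuine bookkeeping is handling the nested double sum $\sum_j G^{[j]}\sum_l M^{[l]}\nature_{t-j-l}$ in the output block and applying Cauchy--Schwarz over the correct index ($l$, the controller blocks) rather than over $j$. I therefore expect the main, if minor, obstacle to be tracking these index ranges carefully and confirming that the advertised constant is an over-estimate that safely absorbs the cross term; the generous factor is presumably chosen for uniformity with the companion Lipschitz and gradient-approximation lemmas (\ref{gradientdif}).
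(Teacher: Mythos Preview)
Your argument is correct: viewing $f_t^{\mathrm{pred}}$ as the composition of the $\smooth$-smooth loss $\ell_t$ with a map affine in $\Mcontrol$, bounding the Jacobian via Cauchy--Schwarz together with the $\kappa_\nature,\kappa_\Markov$ bounds, and reading off $\|\nabla^2 f_t^{\mathrm{pred}}\|\le\smooth\|J_t\|^2$ is exactly the standard route. The paper does not supply its own proof here but simply cites Lemma~8.2 of \cite{simchowitz2020improper}, which proceeds along the same affine-composition-plus-Jacobian-bound lines; your proposal matches it (and in fact yields a tighter constant than the stated $\beta$, which is chosen loosely for uniformity with the companion lemmas).
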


\begin{lemma}[Lemma 8.3 of \citet{simchowitz2020improper}] \label{lem:strongcvx}
For any $\Mcontrol \in \mathcal{M}$, given $\epsilon_\Markov(i, \delta) \leq \frac{1}{4\kappa_\nature\kappa_\Mcontrolset\kappa_{\Markov}}\sqrt{\frac{\alpha}{H'\strong}}$, conditional unary counterfactual loss function induced by true counterfactuals are $\alpha/4$ strongly convex. 
\end{lemma}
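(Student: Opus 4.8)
The plan is to realize the Hessian of the conditional counterfactual loss as a strongly convex quadratic form in the controller parameters whose curvature is supplied by the freshest innovations, and then to absorb the Markov‑estimation error as a \emph{second‑order} perturbation. First I would note that, for a fixed realized trajectory of Nature's $y$, both the counterfactual output and the counterfactual input that enter $\ell_t$ are \emph{linear} in $\Mcontrol$. Writing $\mathbf{J}_t$ for the Jacobian of the pair (output, input) with respect to $\vect(\Mcontrol)$, the chain rule gives $\nabla^2 f_t(\Mcontrol) = \mathbf{J}_t^\top \nabla^2\ell_t\,\mathbf{J}_t \succeq \strong\,\mathbf{J}_t^\top \mathbf{J}_t$, using the $\strong$‑strong convexity of $\ell_t$ from Assumption \ref{asm:lipschitzloss}. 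Since the Hessian commutes with conditional expectation, it suffices to lower bound $\strong\,\E[\mathbf{J}_t^\top \mathbf{J}_t \mid \F_{t-k}]$.

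Next I would extract, for each index $l$, the freshest innovation inside the coefficient of $M^{[l]}$. The input block couples $M^{[l]}$ to $\nature_{t-l}(\Markov)$, and the decomposition $\nature_{t-l}(\Markov) = z_{t-l} + C w_{t-l-1} + (\text{terms measurable w.r.t. older noise})$ exposes the fresh measurement noise $z_{t-l}$ (conditional covariance $\sigma_z^2 I$) and the process noise $w_{t-l-1}$ seen through $C$. As long as the conditioning window $k$ exceeds the controller length $H'$, the variables $z_{t-l},w_{t-l-1}$ for $0\le l< H'$ are all unknown given $\F_{t-k}$, mutually independent, and independent of the older $\nature$ terms appearing in the output block, so each excites its own $M^{[l]}$ block. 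Their conditional second moments therefore contribute a positive‑definite piece to $\E[\mathbf{J}_t^\top \mathbf{J}_t \mid \F_{t-k}]$; a direct computation, tracking how $w_{t-l-1}$ propagates through the truncated dynamics, bounds it below by $\sigma_z^2 + \sigma_w^2(\sigma_{\min}(C)/(1+\|A\|^2))^2$ times the identity in the relevant directions. By the very definition $\alpha \le \strong\big(\sigma_z^2 + \sigma_w^2(\sigma_{\min}(C)/(1+\|A\|^2))^2\big)$, this yields $\strong\,\E[\mathbf{J}_t^\top \mathbf{J}_t \mid \F_{t-k}] \succeq \tfrac{\alpha}{2} I$ for the idealized loss built from the true Nature's $y$.

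Finally I would pass to the loss actually driving the updates, whose Jacobian $\mathbf{J}_t + \Delta_t$ is built from $\nature_\cdot(\wh\Markov_i)$ and $\wh G_i^{[j]}$ rather than $\nature_\cdot(\Markov)$ and $G^{[j]}$. Using the boundedness of Nature's $y$ (Lemma \ref{lem:boundednature}) together with $\sum_j\|\wh G_i^{[j]}-G^{[j]}\|\le \epsilon_\Markov$, each of the $H'$ blocks shifts by $O(\kappa_\nature\kappa_\Markov\kappa_\Mcontrolset\,\epsilon_\Markov)$, so $\|\Delta_t\|\lesssim \kappa_\nature\kappa_\Markov\kappa_\Mcontrolset\sqrt{H'}\,\epsilon_\Markov$. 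The crucial point is that $\Delta_t$ is measurable with respect to the past noise, hence independent of and uncorrelated with the zero‑mean fresh innovations $z_{t-l},w_{t-l-1}$; consequently the first‑order cross term $\E[\Delta_t^\top \mathbf{J}_t + \mathbf{J}_t^\top\Delta_t \mid \F_{t-k}]$ vanishes on the fresh block, leaving only the quadratic correction $\strong\,\E[\Delta_t^\top\Delta_t\mid\F_{t-k}] \lesssim \strong\kappa_\nature^2\kappa_\Markov^2\kappa_\Mcontrolset^2 H'\,\epsilon_\Markov^2$. Imposing $\epsilon_\Markov \le \tfrac{1}{4\kappa_\nature\kappa_\Mcontrolset\kappa_\Markov}\sqrt{\alpha/(H'\strong)}$ forces this correction below $\tfrac{\alpha}{4}$ in operator norm, so the conditional Hessian remains $\succeq \tfrac{\alpha}{2}I - \tfrac{\alpha}{4}I = \tfrac{\alpha}{4}I$, which is the asserted $\alpha/4$ strong convexity.

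The main obstacle is the middle step: one must cleanly separate the fresh part of each $\nature_{t-l}(\Markov)$ from its $\F_{t-k}$‑measurable part, verify that these fresh terms excite \emph{every} coordinate block $M^{[l]}$ rather than only $M^{[0]}$, and push the process‑noise contribution through the truncated dynamics to recover exactly the effective curvature $\sigma_z^2 + \sigma_w^2(\sigma_{\min}(C)/(1+\|A\|^2))^2$ that defines $\alpha$. The vanishing of the first‑order cross term is what makes the admissible error scale like $\sqrt{\alpha/(H'\strong)}$ rather than linearly, and is the reason the stated $\epsilon_\Markov$ threshold appears; the remaining perturbation bookkeeping is routine given Lemma \ref{lem:boundednature} and the estimation guarantee of Theorem \ref{theo:id2normgeneral}.
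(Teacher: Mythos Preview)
The paper does not supply its own proof of this lemma; it is quoted verbatim as Lemma~8.3 of \citet{simchowitz2020improper}, so the comparison is to that reference's argument. Your overall architecture matches theirs: write the Hessian as $\strong\,\mathbf{J}_t^\top\mathbf{J}_t$ via linearity of the counterfactual input/output in $\Mcontrol$, lower bound the conditional second moment of the true Jacobian by the fresh--noise covariance (this is exactly where the effective curvature $\sigma_z^2+\sigma_w^2(\sigma_{\min}(C)/(1+\|A\|^2))^2$ that defines $\alpha$ enters), and then treat the estimated quantities as a perturbation of the Jacobian.

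The gap is in your third step. You assert that $\Delta_t$ is ``measurable with respect to the past noise'' and therefore uncorrelated with the fresh innovations, so that $\E[\Delta_t^\top\mathbf{J}_t+\mathbf{J}_t^\top\Delta_t\mid\F_{t-k}]$ vanishes on the fresh block. This is not correct. The $l$-th block of the perturbation is $\nature_{t-l}(\wh\Markov_i)-\nature_{t-l}(\Markov)=\sum_{j\ge 1}(G^{[j]}-\wh G_i^{[j]})\,u_{t-l-j}$, and the algorithm's past inputs $u_{t-l-j}$ depend---through the estimated $\nature$'s they were built from---on every observation up to time $t-l-j$, hence on $z_{t-l'}$ for all $l'>l$. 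Consequently $\Delta_t$ is only $\F_{t-1}$--measurable, not $\F_{t-k}$--measurable, and the off--diagonal blocks of the cross term do not cancel; a direct bound on them is first order in $\epsilon_\Markov$, which would not reproduce the stated $\sqrt{\alpha/(H'\strong)}$ threshold.

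The device that actually delivers the quadratic dependence (and is the one used in \citet{simchowitz2020improper}) is the elementary pointwise PSD inequality
\[
(\mathbf{J}_t+\Delta_t)^\top(\mathbf{J}_t+\Delta_t)\ \succeq\ \tfrac{1}{2}\,\mathbf{J}_t^\top\mathbf{J}_t-\Delta_t^\top\Delta_t,
\]
which follows from $\tfrac{1}{2}aa^\top+ab^\top+ba^\top+2bb^\top\succeq 0$. Taking conditional expectations, invoking your step~2 conclusion $\strong\,\E[\mathbf{J}_t^\top\mathbf{J}_t\mid\F_{t-k}]\succeq\alpha I$, and using the almost--sure bound $\|\Delta_t\|^2\lesssim H'\kappa_\nature^2\kappa_\Mcontrolset^2\kappa_\Markov^2\,\epsilon_\Markov^2$ from Lemma~\ref{lem:boundednature} gives
\[
\strong\,\E\!\big[(\mathbf{J}_t+\Delta_t)^\top(\mathbf{J}_t+\Delta_t)\mid\F_{t-k}\big]\ \succeq\ \tfrac{\alpha}{2}I-\strong\,\|\Delta_t\|^2 I\ \succeq\ \tfrac{\alpha}{4}I
\]
under the hypothesis $\epsilon_\Markov\le\tfrac{1}{4\kappa_\nature\kappa_\Mcontrolset\kappa_\Markov}\sqrt{\alpha/(H'\strong)}$. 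No independence between $\Delta_t$ and the fresh noise is needed, and the factor $1/2$ lost to the inequality is exactly what turns your step~2 bound of $\alpha/2$ into the claimed $\alpha/4$.
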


\begin{lemma}[Lemma 8.4 of \citet{simchowitz2020improper}] \label{lem:lipschitz}
Let $L_f = 4L\sqrt{H'}\kappa_\nature^2\kappa_\Markov^2\kappa_\Mcontrolset$. For any $\Mcontrol \in \mathcal{M}$ and for $\Tburn \geq \Tmax$, $f_{t}^{\text{pred}}(\Mcontrol)$ is $4 L_{f}$-Lipschitz, $f_{t}^{\text{pred}}\left[\Mcontrol_{t:t-H}\right]$ is $4 L_{f}$ coordinate Lipschitz. Moreover,
$\max_{\Mcontrol \in \mathcal{M}}\left\| \nabla f_t\left(\Mcontrol,\wh \Markov_i,\nature_1(\wh \Markov_i),\ldots,\nature_t(\wh \Markov_i)\right) \right\|_{2} \leq 4 L_{f}$.
\end{lemma}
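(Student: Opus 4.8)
The plan is to prove all three claims by a single mechanism: each of the functions in the statement is the composition of the fixed $\smooth$-smooth, $\strong$-strongly-convex loss $\ell_t$ with an \emph{affine} map from DFC parameters to the pair (counterfactual output, counterfactual input). Hence Lipschitz constants, coordinate-Lipschitz constants, and gradient-norm bounds all follow from two ingredients: (i) a uniform bound $R$ on the norms of the two arguments fed into $\ell_t$, which by Assumption~\ref{asm:lipschitzloss} makes $\ell_t$ an $LR$-Lipschitz function on the relevant domain, and (ii) a bound on the operator/Frobenius norm of that affine map, obtained by differentiating term by term.

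First I would bound the arguments. For $f_t^{\mathrm{pred}}(\Mcontrol)$ the output argument is $\nature_t(\Markov)+\sum_{j=1}^H G^{[j]}\sum_{l=0}^{H'-1}M^{[l]}\nature_{t-j-l}(\Markov)$ and the input argument is $\sum_{l=0}^{H'-1}M^{[l]}\nature_{t-l}(\Markov)$; using $\|\nature_\tau(\Markov)\|\le\kappa_\nature$ from Lemma~\ref{lem:boundednature}, $\sum_l\|M^{[l]}\|\le\kappa_\Mcontrolset$ since $\Mcontrol\in\Mcontrolset(H',\kappa_\Mcontrolset)$, and $\sum_j\|G^{[j]}\|\le\kappa_\Markov$ from Assumption~\ref{AssumContObs}, both arguments have norm bounded by $R\lesssim\kappa_\Markov\kappa_\Mcontrolset\kappa_\nature$, so $\ell_t$ is $LR\lesssim L\kappa_\Markov\kappa_\Mcontrolset\kappa_\nature$-Lipschitz there. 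For the algorithm's loss $f_t(\Mcontrol,\wh\Markov_i,\nature_1(\wh\Markov_i),\ldots,\nature_t(\wh\Markov_i))$ I would instead invoke the Nature's-$y$-estimates bound $\|\nature_\tau(\wh\Markov_i)\|\le 2\kappa_\nature$ of Lemma~\ref{lem:boundednature} together with $\sum_j\|\wh G_i^{[j]}\|\le\kappa_\Markov+\epsilon_\Markov(i,\delta)\le 2\kappa_\Markov$ (valid because $\Tburn\ge\Tmax$ keeps $\epsilon_\Markov$ small), which only inflates $R$ by a constant.

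Next, differentiating: since both arguments are linear in $\Mcontrol=(M^{[0]},\dots,M^{[H'-1]})$, the Jacobian of $\Mcontrol\mapsto(\text{output},\text{input})$ acts block-wise, with the $M^{[l]}$-block of the output part equal to $\sum_j G^{[j]}\nature_{t-j-l}$ (norm $\le\kappa_\Markov\kappa_\nature$) and of the input part equal to $\nature_{t-l}$ (norm $\le\kappa_\nature$); summing the $H'$ blocks in Frobenius norm gives a Jacobian bound $\lesssim\sqrt{H'}\,\kappa_\Markov\kappa_\nature$. The chain rule $\nabla_\Mcontrol f_t=J^\top\nabla\ell_t$ then yields $\|\nabla_\Mcontrol f_t\|\le LR\,\|J\|_F\lesssim L\sqrt{H'}\kappa_\Markov^2\kappa_\Mcontrolset\kappa_\nature^2$; tracking numerical constants carefully gives the stated $4L_f$, both for the Lipschitz constant of $f_t^{\mathrm{pred}}$ and for the gradient-norm bound on $f_t(\Mcontrol,\wh\Markov_i,\dots)$ (with the extra factors from the $2\kappa_\nature,2\kappa_\Markov$ inflation absorbed in the slack). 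For the coordinate-Lipschitz claim on $F_t^{\mathrm{pred}}[\Mcontrol_{t:t-H}]$ I would repeat the argument holding all but one of the $H+1$ lagged copies fixed: the copy $\Mcontrol_t$ enters only the input (block-Jacobian norm $\le\kappa_\nature$), and the copy $\Mcontrol_{t-j}$ with $1\le j\le H$ enters only the output through $G^{[j]}$ (block-Jacobian norm $\le\|G^{[j]}\|\kappa_\nature\le\kappa_\Markov\kappa_\nature$ since $G^{[0]}=0$), so each single-argument Lipschitz constant is again $\lesssim L\sqrt{H'}\kappa_\Markov^2\kappa_\Mcontrolset\kappa_\nature^2\le 4L_f$.

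The main obstacle I anticipate is purely bookkeeping: the three bounds must all land at the \emph{same} constant $4L_f$, so one must be careful about (a) the inflation incurred when the "Moreover" part replaces true Nature's $y$ and true Markov parameters by their estimates, and (b) correctly matching each lagged DFC copy in $F_t^{\mathrm{pred}}$ with the Markov coefficient that multiplies it, so that the coordinate-wise argument does not accidentally pick up a spurious $\sqrt{H}$ or $H'$ factor. No genuinely new idea beyond affineness plus Lipschitz composition is needed; this is essentially a transcription of Lemma~8.4 of \citet{simchowitz2020improper} with Lemma~\ref{lem:boundednature} supplying the uniform bounds that their known-system analysis took for granted.
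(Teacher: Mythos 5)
Your argument is correct and is essentially the standard composition-of-Lipschitz-maps proof: since the counterfactual output and input are affine in the DFC parameters, Assumption~\ref{asm:lipschitzloss} together with the uniform bounds of Lemma~\ref{lem:boundednature} (and $\sum_j\|\wh G_i^{[j]}\|\leq\kappa_\Markov+\epsilon_\Markov(i,\delta)\leq 2\kappa_\Markov$ for $\Tburn\geq\Tmax$) gives all three bounds, with the factor-of-$16$ inflation from the estimated quantities exactly absorbed by the $4L_f$ slack. Note that the paper itself does not prove this lemma --- it is imported verbatim by citation from \citet{simchowitz2020improper} in the technical-lemmas appendix --- so your write-up is a faithful reconstruction of that external proof rather than an alternative to anything in this paper; the only cosmetic slip is the missing $\sqrt{H'}$ on the intermediate block-Jacobian bound in the coordinate-Lipschitz step, which your final constant nevertheless includes.
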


\end{document}